\def\eqref#1{equation~\ref{#1}}
\def\1{\bm{1}}
\DeclareMathAlphabet{\mathsfit}{\encodingdefault}{\sfdefault}{m}{sl}
\SetMathAlphabet{\mathsfit}{bold}{\encodingdefault}{\sfdefault}{bx}{n}
\newcommand{\pdata}{p_{\rm{data}}}
\newcommand{\R}{\mathbb{R}}
\DeclareMathOperator*{\argmin}{arg\,min}
\title{SwinGNN: Rethinking Permutation Invariance in Diffusion Models for Graph Generation}
\author{\name Qi Yan \email qi.yan@ece.ubc.ca \\
      University of British Columbia\\
      Vector Institute for AI
      \AND
      \name Zhengyang Liang \email marxas@tongji.edu.cn \\
      \addr Tongji University
      \AND
      \name Yang Song\thanks{Work done at Stanford.} \email songyang@openai.com\\
      \addr OpenAI
      \AND
      \name Renjie Liao \email rjliao@ece.ubc.ca \\
      University of British Columbia\\
      Vector Institute for AI \\
      Canada CIFAR AI Chair
      \AND
      \name Lele Wang \email lelewang@ece.ubc.ca \\
      University of British Columbia
}
\newcommand{\cmark}{\ding{51}}
\newcommand{\xmark}{\ding{55}}
\Crefname{equation}{Eq.}{Eqs.}
\Crefname{figure}{Fig.}{Figs.}
\Crefname{table}{Tab.}{Tabs.}
\Crefname{tabular}{Tab.}{Tabs.}
\Crefname{section}{Sec.}{Secs.}
\Crefname{appendix}{App.}{Apps.}
\theoremstyle{plain}
\newtheorem{theorem}{Theorem}[section]
\newtheorem{lemma}[theorem]{Lemma}
\theoremstyle{definition}
\theoremstyle{remark}
\newcommand{\psigma}{p_{\sigma}}
\newcommand{\bA}{\boldsymbol{A}}
\newcommand{\bhA}{\boldsymbol{\hat{A}}}
\newcommand{\bwhA}{\boldsymbol{\widehat{A}}}
\newcommand{\btA}{\boldsymbol{\tilde{A}}}
\newcommand{\bwtA}{\boldsymbol{\widetilde{A}}}
\newcommand{\btB}{\boldsymbol{\tilde{B}}}
\newcommand{\beps}{\boldsymbol{\epsilon}}
\newcommand{\bI}{\boldsymbol{I}}
\newcommand{\bP}{\boldsymbol{P}}
\newcommand{\bX}{\boldsymbol{X}}
\newcommand{\bW}{\boldsymbol{W}}
\newcommand{\cA}{\mathcal{A}}
\newcommand{\cE}{\mathcal{E}}
\newcommand{\cG}{\mathcal{G}}
\newcommand{\cI}{\mathcal{I}}
\newcommand{\cP}{\mathcal{P}}
\newcommand{\cS}{\mathcal{S}}
\newcommand{\cV}{\mathcal{V}}
\newcommand{\nbyn}{n\times n}
\DeclareRobustCommand\onedot{\futurelet\@let@token\@onedot}
\def\@onedot{\ifx\@let@token.\else.\null\fi\xspace}
\def\iid{\emph{i.i.d}\onedot} 
\def\eg{\emph{e.g}\onedot} 
\def\ie{\emph{i.e}\onedot} 
\def\cf{\emph{c.f}\onedot} 
\def\wrt{w.r.t\onedot} 
\def\aka{\emph{a.k.a}\onedot}
\begin{document}

\maketitle

\begin{abstract}
Permutation-invariant diffusion models of graphs achieve the invariant sampling and invariant loss functions by restricting architecture designs, which often sacrifice empirical performances.
In this work, we first show that the performance degradation may also be contributed by the increasing modes of target distributions brought by invariant architectures since 1) the optimal one-step denoising scores are score functions of Gaussian mixtures models (GMMs) whose components center on these modes and 2) learning the scores of GMMs with more components is often harder.
Motivated by the analysis, we propose \emph{SwinGNN} along with a simple yet provable trick that enables permutation-invariant sampling.
It benefits from more flexible (non-invariant) architecture designs and permutation-invariant sampling.
We further design an efficient 2-WL message passing network using the shifted-window self-attention.
Extensive experiments on synthetic and real-world protein and molecule datasets show that SwinGNN outperforms existing methods by a substantial margin on most metrics.
Our code is released at \url{https://github.com/qiyan98/SwinGNN}.
\end{abstract}

\section{INTRODUCTION}
Diffusion models~\citep{sohl-dickstein2015deep,ho2020denoising,song2021denoising} have recently emerged as a powerful class of deep generative models.
They can generate high-dimensional data, \eg, images and videos \citep{dhariwal2021diffusion,ho2022imagen}, at unprecedented high qualities.
While originally designed for continuous data, diffusion models have inspired new models for graph generation that involves discrete data and has a wide range of applications, including molecule generation~\citep{jin2018junction}, code completion~\citep{brockschmidt2018generative}, urban planning~\citep{chu2019neural}, scene graph generation~\citep{suhail2021energy}, and neural architecture search~\citep{li2022graphpnas}.

There exist two ways to generalize diffusion models to graphs.
The first one is simply treating adjacency matrices as images and applying existing techniques built for continuous data. 
The only additional challenge compared to image generation is that a desirable probability distribution over graphs should be invariant to the permutation of nodes.
\citet{niu2020permutation,jo2022scorebased} construct permutation equivariant score-based models that induce permutation invariant distributions.
Post-hoc thresholding is needed to convert sampled continuous adjacency matrices to binary ones.
The other one relies on the recently proposed discrete diffusion models \citep{austin2021structured} that naturally operate on binary adjacency matrices with discrete transitions.
\citet{vignac2022digress} shows that such models learn permutation invariant graph distributions by construction and achieve impressive results on molecule generation.

In this paper, we first clarify two key concepts of permutation invariance: invariant training losses and invariant sampling.
For practical applications, graph generative models should ideally have an invariant sampling process, ensuring generated graphs that are isomorphic have equal likelihoods regardless of node orders.
Previous works achieve this goal by using invariant diffusion models trained with invariant losses.
However, such invariant models often achieve worse empirical performances than their non-invariant counterparts. 
This is likely due to 1) invariant models have more restrictive architecture designs and 2) their target distributions having more modes.
Specifically, since the optimal one-step denoising scores are score functions of Gaussian mixtures models (GMMs) whose components center on these modes, learning the scores of GMMs with more components is often harder.
Importantly, while an invariant loss is sufficient, it is not necessary for invariant sampling. We present a simple technique of randomly permuting generated graphs that provably enables any graph generative model to achieve permutation-invariant sampling.

Motivated by the analysis, we propose a non-invariant diffusion model, called SwinGNN, to embrace more powerful architecture designs while still maintaining permutation-invariant sampling.
Inspired by the expressive 2-WL graph neural networks (GNNs)~\citep{morris2021weisfeiler} and SwinTransformers ~\citep{liu2021Swin}, our SwinGNN performs efficient edge-to-edge 2-WL message passing via shifted window based self-attention mechanism and customized graph downsampling/upsampling operators, thus being scalable to generate large graphs (\eg, over 500 nodes).
Note that directly applying 2-WL GNNs would have significantly higher computational costs whereas directly applying SwinTransformers would lead to worse performances since they are essentially 1-WL GNNs.
Our model aligns with the rationale of $k$-order GNN~\citep{maron2019invariant} or $k$-WL GNN~\citep{morris2021weisfeilera}, enhancing network expressivity by utilizing $k$-tuples of nodes ($k=2$ in our case). 
Further, we thoroughly investigate the recent advances of diffusion models for image~\citep{karras2022elucidating, song2020improved} and identify several techniques that significantly improve the sample quality of graph generation.
Extensive experiments on synthetic and real-world molecule and protein datasets show that our SwinGNN achieves state-of-the-art performances, surpassing the existing models by several orders of magnitude in most metrics.

\section{RELATED WORK}
Generative models of graphs (\aka, random graph models) have been studied in mathematics, network science, and other subjects for decades since the seminal Erdős–Rényi model \citep{erdos59a}.
Most of these models, \eg, Watts–Strogatz model \citep{watts1998collective} and Barabási–Albert model \citep{albert2002statistical}, are mathematically tractable, \ie, one can rigorously analyze their properties such as degree distributions.
In the context of deep learning, deep generative models for graphs~\citep{liao2022graph} prioritize fitting complex distributions of real-world graphs over obtaining tractable properties and have achieved impressive performances. They can be broadly classified based on the generative modeling techniques.

The first class of methods relies on diffusion models \citep{sohl-dickstein2015deep,ho2020denoising} that achieve great successes in image and video generation.
We focus on continuous diffusion graph generative models which treat the adjacency matrices as images.
\citep{niu2020permutation} proposes a permutation-invariant score-matching objective along with a GNN architecture for generating binary adjacency matrices by thresholding continuous matrices. 
\citet{jo2022scorebased} extends this approach to handle node and edge attributes via stochastic differential equation framework~\citep{song2021scorebased}.
Following this line of research, we investigate the limiting factors of these models and propose our improvements that achieve state-of-the-art performances.

Besides continuous diffusion, discrete diffusion based graph generative models also emerged recently.
\citet{vignac2022digress} proposes a permutation-invariant model based on discrete diffusion models \citep{austin2021structured,hoogeboom2021argmax,johnson2021beyond}.
For the discrete graph data, \eg, binary adjacency matrices or categorical node and edge attributes, discrete diffusion is more intuitive than the continuous counterpart.

Apart from the above diffusion based models, there also exist models based on generative adversarial networks (GANs)~\citep{decao2018molgan,krawczuk2020gg,martinkus2022spectre}, variational auto-encoders (VAEs)~\citep{kipf2016variational,simonovsky2018graphvae,jin2018junction,vignac2021top}, normalizing flows~\citep{liu2019graph,madhawa2019graphnvp,lippe2020categorical,luo2021graphdf},
and autoregressive models~ \citep{you2018graphrnn,liao2020efficient,mercado2021graph}.
Among them, autoregressive based models enjoy the best empirical performance, although they are not invariant to permutations.  

Recently, \citet{han2023fitting} propose modeling the graph distribution by summing over the isomorphism class of adjacency matrices with autoregressive models, where a graph $\cG$ with an adjacency matrix $\bA$ of $n$ nodes admits a probability of 
\begin{equation}
    p(\cG) = \sum_{\bA_i \in \cI_{\bA}} p(\bA_i),
\end{equation}
where $\cI_{\bA}$ is the isomorphism class of $\bA$ with a size up to $O(n!)$.

\begin{figure*}[t]
    \centering
    \includegraphics[width=0.95\textwidth]{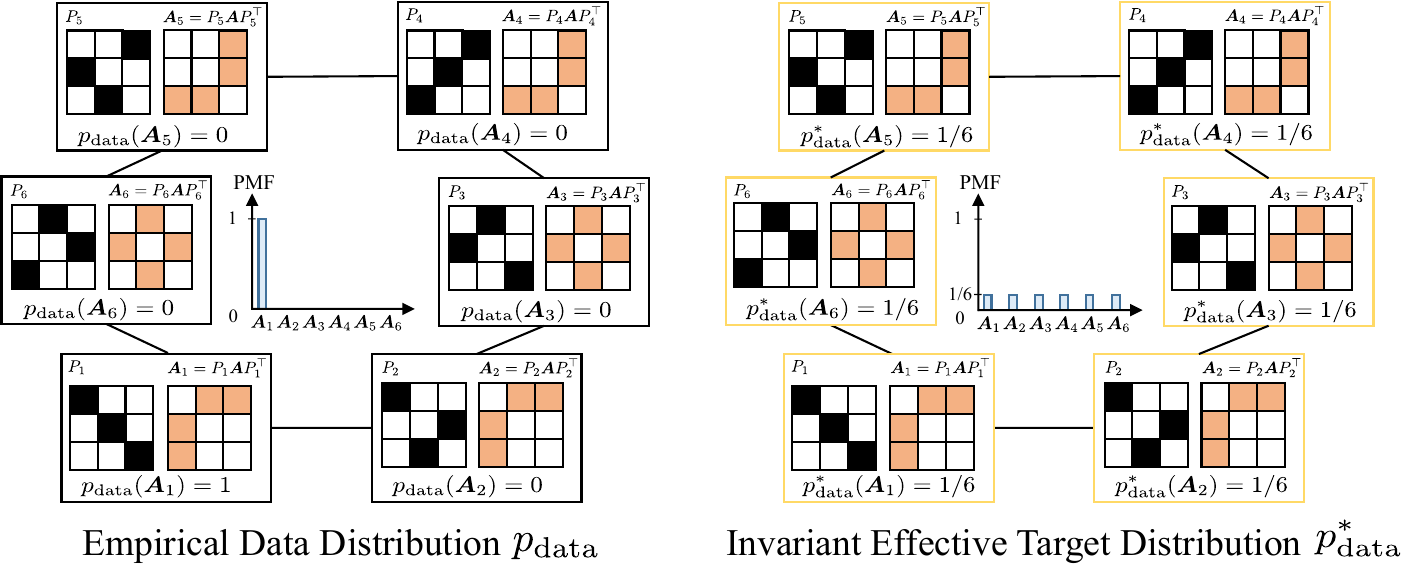}
    \caption{Data distribution and target distribution for a 3-node tree graph.
    For permutation matrix \(\bP_i\) and adjacency matrix \(\bA_{i}\), filled/blank cells mean one/zero. The probability mass function (PMF) highlights the difference in modes. Our example also shows graph automorphism (\eg, $\bP_1$ and $\bP_2$).
    }
    \label{fig:distribution}
\end{figure*}

\section{PRELIMINARIES}
\label{sect:background}
\noindent\textbf{Notation.}
A graph $\cG = (\cV, \cE)$ comprises a node set $\cV$ with $n$ nodes $\cV = \{1,2,\dots, n\}$ and an edge set $\cE \subseteq \cV \times \cV$.
We focus on simple graphs, \ie, unweighted and undirected graphs without self-loops or multiple edges\footnote{Weighted graphs (\ie, real-valued adjacency matrices) are easier to handle for continuous diffusion models since the thresholding step is unnecessary. Our model also extends to multigraphs with node and edge attributes (\eg, molecules), detailed further in the experiment section and~\cref{app:node_edge_attr}.}.
Such a graph can be represented by an adjacency matrix $\bA^\pi \in \{0,1\}^{\nbyn}$, where node ordering $\pi$ implies the matrix's row/column order.
A permutation matrix $\bP_{\pi_1 \rightarrow \pi_2}$ denotes a bijection between two node orderings $\pi_1$, $\pi_2$, \ie, $\bA^{\pi_2} = \bP_{\pi_1 \rightarrow \pi_2} \bA^{\pi_1} \bP_{\pi_1 \rightarrow \pi_2}^{\top}$.
We omit the node ordering notations for simplicity.
Let $\cS_n$ be all $n!$ permutation matrices of $n$ nodes.
Any $\bP \in \cS_n$ that maps $\bA$ to itself, \ie, $\bA = \bP \bA \bP^{\top}$ denotes a \emph{graph automorphism} of $\bA$.
Two graphs $\cG_1, \cG_2$ with adjacency matrices $\bA_1, \bA_2$ are \emph{isomorphic} if and only if there exists $\bP \in \cS_n$ such that $\bP \bA_1 \bP^{\top} = \bA_2$.
The \emph{isomorphism class} of $\bA$, denoted by $\cI_{\bA} \coloneqq \{\bP \bA \bP^{\top} \vert \bP \in \cS_n\}$, is the set of adjacency matrices isomorphic to $\bA$. 
We observe \iid samples, $\cA \coloneqq \{\bA_i\}_{i=1}^m \sim \pdata(\bA)$, drawn from the unknown data distribution of graphs $\pdata(\bA)$.
Graph generative models aim to learn a distribution $p_\theta(\bA)$ that closely approximates $\pdata(\bA)$.
\noindent\textbf{Denoising Diffusion Models.}
A denoising diffusion model consists of two parts: (1) a forward continuous-state Markov chain that gradually adds noise to observed data until it becomes standard normal noise and (2) a backward continuous-state Markov chain (learnable) that gradually denoises from the standard normal noise until it becomes observed data.
The transition probability of the backward chain is typically parameterized by a deep neural network.
Two consecutive transitions in the forward (backward) chain correspond to two noise levels that are increasing (decreasing). 
One can have discrete-time (finite noise levels)~\citep{ho2020denoising} or continuous-time (infinite noise levels)~\citep{song2021scorebased} denoising diffusion models.

In the context of graphs, if we treat an adjacency matrix $\bA$ as an image, it is straightforward to apply these models.
In particular, considering one noise level $\sigma$, the loss of denoising diffusion models is
\begin{equation}    
    \mathbb{E}_{\pdata(\bA)\psigma(\btA | \bA)}
    \left[ \Vert D_\theta(\btA, \sigma) - \bA \Vert^2_F \right],
    \label{eq:obj_denoiser}
\end{equation}
where $D_\theta(\btA, \sigma)$ is the denoising network, $\btA$ is the noisy data (graph), and 
$\Vert \cdot \Vert_F$ is the Frobenius norm.
The forward transition probability is a Gaussian distribution $\psigma(\btA | \bA) \coloneqq \mathcal{N}(\btA; \bA, \sigma^2\bI)$.
Note that we add \iid element-wise Gaussian noise to the adjacency matrix.
Based on the Tweedie's formula~\citep{efron2011tweedie}, one can derive the optimal denoiser $D_{\theta^*}(\btA, \sigma) = \btA + \sigma^2 \nabla_{\btA}\log \psigma(\btA)$, where the \emph{noisy data} distribution $\psigma(\btA) \coloneqq \int\pdata(\bA) \psigma(\btA | \bA) d\bA$ appears.

\noindent\textbf{Score-based Models.}
Score-based models aim to learn the score function (the gradient of the log density \wrt data) of the data distribution $\pdata(\bA)$, denoted by $s(\bA) \coloneqq \nabla_{\bA}\log \pdata(\bA)$.
Since $\pdata(\bA)$ is unknown, one needs to leverage techniques such as denoising score matching (DSM)~\citep{vincent2011connection} to train a score estimation network $s_\theta$.
Similar to diffusion models, we add \iid element-wise Gaussian noise to data, \ie,
$\psigma(\btA | \bA) = \mathcal{N}(\btA; \bA, \sigma^2\bI)$.
For a single noise level $\sigma$, the DSM loss is
\begin{equation}
    \mathbb{E}_{\pdata(\bA)\psigma(\btA | \bA)}
    \left[ \Vert s_\theta(\btA, \sigma) - \nabla_{\btA} \log \psigma(\btA | \bA) \Vert^2_F \right],
    \label{eq:obj_dsm}
\end{equation}
where $\nabla_{\btA} \log \psigma(\btA | \bA) = \frac{\bA - \btA}{\sigma^2}$.
Minimizing~\cref{eq:obj_dsm} almost surely 
leads to an optimal score network $s_{\theta^*}(\btA, \sigma)$ that matches the score of the noisy data distribution, \ie, $s_{\theta^*}(\btA, \sigma) = \nabla_{\btA}\log \psigma(\btA)$~\citep{vincent2011connection}. 
Further, the denoising diffusion models and score-based models are essentially the same \citep{song2021scorebased}.
The optimal denoiser of~\cref{eq:obj_denoiser} and the optimal score estimator of~\cref{eq:obj_dsm} are inherently connected by $D_{\theta^*}(\btA, \sigma) =  \btA + \sigma^2 s_{\theta^*}(\btA, \sigma)$.
We use both terms interchangeably in what follows.

\noindent\textbf{DSM Estimates the Score of GMMs.}
Our training set consists of \iid samples (adjacency matrices) $\{\bA_i\}_{i=1}^m$.
The corresponding empirical data distribution\footnote{With a slight abuse of notation, we refer to both the data distribution and its empirical version as $\pdata$ since the data distribution is unknown and will not be often used.} is a mixture of Dirac delta distributions, \ie, $\pdata(\bA) \coloneqq \frac{1}{m} \sum_{i=1}^{m} \delta(\bA - \bA_i)$,
from which we can get the closed-form of the empirical noisy data distribution $\psigma(\btA) \coloneqq \frac{1}{m} \sum_{i=1}^{m} \mathcal{N}(\btA; \bA_i, \sigma^2\bI)$.
$\psigma(\btA)$ is an $m$-component GMM with uniform weights, and the DSM objective in~\cref{eq:obj_dsm} learns its score function.

\section{PERMUTATION INVARIANT LOSS AND SAMPLING}
Existing models employ permutation invariant losses to attain permutation invariant sampling, a feature essential for real-world applications. However, we demonstrate that permutation invariant losses lead to degraded empirical performances. Alternatively, we introduce a simple yet provable technique to ensure invariant sampling without requiring an invariant loss, fostering flexibility in model architecture design and inspiring our non-invariant method.

\subsection{Challenges of Permutation Invariant Loss}\label{sect:effective_target}

\noindent\textbf{Theoretical Analysis.}
As shown in~\cref{fig:distribution}, the empirical graph distribution $\pdata(\bA)$ may only assign a non-zero probability to a single observed adjacency matrix in its isomorphism class.
The ultimate goal of a graph generative model is to match this empirical distribution, which may be biased by the observed permutation.
However, the target distribution that generative models are trained to match may differ from the empirical one, dependent on the model design \wrt permutation symmetry.
For clarity, we define the \emph{effective target distribution} as the closest distribution (\eg, measured in total variation distance) to the empirical data distribution achievable by the generative model, assuming sufficient data and model capacity.

Previous works~\citep{niu2020permutation, xu2022geodiff, hoogeboom2022equivariant} learn permutation invariant models $p_\theta(\bA)$ with permutation invariant losses via equivariant networks.
We argue the induced invariant effective target distributions are hard to learn.
Let the training set only contain one graph $\bA_1$, \eg, in~\cref{fig:distribution}.
Even if we optimize the invariant model distribution $p_\theta(\bA)$ towards the empirical one $\pdata(\bA)$ to the optimum, they can never exactly match.
This is because if it does (\ie, $\pdata(\bA = \bA_1) = p_\theta(\bA = \bA_1) = 1$), $p_\theta(\bA)$ will also assign the same probability to isomorphic graphs of $\bA_1$ due to permutation invariance, thus violating the probability sum-to-one rule. 
Instead, the optimal $p_\theta(\bA)$ (\ie, the effective target distribution) will assign equal probability $1/\vert \cI_{\bA_1} \vert$ to isomorphic graphs of $\bA_1$.

Formally, given a training set of adjacency matrices $\{\bA_i\}_{i=1}^m$, one can construct the union set of each graph's isomorphism class, denoted as 
$\cA^* =  \bigcup_{i=1}^m \cI_{\bA_i}$.
The corresponding mixture distribution is $\pdata^*(\bA) \coloneqq \frac{1}{Z} \sum_{\bA^* \in \cA^*} \delta(\bA - \bA^*)$, where $Z = \vert \cA^* \vert = O(n!m)$ is the normalizing constant.
Note that $Z = n!m$ may not be always achievable due to graph automorphism.
\begin{restatable}{lemma}{effectiveTargetDist}
        Assume at least one training graph has $\Omega(n!)$ distinct adjacency matrices in its isomorphism class.
         	Let $\cP$ denote all discrete permutation invariant distributions.
 	The closest distributions in $\cP$ to $\pdata$, measured by total variation, have at least $\Omega(n!)$ modes.
 	If, in addition, we restrict $\cP$ to be the set of permutation invariant distributions such that $p(\bA_i) = p(\bA_j) > 0$ for all matrices in the training set $\{\bA_l\}_{l=1}^m$, then the closest distribution is given by $\argmin_{q \in \cP} TV(q, \pdata) = \pdata^*$.
	\label{lem:effective_target_dist}
\end{restatable}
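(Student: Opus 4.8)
The plan is to split the lemma into its two assertions and handle them with the same underlying computation of total variation distance against the empirical (Dirac-mixture) distribution $\pdata = \frac{1}{m}\sum_{i=1}^m \delta(\bA - \bA_i)$. First I would record the basic fact that any distribution $q \in \cP$ is constant on each isomorphism class $\cI_{\bA}$, since $q(\bP\bA\bP^\top) = q(\bA)$ for all $\bP \in \cS_n$; hence the support of $q$, when it meets $\cI_{\bA}$, must contain all of $\cI_{\bA}$, and $q$ assigns total mass $|\cI_{\bA}| \cdot q(\bA)$ to that class. This immediately drives the mode count: if $q$ puts any mass on a class $\cI_{\bA_i}$ with $|\cI_{\bA_i}| = \Omega(n!)$, then $q$ has $\Omega(n!)$ modes there; if $q$ puts \emph{no} mass on that class, then $q$ and $\pdata$ disagree by at least $1/m$ on that class, so $TV(q,\pdata) \ge 1/(2m)$ — I would argue this is strictly worse than some invariant distribution that does cover the class (e.g. $\pdata^*$, or even just spreading $\pdata$'s mass uniformly over each relevant class), so no TV-minimizer can avoid the heavy class. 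That gives the first assertion.

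For the second assertion I would parametrize the competition explicitly. Under the added constraint, a candidate $q \in \cP$ is determined by: (i) a choice of which isomorphism classes among $\{\cI_{\bA_1}, \dots, \cI_{\bA_m}\}$ receive positive mass — call this index set $S$, which by the constraint $p(\bA_i) = p(\bA_j) > 0$ must actually be \emph{all} of them, so every training class is covered; (ii) possibly additional isomorphism classes disjoint from the training classes that $q$ also charges; and (iii) the common value $c = q(\bA_i)$ and how mass is distributed within/beyond the training classes. I would then write $TV(q,\pdata)$ as a sum over disjoint regions: the atoms $\{\bA_i\}$ themselves, the isomorphic-but-unobserved matrices in $\bigcup \cI_{\bA_i}$, and any extra support. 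Computing $2\,TV(q,\pdata) = \sum_{\bA}|q(\bA) - \pdata(\bA)|$, the contribution from each training atom is $|c - 1/m|$ (assuming, as one should check, that distinct $\bA_i$ are not isomorphic to each other — or more carefully, grouping by isomorphism class and letting $k$ be the number of distinct training classes and $m_j$ the multiplicity of class $j$), the contribution from the unobserved isomorphic matrices is $c$ per matrix, and extra support only adds more. Minimizing this piecewise-linear function of $c$ over $c>0$, I expect the optimum to sit at the value that makes the "extra isomorphic matrices" penalty and the "under-covering the atoms" penalty balance, and a short convexity/case-check should show the minimizer is $c = 1/Z$ with no extra support and mass spread uniformly over $\cA^* = \bigcup_i \cI_{\bA_i}$, i.e. exactly $q = \pdata^*$. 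The remark that $Z = n!m$ need not hold because of automorphisms does not affect this — $Z = |\cA^*|$ is the right normalizer regardless.

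The main obstacle I anticipate is bookkeeping around coincidences rather than anything deep: training graphs may be isomorphic to one another (so the $m$ atoms collapse into $k \le m$ classes with multiplicities), graph automorphisms make $|\cI_{\bA_i}|$ strictly smaller than $n!$ (this is exactly why the lemma hypothesizes $\Omega(n!)$ distinct adjacency matrices in \emph{at least one} class rather than assuming it for all), and within the heavy class the constraint only pins down $q$ on the observed atom, not on the other $|\cI_{\bA_i}|-1$ matrices of that class — so I must argue that at the TV-optimum the mass is forced to be spread uniformly there too (otherwise shifting mass toward uniformity strictly decreases, or at least does not increase, the $\ell_1$ distance while preserving invariance, since $\pdata$ is zero on all those matrices). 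I would also double-check the degenerate edge cases: $m=1$, and the situation where $1/Z$ happens to equal $1/m$ on some atom, to confirm the $\argmin$ is unique and equals $\pdata^*$. Everything else is the routine $\ell_1$-on-a-partition calculation sketched above.
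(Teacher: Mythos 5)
Your treatment of the second assertion is sound and is essentially the paper's own computation, done slightly more generally: the paper restricts its part-two argument to candidates that are uniform over a support containing $\cA$ (and separately justifies why such conditions are needed), whereas you derive from the stated constraint plus invariance that $q$ equals a constant $c$ on all of $\cA^*$, and then the partition computation $\sum_{\bA}\vert q(\bA)-\pdata(\bA)\vert = m(1/m-c)+(\vert\cA^*\vert-m)c+E = 2-2mc$ (with $E=1-c\vert\cA^*\vert\ge 0$ the mass off $\cA^*$, forcing $c\le 1/\vert\cA^*\vert\le 1/m$) is minimized uniquely at $c=1/\vert\cA^*\vert$, $E=0$, i.e.\ at $\pdata^*$. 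This also disposes of the "extra support" and multiplicity bookkeeping you flagged.

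The genuine gap is in your argument for the first assertion. The step "a $q$ that avoids the heavy class pays at least $1/m$, and this is strictly worse than some class-covering invariant distribution such as $\pdata^*$ or the per-class spreading of $\pdata$" is false in general. Covering an isomorphism class of size $N$ with per-matrix mass $c$ contributes $\vert c-1/m\vert+(N-1)c \approx 1/m+(N-2)c$ to the $\ell_1$ distance: you pay for essentially all of the mass placed on the class, since $\pdata$ vanishes on all but one of its matrices, and the only gain is the $2c$ reduction at the observed atom. Indeed, for any invariant $q$ with per-matrix masses $c_j\le 1/m$ on the training classes one gets $\sum_{\bA}\vert q(\bA)-\pdata(\bA)\vert = 2-2\sum_j c_j\,\vert\cI_{\bA_j}\cap\cA\vert$, so minimizing TV is a knapsack that prefers \emph{small} classes. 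Concretely, take $n=4$ and a training set of one perfect-matching matrix and one path matrix ($m=2$): the path class has $12=n!/2$ matrices, yet the unique TV-minimizer over \emph{all} invariant distributions is the uniform distribution on the $3$ perfect matchings ($\ell_1=4/3$), which beats $\pdata^*$ ($26/15$) and the per-class-spread distribution ($19/12$) and assigns zero mass to the heavy class, hence has $3$ modes rather than $\Omega(n!)$. So the comparison distributions you name do not witness the claim, and the argument only goes through under stronger hypotheses than "at least one heavy training class" --- e.g.\ $m=1$, or every training class of size $\Omega(n!)$ (then a $q$ covering no training class has the maximal distance $2$, which the intersection argument rules out, and covering any class already forces $\Omega(n!)$ modes by invariance). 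To be fair, the paper's own proof is loose at the same spot --- after its Claims 1--3 it asserts that enlarging the covered set to all of $\cA$ decreases TV, which the knapsack formula contradicts when some training class is small --- but your proposal replaces that assertion with an explicit comparison that is demonstrably wrong, so as written it would not survive being carried out.
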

Under mild conditions, $\pdata^*(\bA)$ of $O(n!m)$ modes becomes the effective target distribution, which is the case of prior invariant models using equivariant networks for invariant losses (see~\cref{app:theoretics} for details).
In contrast, if we employ a non-equivariant network (\ie, the underlying training objective is not invariant), the effective target distribution 
would be $\pdata(\bA)$ which only has $O(m)$ modes.
Moreover, the modes of the Dirac delta target distributions dictate the components of the GMMs in the diffusion models, with each component precisely centered on a target mode.
With the invariant loss, the GMMs have the form of $\psigma^*(\btA) \coloneqq \frac{1}{Z} \sum_{\bA^*_i \in \cA^*} \mathcal{N}(\btA; \bA_i, \sigma^2\bI)$ with an $O(n!)$ factor more components than the non-invariant loss (detailed in~\cref{app:inv_by_eqvar_net}).
Arguably, learning with a permutation invariant loss is much harder than a non-invariant one, implied by the $O(n!)$-factor surge in modes of target distribution and components of GMMs.

\begin{figure}[t]
    \centering
    \includegraphics[width=0.55\textwidth]{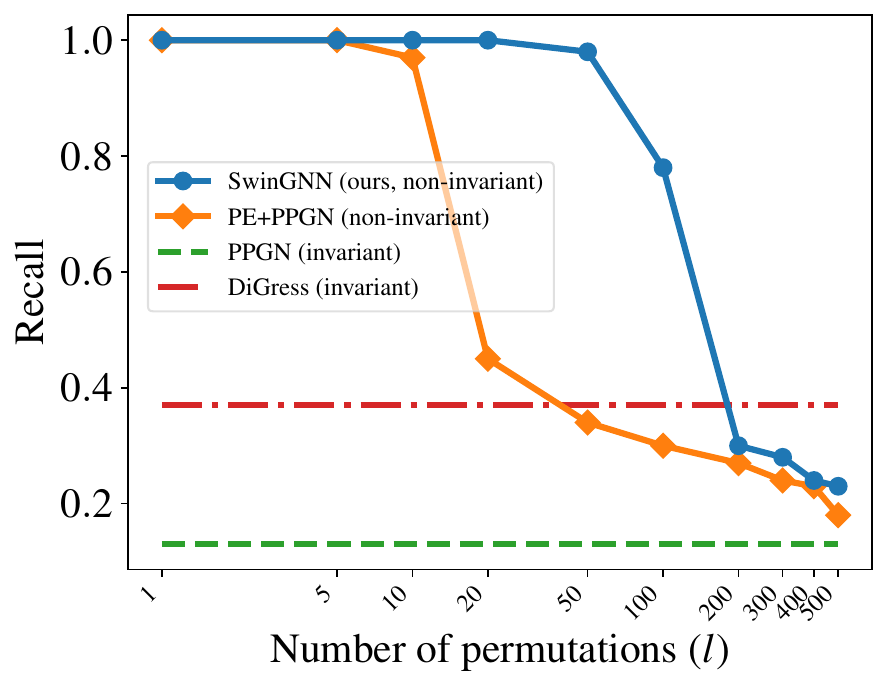}
    \caption{Invariant models perform significantly worse than non-invariant models when the number of applied permutations ($l$) is small.}
    \label{fig:overfit_recall}
\end{figure}
\noindent\textbf{Empirical Investigation.}
We typically observe one adjacency matrix from its isomorphism class in training data $\{\bA_i\}_{i=1}^m$. By applying permutation $n!$ times, one can construct $\pdata^*$ from $\pdata$.
We define a trade-off between them, called the $l$-permuted empirical distribution: $\pdata^l(\bA) \coloneqq \frac{1}{ml} \sum_{i=1}^{m} \sum_{j=1}^{l} \delta(\bA - \bP_{j}\bA_i \bP_{j}^{\top})$, where $\bP_{1}, \ldots, \bP_l$ are $l$ distinct permutation matrices.
$\pdata^{l}$ has $O(lm)$ modes governed by $l$.
With proper permutation matrices, $\pdata^{l} = \pdata$ when $l=1$ and $\pdata^{l} \approx \pdata^*$\footnote{They are identical without non-trivial automorphisms.} when $l=n!$.
To study the impact of the mode count in the effective target distribution on empirical performance, we use $\pdata^{l}$ as the diffusion model's target by tuning $l$.
An equivariant network matches the invariant target $\pdata^*$, having $O(n!m)$ modes when $l=n!$.
For $l < n!$, one must use a non-equivariant network to learn a non-invariant training objective.

To assess empirical performance, we conduct experiments on a toy dataset of 10 random regular graphs, each with 16 nodes and degrees in $[2, 11]$.
The value of $l$ ranges from $1$ to $500$ and we ensure training is converged for all models.
We use two invariant models as baselines for learning $\pdata^*$: DiGress~\citep{vignac2022digress} and PPGN~\citep{maron2020provably}, with PPGN being notably expressive as a 3WL-discriminative GNN.
For non-invariant models matching $\pdata^l$ with $l < n!$, we add index-based positional embedding to PPGN and compare it with our SwinGNN model (detailed in~\cref{sec:swin_gnn}). Please see~\cref{app:exp_setup} for more details.

We use \emph{recall} as the metric, which is defined by the proportion of generated graphs that are isomorphic to any training graph and ranges between 0 and 1.
The recall requires isomorphism testing and is invariant to permutation, which is fair for all models.
A higher recall indicates a strong capacity to capture the toy data distribution.
In~\cref{fig:overfit_recall}, invariant models, regardless of using discrete (DiGress) or continuous (PPGN) Markov chains in the diffusion processes, fail to achieve high recall.
Conversely, non-invariant models perform exceptionally well when $l$ is small and the target distributions have modest imposed permutations.
As $l$ goes up, the sample quality of non-invariant models drops significantly, indicating the learning difficulties incurred by more modes\footnote{Sample complexity analysis for non-invariant models is provided in~\cref{app:sample_com}.}.
Notably, in practical applications of non-invariant models, one typically chooses \( l=1 \), often resulting in empirically stronger performance compared to invariant counterparts.

\subsection{Technique to Reclaim Invariant Sampling}
While non-invariant models show stronger performances empirically, they cannot guarantee permutation-invariant sampling. We present the following lemma to restore the invariant sampling.
\begin{restatable}{lemma}{permuteSample}
    Let $\bA$ be a random adjacency matrix distributed according to any graph distribution on $n$ vertices. 
    Let $\bP_r \sim \mathrm{Unif}(\cS_n)$ be uniform over the set of permutation matrices. Then, the induced distribution of the random matrix $\bA_r = \bP_r \bA \bP_r^{\top}$, denoted as $q_\theta(\bA_r)$, is permutation invariant, \ie, $q_\theta(\bA_r) = q_\theta(\bP \bA_r \bP^{\top}), \forall \bP \in \cS_n$.
    \label{lem:permute_sample}
\end{restatable}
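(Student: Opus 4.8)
The plan is to reduce the statement to the left-invariance of the uniform distribution on the finite group $\cS_n$. The target is to show that for every fixed $\bP \in \cS_n$, the random matrices $\bP \bA_r \bP^{\top}$ and $\bA_r$ have the same law. Expanding the definition, $\bP \bA_r \bP^{\top} = \bP(\bP_r \bA \bP_r^{\top})\bP^{\top} = (\bP\bP_r)\bA(\bP\bP_r)^{\top}$, so it suffices to argue that $\bP\bP_r$ plays the same role relative to $\bA$ as $\bP_r$ does — namely that $\bP\bP_r$ is again uniform on $\cS_n$ and (as $\bP_r$ is drawn) independent of $\bA$.

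First I would note that $\bP_r$ is sampled independently of $\bA$, and that left-multiplication by the fixed element $\bP$ is a bijection of $\cS_n$ onto itself; hence the pushforward of $\mathrm{Unif}(\cS_n)$ under $\bQ \mapsto \bP\bQ$ is again $\mathrm{Unif}(\cS_n)$, and the independence from $\bA$ is unaffected. Therefore $(\bP\bP_r, \bA) \stackrel{d}{=} (\bP_r, \bA)$, and applying the measurable map $(\bQ,\bC)\mapsto \bQ\bC\bQ^{\top}$ to both sides yields $\bP\bA_r\bP^{\top} \stackrel{d}{=} \bA_r$, i.e. $q_\theta(\bP\bA_r\bP^{\top}) = q_\theta(\bA_r)$, which is exactly the claim. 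Since the state space is the finite set $\{0,1\}^{\nbyn}$, one may phrase this entirely at the level of probability mass functions: $q_\theta(\bB) = \sum_{\bC} \Pr[\bA = \bC]\,\tfrac{1}{n!}\,\bigl|\{\bQ\in\cS_n : \bQ\bC\bQ^{\top} = \bB\}\bigr|$, and replacing $\bB$ by $\bP\bB\bP^{\top}$ lets one substitute $\bQ \mapsto \bP\bQ$ in the inner count, which is a bijection of $\cS_n$ and thus leaves the cardinality (and the uniform weight $\tfrac1{n!}$) unchanged, giving $q_\theta(\bP\bB\bP^{\top}) = q_\theta(\bB)$.

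In the write-up I would prefer the test-function form, which is the cleanest: for any bounded $f$, condition on $\bA$ and compute $\E[f(\bP\bA_r\bP^{\top}) \mid \bA] = \tfrac{1}{n!}\sum_{\bQ\in\cS_n} f\bigl((\bP\bQ)\bA(\bP\bQ)^{\top}\bigr) = \tfrac{1}{n!}\sum_{\bQ'\in\cS_n} f\bigl(\bQ'\bA\bQ'^{\top}\bigr) = \E[f(\bA_r) \mid \bA]$, where the middle equality is the reindexing $\bQ' = \bP\bQ$; then take expectations over $\bA$.

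There is essentially no hard step here. The only points requiring care are (i) making explicit that $\bP_r$ is independent of $\bA$ — this is implicit in the phrasing "the induced distribution of $\bA_r$" — and (ii) ensuring the reindexing is carried out over a genuine bijection of $\cS_n$ so that the uniform weights are not disturbed; both are immediate. I would also add a one-line remark that the result is distributional only and does not make any particular realization of $\bA_r$ canonical.
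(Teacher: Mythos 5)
Your proof is correct, and it takes a genuinely different route from the paper's. You reduce the claim to the left-invariance of the uniform distribution on the finite group $\cS_n$: for fixed $\bP$, the map $\bQ \mapsto \bP\bQ$ is a bijection of $\cS_n$, so $(\bP\bP_r,\bA)$ has the same law as $(\bP_r,\bA)$ (using the implicit independence of $\bP_r$ and $\bA$, which you rightly flag), and pushing forward through $(\bQ,\bC)\mapsto \bQ\bC\bQ^{\top}$ gives $\bP\bA_r\bP^{\top}\stackrel{d}{=}\bA_r$; your test-function and PMF versions are just two phrasings of this reindexing. The paper instead computes the induced distribution explicitly: it writes $q_\theta(\bA_r)=\sum_{\bA\in\cI_{\bA_r}}\mathbb{P}(P_{\bA\rightarrow\bA_r})\,p_\theta(\bA)$, partitions $\cS_n$ by the image of $\bA$, and invokes the orbit--stabilizer theorem to show that the number of permutations mapping $\bA$ to any fixed $\bA_r$ in its class equals $\mathrm{Aut}(\bA_r)$, arriving at the closed form $q_\theta(\bA_r)=\sum_{\bA\in\mathcal{C}(\cI_{\bA_r})}\tfrac{\mathrm{Aut}(\bA_r)}{n!}\,p_\theta(\bA)$, whose invariance under isomorphism is then immediate. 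Your argument is shorter, needs no automorphism bookkeeping, and generalizes verbatim to any group acting measurably on any state space (so it covers weighted or attributed graphs without modification, especially in the test-function form). What the paper's longer computation buys is the explicit formula for $q_\theta$ in terms of $p_\theta$ and automorphism counts, which directly substantiates the remarks following the lemma --- that the random permutation merely spreads the mass of each generated ``primitive'' graph uniformly over its isomorphism class and that $q_\theta$ captures exactly the same isomorphism classes as $p_\theta$ --- information your distribution-equality argument does not by itself exhibit.
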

This trick is applicable to all types of generative models.
Note that the random permutation does not go beyond the isomorphism class.
Although $q_\theta$ is provably invariant, it captures the same amount of isomorphism classes as $p_\theta$.
Graphs generated from $q_\theta$ must have isomorphic counterparts that $p_\theta$ could generate.
Thus far, we have uncovered two insights into permutation invariance in graph generative models: 1) invariant loss ensures invariant sampling but may impair empirical performances, and 2) invariant loss is not a prerequisite for invariant sampling.
These insights prompt rethinking the prevailing design of invariant models using invariant losses. 
Aiming to accurately capture the complex real-world graph distributions while preserving sampling invariance, we introduce a novel diffusion model, integrating non-invariant loss with the invariant sampling technique in~\cref{lem:permute_sample}.

\begin{figure*}[t]
	\centering
	\includegraphics[width=\textwidth]{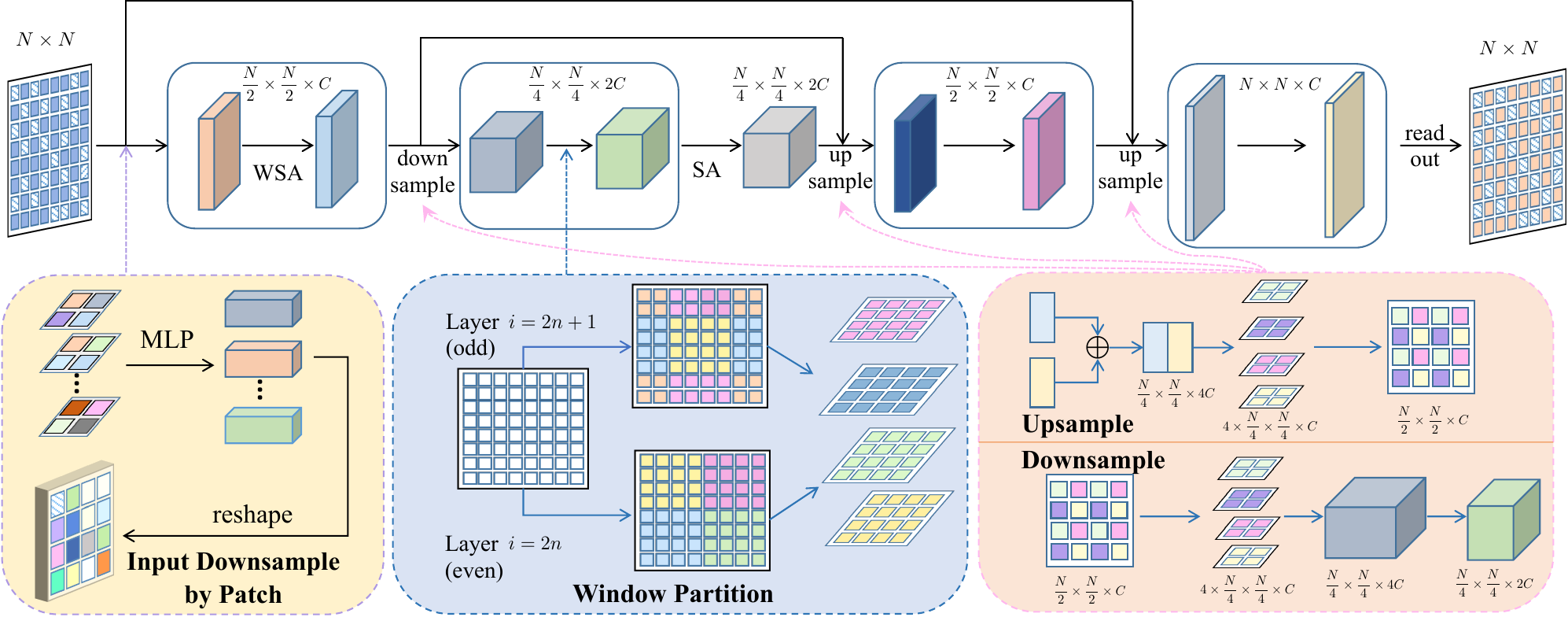}
	\caption{The overall architecture of our SwinGNN.}    
	\label{fig:model}
\end{figure*}

\section{METHOD}
\subsection{Efficient High-order Graph Transformer}
\label{sec:swin_gnn}
A continuous denoising network for graph data takes a noisy adjacency matrix $\btA$ and its noise level $\sigma$ as input and outputs a denoised adjacency matrix.
However, unlike in typical graph representation learning, our model carries out edge regression without clear graph topology (\ie, no sparse binary adjacency matrices).

\noindent\textbf{Approximating 2-WL Message Passing.}
The vanilla GNNs, such as the GCN~\citep{kipf2017semisupervised} have been shown to have the same expressiveness as the 1-dimensional Weisfeiler-Leman graph isomorphism heuristic (1-WL) in terms of distinguishing non-isomorphic graphs~\citep{xu2018how}. Specifically, the graphs that the 1-WL algorithm fails to distinguish are also provably difficult to handle for classical GNNs. To address the limitations in expressiveness and performance, higher-order GNNs~\citep{maron2019invariant, morris2021weisfeilera} have been proposed to match the theoretical capacity of the higher-dimensional WL test, which is named $k$-WL GNNs in contrast to the vanilla GNNs of 1-WL expressiveness.
To improve model capacity, we draw inspiration from the $k$-order GNN~\citep{maron2019invariant} and the $k$-WL GNN~\citep{morris2021weisfeilera} that improve network expressivity using $k$-tuples of nodes.
The $k$-WL discrimination power characterizes the isomorphism testing ability, which proves equivalent to the function approximation capacity~\citep{chen2019equivalence}.
However, applying any expressive network of $k \geq 2$ is challenging in practice due to its poor scalability.
In our case, we view noisy input data $\btA \in \R^{n\times n}$ as weighted fully-connected graphs with $n$ nodes, which have $O(n^k)$ $k$-element subsets, leading to a message passing complexity of $O(n^{2k})$.

To address the computation challenges of $k$-WL network, we choose $k=2$ to simplify the internal order and introduce a novel graph transformer to approximate the 2-WL (\ie, edge-to-edge) message passing with efficient attention mechanism. Our model, as shown in~\cref{fig:model}, treats each edge value in the input matrix as a token, apply transformers with self-attention~\citep{vaswani2017attention} to update the token representations, and output final edge tokens for denoising.

Specifically, we apply window-based partitioning to confine self-attention to local representations to improve efficiency.
The model splits the $\nbyn$ entries into local windows of size $M \times M$ in the grid map and computes self-attention within each window in parallel.
With $M^2$ tokens per window and some reasonable $M$, the self-attention complexity reduces to $O(n^2M^2)$, a significant decrease from the original $O(n^4)$ in 2-WL message passing.
However, this approach limits inter-window message passing. To enhance cross-window interactions, we apply a shifted window technique~\citep{liu2021Swin}, alternating between regular and shifted windows to approximate dense edge-to-edge interactions.

Although SwinGNN shares the same window attention with SwinTransformers~\citep{liu2021Swin}, they are fundamentally different. SwinGNN is specially designed to approximate the computationally-expensive $2$-WL network, while SwinTransformers can be viewed as $1$-WL GNN applied to patch-level tokens.
We conduct experiments in~\cref{tab:benchmark_1}, ~\cref{tab:benchmark_2} and~\cref{app:vision_swintf} to showcase our model's superior performances.

SwinTransformers are general-purpose feature extraction backbones for vision tasks. They need to be customized by connecting to a task-specific prediction head, which we implement using UperNet~\citep{xiao2018unified} in our baseline experiments. However, we’d like to clarify that SwinTransformers simply treat graphs as image tensors and do not account for network expressiveness. 

\noindent\textbf{Multi-scale Edge Representation Learning.}
The window self-attention complexity $O(n^2M^2)$ is dependent on $n^2$, which hinders scaling for large graphs.
To further reduce memory footprint and better capture long-range interaction, we apply channel mixing-based downsampling and upsampling layers to construct hierarchical graph representations.
As shown in~\cref{fig:model}, in the downsampling layer, we split the edge representation tensor into four half-sized tensors by index parity (odd or even) along rows and columns, make concatenation along channel dimension, and update tokens in the downsized tensors independently using an MLP.
The upsampling stage is carried out likewise in the opposite direction, following the skip connection that concatenates same-size token tensors along channels.

Putting things together, we propose our model, called SwinGNN, that efficiently learns edge representations via approximating 2-WL message passing.

\subsection{Training and Sampling with SDE}
\label{sec:sde_training_sampling}
We construct the noisy data distribution through stochastic differential equation (SDE) modeling with a continuous time variable $t \in [0, T]$.
Let noise $\sigma$ evolve with $t$ and the noisy distribution reads as:
\begin{equation}
    p_{\sigma(t)}(\btA) = \frac{1}{m} \sum_{i=1}^{m} \mathcal{N}(\btA; \bA_i, \sigma^2(t)\bI), \bA_i \in \cA.
    \label{eq:sde_marginal}
\end{equation}
Namely, $p_{\sigma(0)}$ is the Dirac delta data distribution, and $p_{\sigma(T)}$ is close to the pure Gaussian noise (sampling prior distribution).
The general forward and reverse process SDEs~\citep{song2021scorebased} are defined by:
\begin{align}
	d\bA_+ &= f_d(\bA, t)dt + g_d(t) d\bW,
    \label{eq:sde_forward}
    \\
    d\bA_- &= [f_d(\bA, t)dt - g_d(t)^2 \nabla_{\bA} \log p_{t}(\bA)]dt + g_d(t) d\bW,
    \label{eq:sde_backward}
\end{align}
where the $d\bA_+$ and $d\bA_-$ denote forward and reverse processes respectively, $f_d(\bA, t)$ and $g_d(t)$ are the drift and diffusion coefficients, and $\bW$ is the standard Wiener process.
We solve for the $f_d(\bA, t)$ and $g_d(t)$ so that the noisy distribution $p_t$ in~\cref{eq:sde_backward} satisfies~\cref{eq:sde_marginal}, and the solution is given by
\begin{equation*}
    f(\bA, t) = \boldsymbol{0}, g(t) = \sqrt{2\dot{\sigma}(t)\sigma(t)},
\end{equation*}
which can be found in Eq.~(9) of~\citet{song2021scorebased} or Eq.~(6) of~\citet{karras2022elucidating}.

We select the time-varying noise strength $\sigma(t)$ to be linear with $t$, \ie, $\sigma(t) = t$, as in~\citet{nichol2021improved, song2021denoising}, which turns the SDEs into
\begin{equation}
        d\bA_+ = \sqrt{2t} d\bW, d\bA_- = - 2t \nabla_{\bA} \log p_{t}(\bA)dt + \sqrt{2t}d\bW.
        \label{eq:sde_edm}
\end{equation}
Further, we adopt network preconditioning for improved training dynamics following the Elucidating Diffusion Model (EDM)~\citep{karras2022elucidating}.
First, instead of training DSM in~\cref{eq:obj_dsm}, we use its equivalent form in~\cref{eq:obj_denoiser}, and parameterize denoising function $D_\theta$ with noise dependent scaling:
$
D_\theta(\btA, \sigma) = c_{s}(\sigma)\btA + c_o(\sigma) F_\theta (c_i(\sigma)\btA, c_n(\sigma)),
$
where $F_\theta$ is the actual neural network and the other coefficients are summarized in~\cref{app:diff_hp}.
In implementation, $D_\theta$ is a wrapper with preconditioning operations, and we construct $F_\theta$ using our SwinGNN model in~\cref{sec:swin_gnn}.
Second, we sample $\sigma$ with $\ln(\sigma) \sim \mathcal{N}(P_{\text{mean}}, P_{\text{std}}^2)$ to select noise stochastically in a broad range to draw training samples.
Third, we apply the weighting coefficients $\lambda(\sigma) = 1/c_o(\sigma)^2$ on the denoising loss to improve training stability.
The overall training objective is
\begin{equation}
		\mathbb{E}_{\sigma, \bA, \btA}
		\left[ \lambda(\sigma) \Vert 
		c_{s}(\sigma)\btA + c_o(\sigma) F_\theta (c_i(\sigma)\btA, c_n(\sigma))
		- \bA \Vert^2_F \right].
    \label{eq:obj_denoiser_edm}
\end{equation}

The target of $F_\theta$ is $\frac{\bA - c_s(\sigma) \btA}{c_o(\sigma)}$, an interpolation between pure Gaussian noise $\bA - \btA$ (when $\sigma \rightarrow 0$) and clean sample $\bA$ (when $\sigma \rightarrow \infty$), downscaled by $c_o(\sigma)$.
These measures altogether ease the training of $F_\theta$ by making the network inputs and targets have unit variance.

\noindent\textbf{Self-Conditioning.}
We apply self-conditioning~\citep{chen2022analog} to let the diffusion model rely on sample created by itself.
Let $\bhA$ denote the sample created by the denoiser function $D_\theta$ during the reverse process, which is initialized as zero tensors and becomes available after the first sampling step. We have modified the denoising function to be $D_\theta(\btA, \bhA, \sigma)$, allowing it to take the previously generated sample $\bhA$ as an additional input. In implementation, the backbone network $F_\theta(\btA, \bhA, \sigma)$ needs to be changed accordingly and includes a dedicated initial layer to concatenate $\btA$ and $\bhA$.

During training, however, we do not conduct iterative denoising sampling, and we do not have direct access to $\bhA$. Following the heuristics in~\citet{chen2022analog}, we manage $\bhA$ during training as follows: given a noisy sample $\btA$, with a 50\% probability, we set $\bhA$ to $\boldsymbol{0}$ (placeholder self-conditioning signal); otherwise, we first obtain $\bhA$ by running $D_\theta(\btA, \boldsymbol{0}, \sigma)$ to get the denoised result and then use it for self-conditioning. 
Note that we only add additional information created by the diffusion model itself to the network input, and the overall training loss remains unchanged. During training, when trying to obtain $\bhA$, we disable gradient calculation (\eg, using \texttt{torch.no\_grad()}). Therefore, the extra memory overhead is negligible for training, and, on average, half of the training steps invoke another one-step inference.

\begin{algorithm}[htbp!]
	\caption{Sampler w. 2$^{\text{nd}}$-order correction.}
	\label{alg:sampler}
	\begin{algorithmic}[1]
		\Require{$D_\theta, N, \{t_i\}_{i=0}^N, \{\gamma_i\}_{i=0}^{N-1}.$}
		\vspace{0.25em}
		\State {\bfseries sample} {$ \bwtA^{(0)} \sim \mathcal{N}(\boldsymbol{0}, t_0^2\bI), \bwhA^{(0)}_{\text{sc}}=\boldsymbol{0}$}. 
		\For{$i=0$ to $N-1$}
		\State {\bfseries sample} {$\beps \sim \mathcal{N}(\boldsymbol{0}, S_{\text{noise}}^2\bI)$} \label{alg:line_s_noise}
		\Comment{{Variance perturbation}}
		\State {$ \hat{t}_{i} \leftarrow (1 + \gamma_i) t_i$}
		\Comment{{Time perturbation}}
		\State {$\bwtA^{(\hat{i})} \leftarrow \bwtA^{(i)} + \sqrt{\hat{t}_i^2 - t_i^2} \beps$}
		\Comment{{Noise injection from ${t}_{i}$ to $\hat{t}_{i}$}}
		\State {$\bwhA^{(\hat{i})}_{\text{sc}} \leftarrow  D_\theta(\bwtA^{(\hat{i})}, \bwhA^{(i)}_{\text{sc}},  \hat{t}_i)$}
		\Comment{{Self-conditioning signal at current step $i$}}
		\State {$\boldsymbol{d}_i \leftarrow (\bwtA^{(\hat{i})} - \bwhA^{({\hat{i}})}_{\text{sc}}) / \hat{t}_i$}
		\Comment{{Evaluate $d\bA/dt$ at $\hat{t}_i$}}
		\State {$\bwtA^{(i+1)} \leftarrow \bwtA^{(\hat{i})} + (t_{i+1} - \hat{t}_i) \boldsymbol{d}_i$}
		\Comment{{Reverse process from  $\bwtA^{i}$ to $\bwtA^{(i+1)}$}}
		\newline
		\vspace{0.25em}
		\texttt{{\# Below is 2$^{\text{nd}}$-order correction.}}
		\vspace{0.25em}
		\State {$ \bwhA^{(i+1)}_{\text{sc}} \leftarrow D_\theta(\bwtA^{(i+1)}, \bwhA^{({\hat{i}})}_{\text{sc}},  {t}_{i+1})$}
		\Comment{{Self-conditioning for next step $i+1$}}
		\State { $\boldsymbol{d}_i' \leftarrow (\bwtA^{(i+1)} - \bwhA^{(i+1)}_{\text{sc}} ) / {t}_{i+1}$}
		\Comment{{Evaluate $d\bA/dt$ at $t_{i+1}$}}
		\State {$ \bwtA^{(i+1)} \leftarrow \bwtA^{(i)} + \frac{1}{2}(t_{i+1} - \hat{t}_i) (\boldsymbol{d}_i + \boldsymbol{d}_i')$}
		\Comment{{Heun's integration for $\bwtA^{(i+1)}$ correction}}
		\EndFor
		\State {\bfseries return} $\bwtA^{(N)}$
	\end{algorithmic}
\end{algorithm} 

\noindent\textbf{Stochastic Sampler with 2$^{\text{nd}}$-order Correction.}
Our sampler is formally presented in Alg.~\ref{alg:sampler}.
It is based on the 2$^{\text{nd}}$-order sampler in~\citet{karras2022elucidating}.
Due to the nature of SDE, one can flexibly select any $N$ sampling steps in the reverse process and their associated SDE time $\{t_i\}_{i=0}^N$, which are hyper-parameters of the sampler.
We initialize the denoising samples $\bwtA^{(0)}$ from standard normal Gaussians and initialize the self-conditioning signal $\bwhA^{(0)}_{\text{sc}}$ as zero tensors.
In the reverse process, we always apply a perturbed variance that is slightly larger than the unit variance (L3) to inject Gaussian noise.
At the $i$-th iteration, the sampler adds noise to $\bwtA^{(i)}$, moving slightly forward along time to $\hat{t}_i$, whose distance from nominal time $t_i$ is governed by another hyper-parameter $\gamma_i$ (L4-5).
We then obtain the self-conditioning signal at L6 for the current perturbed time $\hat{t}_i$ based on the denoised signal at the nominal time $t_i$, which is obtained or initialized prior to the current sampling step.
Then, we evaluate $d\bA/dt$ at perturbed time $\hat{t}_i$, and move the denoising sample $\bwtA^{(\hat{i})}$ backward in time following the reverse process in~\cref{eq:sde_edm} to obtain $\bwtA^{(i+1)}$ (L6-8).
The sampler then evaluates $d\bA/dt$ at time $t_{i+1}$ and corrects $\bwtA^{(i+1)}$ using Heun's integration~\citep{suli2003introduction} on L9-11.
Note that we keep track of the generated samples $\bwhA^{(i)}_{\text{sc}}$ for model self-conditioning (L9) to save computation.

\newpage
\section{EXPERIMENTS}

\begin{figure}[t]
    \centering
    \begin{subfigure}[b]{0.99\textwidth}
    	    \includegraphics[width=\linewidth]{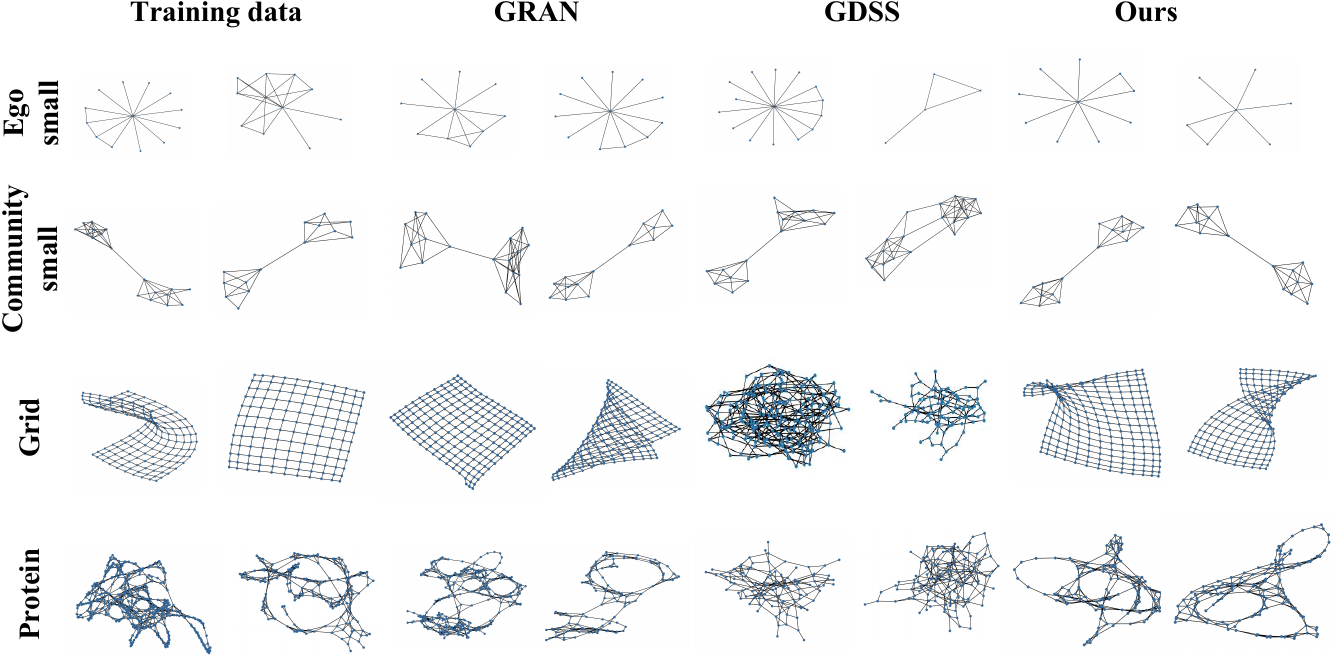}
			\caption{Sample graphs generated by different models.}
			\label{fig:graph_vis}
	\end{subfigure}
 \begin{subfigure}[b]{0.99\textwidth}
	    	\includegraphics[width=\linewidth]{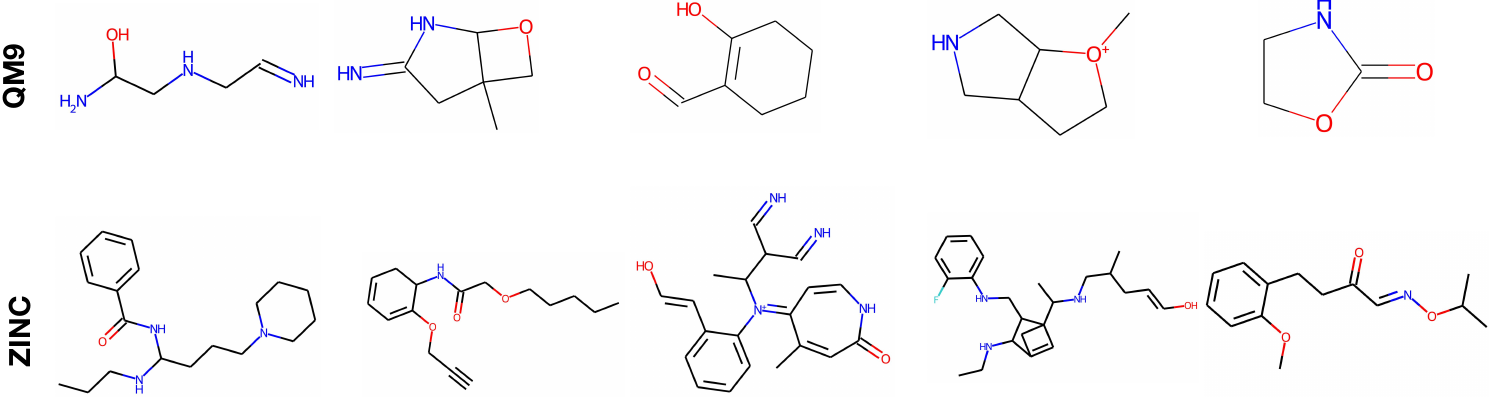}
			\caption{Molecules generated by our models.}
			\label{fig:graph_mol}
	\end{subfigure}
	\caption{Qualitative results on plain graph and molecule datasets.}
\end{figure}

We now empirically verify the effectiveness of our model on synthetic and real-world graph datasets including molecule generation with node and edge attributes.

\noindent\textbf{Experiment Setup.}
We consider the following synthetic and real-world graph datasets:
(1) Ego-small: 200 small ego graphs from Citeseer dataset~\citep{sen2008collective},
(2) Community-small: 100 random graphs generated by Erdős–Rényi model~\citep{erdos59a} consisting of two equal-sized communities,
(3) Grid: 100 random 2D grid graphs with $\vert \cV \vert \in [100, 400]$,
(4) DD protein dataset~\citep{dobson2003distinguishing},
(5) QM9 dataset~\citep{ramakrishnan2014quantum}, 
(6) ZINC250k dataset~\citep{irwin2012zinc}.

For synthetic and real-world datasets (1-4), we follow the same setup in~\citet{liao2020efficient, you2018graphrnn} and apply random split to use 80\% of the graphs for training and the rest 20\% for testing.
In evaluation, we generate the same number of graphs as the test set to compute the maximum mean discrepancy (MMD) of statistics like node degrees, clustering coefficients, and orbit counts.
To compute MMD efficiently, we follow~\citep{liao2020efficient} and use the total variation distance kernel.

For molecule datasets (5-6), to ensure a fair comparison, we use the same pre-processing and training/testing set splitting as in~\citet{jo2022scorebased, shi2020graphaf, luo2021graphdf}.
We generate 10,000 molecule graphs and compare the following key metrics:
(1) validity w/o correction: the proportion of valid molecules without valency correction or edge resampling;
(2) uniqueness: the proportion of unique and valid molecules;
(3) Fréchet ChemNet Distance (FCD)~\citep{preuer2018frechet}:  activation difference using pretrained ChemNet;
(4) neighborhood subgraph pairwise distance kernel (NSPDK) MMD~\citep{costa2010fast}: graph kernel distance considering subgraph structures and node features.

\noindent\textbf{Baselines.}
We compare our method with state-of-the-art graph generative models.
For non-molecule experiments, we include autoregressive model GRAN~\citep{liao2020efficient}, VAE-base GraphVAE-MM~\citep{zahirnia2022micro} and GAN-based SPECTRE~\citep{martinkus2022spectre} as baselines.
We also compare with continuous diffusion models with permutation equivariant backbone, \ie, EDP-GNN~\citep{niu2020permutation} and GDSS~\citep{jo2022scorebased}, and discrete diffusion model DiGress~\citep{vignac2022digress}.
We additionally compare to the PPGN networks used in~\cref{sect:effective_target}. 
Moreover, to validate the effectiveness of our SwinGNN, we also compare it with a recent UNet~\citep{dhariwal2021diffusion} and the vision SwinTransformer(SwinTF)~\citep{liu2021Swin} backbones.
Specifically, we employ the UperNet on top of the SwinTransformer for the denoising task (refer to~\cref{app:vision_swintf} for more details).
On molecule datasets, we compare with GDSS~\citep{jo2022scorebased}, DiGress~\citep{vignac2022digress}, GraphAF~\citep{shi2020graphaf} and GraphDF~\citep{luo2021graphdf}.
We re-run all these baselines with the same data split for a fair comparison.

\noindent\textbf{Implementation Details.}
We try two model variants: standard SwinGNN (60-dim token) and SwinGNN-L (96-dim token), trained using exponential moving average (EMA). 
The same training and sampling methods are applied to UNet and SwinTF baselines as described in~\cref{sec:sde_training_sampling}. Different node and edge attribute encoding methods, such as scalar, binary bits, and one-hot encoding, are compared in molecule generation experiments. Further details are provided in~\cref{app:exp_details}.

\begin{table*}[t]
	\centering
	\caption{Quantitative results on \textbf{ego-small} and \textbf{community-small} benchmark datasets.
	}
	\resizebox{\linewidth}{!}{
		\begin{tabular}{l|c|ccc|ccc}
			\toprule
			& \multirow{2}{*}{\shortstack[c]{Invariant\\Loss}} & \multicolumn{3}{c|}{Ego-Small ($\vert \cV \vert \in [4, 18]$, 200 graphs)} & \multicolumn{3}{c}{Community-Small ($\vert \cV \vert \in [12, 20]$, 100 graphs)}\\  
			Methods & & Deg. $\downarrow$ & Clus. $\downarrow$ & Orbit. $\downarrow$  & Deg. $\downarrow$ & Clus. $\downarrow$ & Orbit. $\downarrow$   \\
			\midrule
			GRAN & \xmark & 3.40e-3 & 2.83e-2 & 1.16e-2 &  4.00e-3 & 1.44e-1 & 4.10e-2  \\
			\midrule
			EDP-GNN & \cmark & 1.12e-2
			& 2.73e-2 & 1.12e-2
			&  1.77e-2 & 7.23e-2 & 6.90e-3 \\
			GDSS & \cmark & 2.59e-2 & 8.91e-2 & 1.38e-2 &  3.73e-2 & 7.02e-2 & 6.60e-3 \\
			DiGress & \cmark & 1.20e-1 & 1.86e-1 & 3.48e-2  & 8.99e-2 & 1.92e-1 & 7.48e-1  \\
			\midrule
			GraphVAE-MM & \xmark & 3.45e-2 & 7.32e-1 & 6.39e-1 & 5.87e-2 & 3.56e-1 & 5.32e-2 \\
			SPECTRE & \xmark & 4.61e-2 & 1.30e-1 & 6.58e-3 &  8.65e-3 & 7.06e-2 & 8.18e-3 \\
			\midrule
			PPGN & \cmark & 2.37e-3 & 3.77e-2 & 4.71e-3 & 1.14e-2 & 1.74e-1 & 4.15e-2 \\
			PPGN+PE & \xmark & 1.21e-3 & 3.76e-2 & 4.94e-3 & 2.20e-3 & 9.37e-2 & 3.76e-2 \\
			\midrule
			SwinTF & \xmark & 8.50e-3 & 4.42e-2 & 8.00e-3 & 2.70e-3 & 7.11e-2 & \textbf{1.30e-3} \\
			UNet & \xmark & 9.38e-4 & 2.79e-2 & 6.08e-3 & 4.29e-3 & 7.24e-2 & 8.33e-3 \\
			SwinGNN & \xmark & \textbf{3.61e-4} & \textbf{2.12e-2} & \textbf{3.58e-3} &  2.98e-3 & 5.11e-2 & {4.33e-3} \\
			SwinGNN-L & \xmark & 5.72e-3 & 3.20e-2 & 5.35e-3
			&  \textbf{1.42e-3} & \textbf{4.52e-2} & {6.30e-3}  \\
			\bottomrule
		\end{tabular}
	}
	\label{tab:benchmark_1}
\end{table*}

\begin{table*}[t]
	\centering
	\caption{Quantitative results on \textbf{grid}, and \textbf{protein} benchmark datasets.
	}
	\resizebox{\linewidth}{!}{
		\begin{tabular}{l|c|ccc|ccc}
			\toprule
			& \multirow{2}{*}{\shortstack[c]{Invariant\\Loss}} & \multicolumn{3}{c|}{Grid ($\vert \cV \vert \in [100, 400]$, 100 graphs)} & \multicolumn{3}{c}{Protein ($\vert \cV \vert \in [100, 500]$, 918 graphs)}\\  
			Methods & & Deg. $\downarrow$ & Clus. $\downarrow$ & Orbit. $\downarrow$  & Deg. $\downarrow$ & Clus. $\downarrow$ & Orbit. $\downarrow$  \\
			\midrule
			GRAN & \xmark & 8.23e-4 & 3.79e-3 & 1.59e-3  & 6.40e-3 & 6.03e-2 & 2.02e-1 \\
			\midrule
			EDP-GNN & \cmark & 9.15e-1 & 3.78e-2 & 1.15  & 9.50e-1 & 1.63 & 8.37e-1 
			\\
			GDSS & \cmark & 3.78e-1 & 1.01e-2 & 4.42e-1 & 1.11 & 1.70 & 0.27 \\
			DiGress & \cmark  &  9.57e-1 & 2.66e-2 & 1.03  & 8.31e-2 & 2.60e-1 & 1.17e-1   \\
			\midrule
			GraphVAE-MM & \xmark & 5.90e-4 & \textbf{0.00} & 1.60e-3 & 7.95e-3 & 6.33e-2 & 9.24e-2 \\
			SPECTRE & \xmark &  OOM & OOM &OOM&OOM&OOM&OOM\\
			\midrule
			PPGN & \cmark & 1.77e-1 & 1.47e-3 & 1.52e-1 & OOM & OOM & OOM \\
			PPGN+PE & \xmark & 8.48e-2 & 5.37e-2 & 7.94e-3 & OOM & OOM & OOM \\
			\midrule
			SwinTF & \xmark & {2.50e-3} & 8.78e-5 & 1.25e-2 & 4.99e-2 & 1.32e-1 & 1.56e-1 \\
			UNet & \xmark & 9.35e-6 & \textbf{0.00} & 6.91e-5  & 5.48e-2 & 7.26e-2 & 2.71e-1\\
			SwinGNN & \xmark &  \textbf{1.91e-7} & \textbf{0.00} & 6.88e-6 & 1.88e-3 & \textbf{1.55e-2} & 2.54e-3 \\
			SwinGNN-L & \xmark & 2.09e-6 & \textbf{0.00} & \textbf{9.70e-7} & \textbf{1.19e-3} & 1.57e-2 & \textbf{8.60e-4}\\
			\bottomrule
		\end{tabular}
	}
	\label{tab:benchmark_2}
\end{table*}
\subsection{Synthetic Datasets}
\cref{fig:graph_vis} showcases samples generated by various models on ego-small, community-small, and grid datasets. 
The quality of our generated samples is high and comparable to that of auto-regressive models. 
Invariant diffusion models like GDSS can capture structural information for small graphs but fail on larger and more complicated graphs (\eg, grid). 
Quantitative maximum mean discrepancy (MMD) results on various graph statistics are presented in~\cref{tab:benchmark_1} and~\cref{tab:benchmark_2}. 
Our SwinGNN consistently outperforms the baselines across all datasets by several orders of magnitude, and can generate large graphs with around 400 nodes (grid).
Our customized SwinGNN backbone notably outperforms UNet and SwinTF in most metrics, setting itself apart as a more effective solution compared to the powerful visual learning model.

\begin{table*}[t]
	\centering
	\caption{Quantitative results on \textbf{QM9} and \textbf{ZINC250k} molecule datasets.
        }
        \label{tab:molecule}
	\resizebox{\textwidth}{!}{
		\begin{tabular}{l|c|cccc|cccc}
			\toprule
			& \multirow{2}{*}{\shortstack[c]{Invariant\\Loss}} & \multicolumn{4}{c|}{QM9 ($\vert \cV \vert \in [1, 9]$,  134,000 molecules)} & \multicolumn{4}{c}{ZINC250k ($\vert \cV \vert \in [6, 38]$, 250,000 molecules)} \\ 
			Methods & & Valid w/o cor.$\uparrow$ & Unique$\uparrow$ & FCD$\downarrow$ & NSPDK$\downarrow$ & Valid w/o cor.$\uparrow$ & Unique$\uparrow$ & FCD$\downarrow$ & NSPDK$\downarrow$ \\
			\midrule
			GraphAF & \xmark & 57.16 & 83.78  & 5.384 & 2.10e-2 & 68.47 & 99.01 & 16.023 & 4.40e-2 \\
			GraphDF & \xmark & 79.33 & 95.73 & 11.283 & 7.50e-2 &  41.84 & 93.75 &  40.51 & 3.54e-1\\
			GDSS & \cmark & 90.36 & 94.70 & 2.923 & 4.40e-3 & \textbf{97.35} & 99.76 & 11.398 & {1.80e-2} \\
			DiGress & \cmark & 95.43 & 93.78 & 0.643 & 7.28e-4 & 84.94 & 99.21 & 4.88 & 8.75e-3 \\
			\midrule
			SwinGNN (scalar) & \xmark &  99.68 & 95.92 & 0.169 & 4.02e-4 &  87.74 & \textbf{99.98} & 5.219 & 7.52e-3 \\
			SwinGNN (bits) & \xmark &  99.91 & 96.29 & 0.142 & 3.44e-4 &  83.50 & 99.97  & 4.536  & 5.61e-3 \\
			SwinGNN (one-hot) & \xmark &  99.71 & 96.25 & 0.125 & 3.21e-4 &  81.72 & \textbf{99.98} & 5.920  & 6.98e-3 \\
			\midrule
			SwinGNN-L (scalar) & \xmark &  99.88 & \textbf{96.46} & 0.123 & 2.70e-4 &  93.34 & 99.80 & 2.492 & 3.60e-3 \\
			SwinGNN-L (bits) & \xmark &  \textbf{99.97} & 95.88 & \textbf{0.096} & \textbf{2.01e-4} &  90.46 & 99.79  & 2.314  & 2.36e-3 \\
			SwinGNN-L (one-hot) & \xmark &  99.92 & 96.02 & 0.100 & {2.04e-4} & 90.68 & 99.73  & \textbf{1.991} & \textbf{1.64e-3} \\
			\bottomrule
		\end{tabular}
	}
\end{table*}

\subsection{Real-world Datasets}
\paragraph{Protein Dataset.}
We conduct experiments on the DD protein graph dataset, which has a significantly larger number of nodes (up to 500) and dataset size compared to other benchmark datasets.
As shown in \cref{fig:graph_vis}, our model can generate samples visually similar to those from the training set, while previous diffusion models cannot learn the topology well. 
\cref{tab:benchmark_2} demonstrates that the MMD metrics of our model surpass the baselines by several orders of magnitude.
\paragraph{Molecule Datasets.}
Our SwinGNN can be extended to generate molecule graphs with node and edge features (see~\cref{app:node_edge_attr} for details). 
We evaluate our model against baselines on QM9 and ZINC250k using metrics such as validity without correction, uniqueness, Fréchet ChemNet Distance (FCD)~\citep{preuer2018frechet}, and neighborhood subgraph pairwise distance kernel (NSPDK) MMD~\citep{costa2010fast}. 
Notably, our models exhibit substantial improvements in FCD and NSPDK metrics, surpassing the baselines by several orders of magnitude.
We include the experimental results on novelty scores in~\cref{app:novelty_results} for completeness. However, we argue that novelty score may not be a reliable indicator for unconditional generative models, particularly in molecule generation, where novel samples are likely to violate essential chemical principles~\citep{vignac2022digress,vignac2021top}.

\subsection{Ablation Study}
\subsubsection{Model Runtime Efficiency}
\begin{table}[h]
	\centering
	\caption{
		Comparison of {running time} using one NVIDIA RTX 3090 (24 GB) GPU.
	}
		\begin{tabular}{cc|cccccc}
			\toprule
			Method & \#params & Training & Inference \\
			\midrule
			GDSS & 0.37M & 215s & 113s \\
			\midrule
			DiGress & 18.43M & 713s & 205s \\
			\midrule
			Unet & 32.58M & 1573s & 795s \\
			\midrule
			SwinGNN & 15.31M & \textbf{100s} & \textbf{54s} \\
			\midrule
			SwinGNN-L & 35.91M & \textbf{139s} & \textbf{80s} \\
			\bottomrule
		\end{tabular}
	\label{tab:swingnn_time}
\end{table}
In~\cref{tab:swingnn_time}, we compare the training and inference time of various methods also on the grid dataset.
With the maximally allowed batch size for each model, we measure the time for training 100 epochs and for generating a fixed number of samples once (equal to the testing set size).
Our model operates much more efficiently than baselines handling non-downsampled dense tensors, such as GDSS or DiGress.
Specifically, when compared to the DiGress with a similar number of trainable parameters, our model trains approximately 7 times faster and performs inference about 4 times faster on the grid dataset.
Our model (SwinGNN-L) sustains a faster training and sampling time, even with the largest parameter size.
We also compare the GPU memory use in~\cref{app:mem_efficiency}, where our model is more memory-efficient.

\subsubsection{Diffusion Model Training Techniques}
\begin{table}[ht]
	\centering
	\caption{Ablations on various training techniques.}
	\begin{tabular}{ccc|ccc}
		\toprule
		& & & \multicolumn{3}{c}{Grid ($\vert \cV \vert \in [100, 400]$, 100 graphs)} \\
		Model & Self-cond. & EMA & Deg. $\downarrow$ & Clus. $\downarrow$ & Orbit. $\downarrow$ \\
		\midrule
		\multirow{4}{*}{GDSS}  & \cmark & \cmark &  1.82e-1 & 9.70e-3 & 2.28e-1 \\
		& \xmark & \cmark & 1.98e-1 & 9.97e-3 & 2.24e-1 \\
		& \cmark & \xmark & 3.18e-1 & 1.10e-2 & 3.59e-1 \\
		& \xmark & \xmark & 3.78e-1  & 1.01e-2  &  4.42e-1\\
		\midrule
		\multirow{4}{*}{DiGress}  & \cmark & \cmark & 3.89e-1 & 1.23e-2 & 0.55 \\
		& \xmark & \cmark & 4.92e-1 & 1.02e-2 & 0.59 \\
		& \cmark & \xmark & 8.73e-1 & 1.96e-2 & 0.97 \\
		& \xmark & \xmark & 9.57e-1 &   2.66e-2 &  1.03  \\
		\midrule
		\multirow{4}{*}{SwinGNN-EDM}  & \cmark & \cmark & \textbf{1.91e-7} & \textbf{0.00} & \textbf{6.88e-6} \\
		& \xmark & \cmark & 1.14e-5 & 2.15e-5 & 1.58e-5 \\
		& \cmark & \xmark & 5.12e-5 & 2.43e-5 & 3.38e-5 \\
		& \xmark & \xmark & 5.90e-5 & 2.71e-5 & 9.24e-5 \\
		\midrule
		\multirow{2}{*}{SwinGNN-DDPM} & \cmark & \cmark & 2.89e-3 & 1.36e-4 & 3.70e-3 \\
		& \xmark & \xmark & 4.01e-2 & 8.62e-2 & 1.05e-1 \\
		\bottomrule
	\end{tabular}
	\label{tab:grid_ablation}
\end{table} 
In~\cref{tab:grid_ablation}, we perform ablations on grid dataset to assess the impact of self-conditioning and EMA across GDSS, DiGress and our SwinGNN models on the grid dataset.
Both tricks slightly improve the baseline models, but they still lag behind our models.
Additionally, we compare the EDM~\citep{karras2022elucidating} framework, that incorporates SDE modeling and the objective in~\cref{eq:obj_denoiser_edm}, against the vanilla DDPM~\citep{ho2020denoising}, finding that EDM substantially improves performance.
Further, we conduct comparison experiments in~\cref{app:vision_swintf} to showcase our model's superior performance to SwinTF.
Discussions on the window size are available in~\cref{app:effects_of_window_size}.
\section{CONCLUSION}
Invariant graph diffusion models trained with invariant losses face learning challenges from theoretical and empirical aspects.
On the other hand, non-invariant models, despite exhibiting strong empirical performance, struggle with invariant sampling. 
To overcome this, we introduce a simple random permutation technique to help reclaim invariant sampling.
We propose a non-invariant model SwinGNN that efficiently approximates 2-WL message passing and can generate large graphs with high qualities.
Experiments show that our model achieves state-of-the-art performance in a wide range of datasets.
In the future, it is promising to generalize our model to the discrete diffusion framework.

\section*{Acknowledgments and Disclosure of Funding}
This work was funded, in part, by NSERC DG Grants (No. RGPIN-2022-04636 and No. RGPIN-2019-05448), the NSERC Collaborative Research and Development Grant (No. CRDPJ 543676-19), the Vector Institute for AI, Canada CIFAR AI Chair, and Oracle Cloud credits. 
Resources used in preparing this research were provided, in part, by the Province of Ontario, the Government of Canada through the Digital Research Alliance of Canada \url{alliance.can.ca}, and companies sponsoring the Vector Institute \url{www.vectorinstitute.ai/#partners}, Advanced Research Computing at the University of British Columbia, and the Oracle for Research program. 
Additional hardware support was provided by John R. Evans Leaders Fund CFI grant and the Digital Research Alliance of Canada under the Resource Allocation Competition award.

\clearpage
\bibliography{ref}

\begin{thebibliography}{75}
\providecommand{\natexlab}[1]{#1}
\providecommand{\url}[1]{\texttt{#1}}
\expandafter\ifx\csname urlstyle\endcsname\relax
  \providecommand{\doi}[1]{doi: #1}\else
  \providecommand{\doi}{doi: \begingroup \urlstyle{rm}\Url}\fi

\bibitem[Albert \& Barab{\'a}si(2002)Albert and
  Barab{\'a}si]{albert2002statistical}
R{\'e}ka Albert and Albert-L{\'a}szl{\'o} Barab{\'a}si.
\newblock Statistical mechanics of complex networks.
\newblock \emph{Reviews of modern physics}, 74\penalty0 (1):\penalty0 47, 2002.

\bibitem[Ashtiani et~al.(2020)Ashtiani, Ben-David, Harvey, Liaw, Mehrabian, and
  Plan]{ashtiani2020nearoptimal}
Hassan Ashtiani, Shai Ben-David, Nicholas J.~A. Harvey, Christopher Liaw, Abbas
  Mehrabian, and Yaniv Plan.
\newblock Near-optimal sample complexity bounds for robust learning of gaussian
  mixtures via compression schemes.
\newblock \emph{J. ACM}, 67\penalty0 (6), oct 2020.
\newblock ISSN 0004-5411.
\newblock \doi{10.1145/3417994}.
\newblock URL \url{https://doi.org/10.1145/3417994}.

\bibitem[Austin et~al.(2021)Austin, Johnson, Ho, Tarlow, and van~den
  Berg]{austin2021structured}
Jacob Austin, Daniel~D. Johnson, Jonathan Ho, Daniel Tarlow, and Rianne van~den
  Berg.
\newblock Structured denoising diffusion models in discrete state-spaces.
\newblock In \emph{Advances in Neural Information Processing Systems}, 2021.
\newblock URL \url{https://openreview.net/forum?id=h7-XixPCAL}.

\bibitem[Brockschmidt et~al.(2019)Brockschmidt, Allamanis, Gaunt, and
  Polozov]{brockschmidt2018generative}
Marc Brockschmidt, Miltiadis Allamanis, Alexander~L. Gaunt, and Oleksandr
  Polozov.
\newblock Generative code modeling with graphs.
\newblock In \emph{International Conference on Learning Representations}, 2019.
\newblock URL \url{https://openreview.net/forum?id=Bke4KsA5FX}.

\bibitem[Chen et~al.(2023)Chen, ZHANG, and Hinton]{chen2022analog}
Ting Chen, Ruixiang ZHANG, and Geoffrey Hinton.
\newblock Analog bits: Generating discrete data using diffusion models with
  self-conditioning.
\newblock In \emph{The Eleventh International Conference on Learning
  Representations}, 2023.
\newblock URL \url{https://openreview.net/forum?id=3itjR9QxFw}.

\bibitem[Chen et~al.(2019)Chen, Villar, Chen, and Bruna]{chen2019equivalence}
Zhengdao Chen, Soledad Villar, Lei Chen, and Joan Bruna.
\newblock On the equivalence between graph isomorphism testing and function
  approximation with gnns.
\newblock In \emph{Advances in Neural Information Processing Systems},
  volume~32. Curran Associates, Inc., 2019.
\newblock URL
  \url{https://proceedings.neurips.cc/paper_files/paper/2019/file/71ee911dd06428a96c143a0b135041a4-Paper.pdf}.

\bibitem[Chu et~al.(2019)Chu, Li, Acuna, Kar, Shugrina, Wei, Liu, Torralba, and
  Fidler]{chu2019neural}
Hang Chu, Daiqing Li, David Acuna, Amlan Kar, Maria Shugrina, Xinkai Wei,
  Ming-Yu Liu, Antonio Torralba, and Sanja Fidler.
\newblock Neural turtle graphics for modeling city road layouts.
\newblock In \emph{Proceedings of the IEEE/CVF International Conference on
  Computer Vision (ICCV)}, October 2019.

\bibitem[Costa \& Grave(2010)Costa and Grave]{costa2010fast}
Fabrizio Costa and Kurt~De Grave.
\newblock Fast neighborhood subgraph pairwise distance kernel.
\newblock In \emph{Proceedings of the 27th International Conference on
  International Conference on Machine Learning}, ICML'10, pp.\  255–262,
  Madison, WI, USA, 2010. Omnipress.
\newblock ISBN 9781605589077.

\bibitem[De~Cao \& Kipf(2018)De~Cao and Kipf]{decao2018molgan}
Nicola De~Cao and Thomas Kipf.
\newblock {MolGAN: An implicit generative model for small molecular graphs}.
\newblock \emph{ICML 2018 workshop on Theoretical Foundations and Applications
  of Deep Generative Models}, 2018.

\bibitem[Dhariwal \& Nichol(2021)Dhariwal and Nichol]{dhariwal2021diffusion}
Prafulla Dhariwal and Alexander Nichol.
\newblock Diffusion models beat gans on image synthesis.
\newblock In \emph{Advances in Neural Information Processing Systems},
  volume~34, pp.\  8780--8794. Curran Associates, Inc., 2021.
\newblock URL
  \url{https://proceedings.neurips.cc/paper_files/paper/2021/file/49ad23d1ec9fa4bd8d77d02681df5cfa-Paper.pdf}.

\bibitem[Dobson \& Doig(2003)Dobson and Doig]{dobson2003distinguishing}
Paul~D. Dobson and Andrew~J. Doig.
\newblock Distinguishing enzyme structures from non-enzymes without alignments.
\newblock \emph{Journal of Molecular Biology}, 330\penalty0 (4):\penalty0
  771--783, 2003.
\newblock ISSN 0022-2836.
\newblock \doi{https://doi.org/10.1016/S0022-2836(03)00628-4}.
\newblock URL
  \url{https://www.sciencedirect.com/science/article/pii/S0022283603006284}.

\bibitem[Efron(2011)]{efron2011tweedie}
Bradley Efron.
\newblock Tweedie’s formula and selection bias.
\newblock \emph{Journal of the American Statistical Association}, 106\penalty0
  (496):\penalty0 1602--1614, 2011.
\newblock \doi{10.1198/jasa.2011.tm11181}.
\newblock URL \url{https://doi.org/10.1198/jasa.2011.tm11181}.
\newblock PMID: 22505788.

\bibitem[Erd\"{o}s \& R\'{e}nyi(1959)Erd\"{o}s and R\'{e}nyi]{erdos59a}
P.~Erd\"{o}s and A.~R\'{e}nyi.
\newblock On random graphs i.
\newblock \emph{Publicationes Mathematicae Debrecen}, 6:\penalty0 290, 1959.

\bibitem[Fereydounian et~al.(2022)Fereydounian, Hassani, and
  Karbasi]{fereydounian2022what}
Mohammad Fereydounian, Hamed Hassani, and Amin Karbasi.
\newblock What functions can graph neural networks generate?
\newblock \emph{arXiv preprint arXiv:2202.08833}, 2022.

\bibitem[Hagberg et~al.(2008)Hagberg, Swart, and S~Chult]{hagberg2008exploring}
Aric Hagberg, Pieter Swart, and Daniel S~Chult.
\newblock Exploring network structure, dynamics, and function using networkx.
\newblock Technical report, Los Alamos National Lab.(LANL), Los Alamos, NM
  (United States), 2008.

\bibitem[Hamilton(2020)]{hamilton2020graph}
William~L. Hamilton.
\newblock Graph representation learning.
\newblock \emph{Synthesis Lectures on Artificial Intelligence and Machine
  Learning}, 14\penalty0 (3):\penalty0 1--159, 2020.

\bibitem[Han et~al.(2023)Han, Chen, Ruiz, and Liu]{han2023fitting}
Xu~Han, Xiaohui Chen, Francisco~JR Ruiz, and Li-Ping Liu.
\newblock Fitting autoregressive graph generative models through maximum
  likelihood estimation.
\newblock \emph{Journal of Machine Learning Research}, 24\penalty0
  (97):\penalty0 1--30, 2023.

\bibitem[Ho et~al.(2020)Ho, Jain, and Abbeel]{ho2020denoising}
Jonathan Ho, Ajay Jain, and Pieter Abbeel.
\newblock Denoising diffusion probabilistic models.
\newblock In \emph{Advances in Neural Information Processing Systems},
  volume~33, pp.\  6840--6851. Curran Associates, Inc., 2020.
\newblock URL
  \url{https://proceedings.neurips.cc/paper_files/paper/2020/file/4c5bcfec8584af0d967f1ab10179ca4b-Paper.pdf}.

\bibitem[Ho et~al.(2022)Ho, Chan, Saharia, Whang, Gao, Gritsenko, Kingma,
  Poole, Norouzi, Fleet, et~al.]{ho2022imagen}
Jonathan Ho, William Chan, Chitwan Saharia, Jay Whang, Ruiqi Gao, Alexey
  Gritsenko, Diederik~P Kingma, Ben Poole, Mohammad Norouzi, David~J Fleet,
  et~al.
\newblock Imagen video: High definition video generation with diffusion models.
\newblock \emph{arXiv preprint arXiv:2210.02303}, 2022.

\bibitem[Hoogeboom et~al.(2021)Hoogeboom, Nielsen, Jaini, Forr\'{e}, and
  Welling]{hoogeboom2021argmax}
Emiel Hoogeboom, Didrik Nielsen, Priyank Jaini, Patrick Forr\'{e}, and Max
  Welling.
\newblock Argmax flows and multinomial diffusion: Learning categorical
  distributions.
\newblock In \emph{Advances in Neural Information Processing Systems},
  volume~34, pp.\  12454--12465. Curran Associates, Inc., 2021.
\newblock URL
  \url{https://proceedings.neurips.cc/paper_files/paper/2021/file/67d96d458abdef21792e6d8e590244e7-Paper.pdf}.

\bibitem[Hoogeboom et~al.(2022)Hoogeboom, Satorras, Vignac, and
  Welling]{hoogeboom2022equivariant}
Emiel Hoogeboom, V\'{\i}ctor~Garcia Satorras, Cl{\'e}ment Vignac, and Max
  Welling.
\newblock Equivariant diffusion for molecule generation in 3{D}.
\newblock In \emph{Proceedings of the 39th International Conference on Machine
  Learning}, volume 162 of \emph{Proceedings of Machine Learning Research},
  pp.\  8867--8887. PMLR, 17--23 Jul 2022.
\newblock URL \url{https://proceedings.mlr.press/v162/hoogeboom22a.html}.

\bibitem[Huggins et~al.(2018)Huggins, Campbell, Kasprzak, and
  Broderick]{huggins2018practical}
Jonathan~H Huggins, Trevor Campbell, Miko{\l}aj Kasprzak, and Tamara Broderick.
\newblock Practical bounds on the error of bayesian posterior approximations: A
  nonasymptotic approach.
\newblock \emph{arXiv preprint arXiv:1809.09505}, 2018.

\bibitem[Hyv\"{a}rinen(2005)]{hyvarinen2005estimation}
Aapo Hyv\"{a}rinen.
\newblock Estimation of non-normalized statistical models by score matching.
\newblock \emph{J. Mach. Learn. Res.}, 6:\penalty0 695–709, dec 2005.
\newblock ISSN 1532-4435.

\bibitem[Irwin et~al.(2012)Irwin, Sterling, Mysinger, Bolstad, and
  Coleman]{irwin2012zinc}
John~J. Irwin, Teague Sterling, Michael~M. Mysinger, Erin~S. Bolstad, and
  Ryan~G. Coleman.
\newblock Zinc: A free tool to discover chemistry for biology.
\newblock \emph{Journal of Chemical Information and Modeling}, 52\penalty0
  (7):\penalty0 1757--1768, 2012.
\newblock \doi{10.1021/ci3001277}.
\newblock URL \url{https://doi.org/10.1021/ci3001277}.
\newblock PMID: 22587354.

\bibitem[Jin et~al.(2018)Jin, Barzilay, and Jaakkola]{jin2018junction}
Wengong Jin, Regina Barzilay, and Tommi Jaakkola.
\newblock Junction tree variational autoencoder for molecular graph generation.
\newblock In \emph{Proceedings of the 35th International Conference on Machine
  Learning}, volume~80 of \emph{Proceedings of Machine Learning Research}, pp.\
   2323--2332. PMLR, 10--15 Jul 2018.
\newblock URL \url{https://proceedings.mlr.press/v80/jin18a.html}.

\bibitem[Jo et~al.(2022)Jo, Lee, and Hwang]{jo2022scorebased}
Jaehyeong Jo, Seul Lee, and Sung~Ju Hwang.
\newblock Score-based generative modeling of graphs via the system of
  stochastic differential equations.
\newblock In \emph{Proceedings of the 39th International Conference on Machine
  Learning}, volume 162 of \emph{Proceedings of Machine Learning Research},
  pp.\  10362--10383. PMLR, 17--23 Jul 2022.
\newblock URL \url{https://proceedings.mlr.press/v162/jo22a.html}.

\bibitem[Johnson et~al.(2021)Johnson, Austin, van~den Berg, and
  Tarlow]{johnson2021beyond}
Daniel~D. Johnson, Jacob Austin, Rianne van~den Berg, and Daniel Tarlow.
\newblock Beyond in-place corruption: Insertion and deletion in denoising
  probabilistic models.
\newblock In \emph{ICML Workshop on Invertible Neural Networks, Normalizing
  Flows, and Explicit Likelihood Models}, 2021.
\newblock URL \url{https://openreview.net/forum?id=cAsVBUe1Rnj}.

\bibitem[Johnson \& Barron(2004)Johnson and Barron]{johnson2004fisher}
Oliver Johnson and Andrew Barron.
\newblock Fisher information inequalities and the central limit theorem.
\newblock \emph{Probability Theory and Related Fields}, 129\penalty0
  (3):\penalty0 391--409, Jul 2004.
\newblock ISSN 1432-2064.
\newblock \doi{10.1007/s00440-004-0344-0}.
\newblock URL \url{https://doi.org/10.1007/s00440-004-0344-0}.

\bibitem[Karras et~al.(2022)Karras, Aittala, Aila, and
  Laine]{karras2022elucidating}
Tero Karras, Miika Aittala, Timo Aila, and Samuli Laine.
\newblock Elucidating the design space of diffusion-based generative models.
\newblock In \emph{Advances in Neural Information Processing Systems}, 2022.
\newblock URL \url{https://openreview.net/forum?id=k7FuTOWMOc7}.

\bibitem[Keriven \& Vaiter(2023)Keriven and Vaiter]{keriven2023functions}
Nicolas Keriven and Samuel Vaiter.
\newblock What functions can graph neural networks compute on random graphs?
  the role of positional encoding.
\newblock \emph{arXiv preprint arXiv:2305.14814}, 2023.

\bibitem[Kipf \& Welling(2016)Kipf and Welling]{kipf2016variational}
Thomas~N Kipf and Max Welling.
\newblock Variational graph auto-encoders.
\newblock \emph{NIPS Workshop on Bayesian Deep Learning}, 2016.

\bibitem[Kipf \& Welling(2017)Kipf and Welling]{kipf2017semisupervised}
Thomas~N. Kipf and Max Welling.
\newblock Semi-supervised classification with graph convolutional networks.
\newblock In \emph{International Conference on Learning Representations}, 2017.
\newblock URL \url{https://openreview.net/forum?id=SJU4ayYgl}.

\bibitem[Krawczuk et~al.(2021)Krawczuk, Abranches, Loukas, and
  Cevher]{krawczuk2020gg}
Igor Krawczuk, Pedro Abranches, Andreas Loukas, and Volkan Cevher.
\newblock Cg-gan: A geometric graph generative adversarial network, 2021.
\newblock URL \url{https://openreview.net/forum?id=qiAxL3Xqx1o}.

\bibitem[Ley \& Swan(2013)Ley and Swan]{ley2013stein}
Christophe Ley and Yvik Swan.
\newblock {Stein's density approach and information inequalities}.
\newblock \emph{Electronic Communications in Probability}, 18\penalty0
  (none):\penalty0 1 -- 14, 2013.
\newblock \doi{10.1214/ECP.v18-2578}.
\newblock URL \url{https://doi.org/10.1214/ECP.v18-2578}.

\bibitem[Li et~al.(2022)Li, Liu, Sigal, and Liao]{li2022graphpnas}
Muchen Li, Jeffrey~Yunfan Liu, Leonid Sigal, and Renjie Liao.
\newblock Graphpnas: Learning distribution of good neural architectures via
  deep graph generative models.
\newblock \emph{arXiv preprint arXiv:2211.15155}, 2022.

\bibitem[Liao(2022)]{liao2022graph}
Renjie Liao.
\newblock Graph neural networks: graph generation.
\newblock \emph{Graph Neural Networks: Foundations, Frontiers, and
  Applications}, pp.\  225--250, 2022.

\bibitem[Liao et~al.(2019)Liao, Li, Song, Wang, Nash, Hamilton, Duvenaud,
  Urtasun, and Zemel]{liao2020efficient}
Renjie Liao, Yujia Li, Yang Song, Shenlong Wang, Charlie Nash, William~L.
  Hamilton, David Duvenaud, Raquel Urtasun, and Richard Zemel.
\newblock Efficient graph generation with graph recurrent attention networks.
\newblock In \emph{NeurIPS}, 2019.

\bibitem[Lippe \& Gavves(2021)Lippe and Gavves]{lippe2020categorical}
Phillip Lippe and Efstratios Gavves.
\newblock Categorical normalizing flows via continuous transformations.
\newblock In \emph{International Conference on Learning Representations}, 2021.
\newblock URL \url{https://openreview.net/forum?id=-GLNZeVDuik}.

\bibitem[Liu et~al.(2019)Liu, Kumar, Ba, Kiros, and Swersky]{liu2019graph}
Jenny Liu, Aviral Kumar, Jimmy Ba, Jamie Kiros, and Kevin Swersky.
\newblock Graph normalizing flows.
\newblock In \emph{Advances in Neural Information Processing Systems},
  volume~32. Curran Associates, Inc., 2019.
\newblock URL
  \url{https://proceedings.neurips.cc/paper_files/paper/2019/file/1e44fdf9c44d7328fecc02d677ed704d-Paper.pdf}.

\bibitem[Liu et~al.(2021)Liu, Lin, Cao, Hu, Wei, Zhang, Lin, and
  Guo]{liu2021Swin}
Ze~Liu, Yutong Lin, Yue Cao, Han Hu, Yixuan Wei, Zheng Zhang, Stephen Lin, and
  Baining Guo.
\newblock Swin transformer: Hierarchical vision transformer using shifted
  windows.
\newblock In \emph{Proceedings of the IEEE/CVF International Conference on
  Computer Vision (ICCV)}, 2021.

\bibitem[Luo et~al.(2021)Luo, Yan, and Ji]{luo2021graphdf}
Youzhi Luo, Keqiang Yan, and Shuiwang Ji.
\newblock Graphdf: A discrete flow model for molecular graph generation.
\newblock In \emph{Proceedings of the 38th International Conference on Machine
  Learning}, volume 139 of \emph{Proceedings of Machine Learning Research},
  pp.\  7192--7203. PMLR, 18--24 Jul 2021.
\newblock URL \url{https://proceedings.mlr.press/v139/luo21a.html}.

\bibitem[Madhawa et~al.(2019)Madhawa, Ishiguro, Nakago, and
  Abe]{madhawa2019graphnvp}
Kaushalya Madhawa, Katushiko Ishiguro, Kosuke Nakago, and Motoki Abe.
\newblock Graphnvp: An invertible flow model for generating molecular graphs.
\newblock \emph{arXiv preprint arXiv:1905.11600}, 2019.

\bibitem[Mahdavi et~al.(2023)Mahdavi, Swersky, Kipf, Hashemi, Thrampoulidis,
  and Liao]{mahdavi2023towards}
Sadegh Mahdavi, Kevin Swersky, Thomas Kipf, Milad Hashemi, Christos
  Thrampoulidis, and Renjie Liao.
\newblock Towards better out-of-distribution generalization of neural
  algorithmic reasoning tasks.
\newblock \emph{Transactions on Machine Learning Research}, 2023.
\newblock ISSN 2835-8856.
\newblock URL \url{https://openreview.net/forum?id=xkrtvHlp3P}.

\bibitem[Maron et~al.(2019{\natexlab{a}})Maron, Ben-Hamu, Serviansky, and
  Lipman]{maron2020provably}
Haggai Maron, Heli Ben-Hamu, Hadar Serviansky, and Yaron Lipman.
\newblock Provably powerful graph networks.
\newblock In \emph{Advances in Neural Information Processing Systems},
  volume~32. Curran Associates, Inc., 2019{\natexlab{a}}.
\newblock URL
  \url{https://proceedings.neurips.cc/paper_files/paper/2019/file/bb04af0f7ecaee4aae62035497da1387-Paper.pdf}.

\bibitem[Maron et~al.(2019{\natexlab{b}})Maron, Ben-Hamu, Shamir, and
  Lipman]{maron2019invariant}
Haggai Maron, Heli Ben-Hamu, Nadav Shamir, and Yaron Lipman.
\newblock Invariant and equivariant graph networks.
\newblock In \emph{International Conference on Learning Representations},
  2019{\natexlab{b}}.
\newblock URL \url{https://openreview.net/forum?id=Syx72jC9tm}.

\bibitem[Martinkus et~al.(2022)Martinkus, Loukas, Perraudin, and
  Wattenhofer]{martinkus2022spectre}
Karolis Martinkus, Andreas Loukas, Nathana{\"e}l Perraudin, and Roger
  Wattenhofer.
\newblock {SPECTRE}: Spectral conditioning helps to overcome the expressivity
  limits of one-shot graph generators.
\newblock In \emph{Proceedings of the 39th International Conference on Machine
  Learning}, volume 162 of \emph{Proceedings of Machine Learning Research},
  pp.\  15159--15179. PMLR, 17--23 Jul 2022.
\newblock URL \url{https://proceedings.mlr.press/v162/martinkus22a.html}.

\bibitem[Mercado et~al.(2021)Mercado, Rastemo, Lindelöf, Klambauer, Engkvist,
  Chen, and Bjerrum]{mercado2021graph}
Rocío Mercado, Tobias Rastemo, Edvard Lindelöf, Günter Klambauer, Ola
  Engkvist, Hongming Chen, and Esben~Jannik Bjerrum.
\newblock Graph networks for molecular design.
\newblock \emph{Machine Learning: Science and Technology}, 2\penalty0
  (2):\penalty0 025023, mar 2021.
\newblock \doi{10.1088/2632-2153/abcf91}.
\newblock URL \url{https://dx.doi.org/10.1088/2632-2153/abcf91}.

\bibitem[Morris et~al.(2019)Morris, Ritzert, Fey, Hamilton, Lenssen, Rattan,
  and Grohe]{morris2021weisfeilera}
Christopher Morris, Martin Ritzert, Matthias Fey, William~L. Hamilton, Jan~Eric
  Lenssen, Gaurav Rattan, and Martin Grohe.
\newblock Weisfeiler and leman go neural: Higher-order graph neural networks.
\newblock \emph{Proceedings of the AAAI Conference on Artificial Intelligence},
  33\penalty0 (01):\penalty0 4602--4609, Jul. 2019.
\newblock \doi{10.1609/aaai.v33i01.33014602}.
\newblock URL \url{https://ojs.aaai.org/index.php/AAAI/article/view/4384}.

\bibitem[Morris et~al.(2021)Morris, Lipman, Maron, Rieck, Kriege, Grohe, Fey,
  and Borgwardt]{morris2021weisfeiler}
Christopher Morris, Yaron Lipman, Haggai Maron, Bastian Rieck, Nils~M Kriege,
  Martin Grohe, Matthias Fey, and Karsten Borgwardt.
\newblock Weisfeiler and leman go machine learning: The story so far.
\newblock \emph{arXiv preprint arXiv:2112.09992}, 2021.

\bibitem[Nichol \& Dhariwal(2021)Nichol and Dhariwal]{nichol2021improved}
Alexander~Quinn Nichol and Prafulla Dhariwal.
\newblock Improved denoising diffusion probabilistic models.
\newblock In \emph{Proceedings of the 38th International Conference on Machine
  Learning}, volume 139 of \emph{Proceedings of Machine Learning Research},
  pp.\  8162--8171. PMLR, 18--24 Jul 2021.
\newblock URL \url{https://proceedings.mlr.press/v139/nichol21a.html}.

\bibitem[Niu et~al.(2020)Niu, Song, Song, Zhao, Grover, and
  Ermon]{niu2020permutation}
Chenhao Niu, Yang Song, Jiaming Song, Shengjia Zhao, Aditya Grover, and Stefano
  Ermon.
\newblock Permutation invariant graph generation via score-based generative
  modeling.
\newblock In \emph{Proceedings of the Twenty Third International Conference on
  Artificial Intelligence and Statistics}, volume 108 of \emph{Proceedings of
  Machine Learning Research}, pp.\  4474--4484. PMLR, 26--28 Aug 2020.
\newblock URL \url{https://proceedings.mlr.press/v108/niu20a.html}.

\bibitem[Preuer et~al.(2018)Preuer, Renz, Unterthiner, Hochreiter, and
  Klambauer]{preuer2018frechet}
Kristina Preuer, Philipp Renz, Thomas Unterthiner, Sepp Hochreiter, and Günter
  Klambauer.
\newblock Fréchet chemnet distance: A metric for generative models for
  molecules in drug discovery.
\newblock \emph{Journal of Chemical Information and Modeling}, 58\penalty0
  (9):\penalty0 1736--1741, 2018.
\newblock \doi{10.1021/acs.jcim.8b00234}.
\newblock URL \url{https://doi.org/10.1021/acs.jcim.8b00234}.
\newblock PMID: 30118593.

\bibitem[Ramakrishnan et~al.(2014)Ramakrishnan, Dral, Rupp, and von
  Lilienfeld]{ramakrishnan2014quantum}
Raghunathan Ramakrishnan, Pavlo~O. Dral, Matthias Rupp, and O.~Anatole von
  Lilienfeld.
\newblock Quantum chemistry structures and properties of 134 kilo molecules.
\newblock \emph{Scientific Data}, 1\penalty0 (1):\penalty0 140022, Aug 2014.
\newblock ISSN 2052-4463.
\newblock \doi{10.1038/sdata.2014.22}.
\newblock URL \url{https://doi.org/10.1038/sdata.2014.22}.

\bibitem[Sen et~al.(2008)Sen, Namata, Bilgic, Getoor, Galligher, and
  Eliassi-Rad]{sen2008collective}
Prithviraj Sen, Galileo Namata, Mustafa Bilgic, Lise Getoor, Brian Galligher,
  and Tina Eliassi-Rad.
\newblock Collective classification in network data.
\newblock \emph{AI Magazine}, 29\penalty0 (3):\penalty0 93, Sep. 2008.
\newblock \doi{10.1609/aimag.v29i3.2157}.
\newblock URL
  \url{https://ojs.aaai.org/aimagazine/index.php/aimagazine/article/view/2157}.

\bibitem[Shan et~al.(2023)Shan, Liu, Zhang, Wang, Han, Wang, Ma, and
  Gao]{shan2023diffusion}
Wenkang Shan, Zhenhua Liu, Xinfeng Zhang, Zhao Wang, Kai Han, Shanshe Wang,
  Siwei Ma, and Wen Gao.
\newblock Diffusion-based 3d human pose estimation with multi-hypothesis
  aggregation.
\newblock \emph{arXiv preprint arXiv:2303.11579}, 2023.

\bibitem[Shi* et~al.(2020)Shi*, Xu*, Zhu, Zhang, Zhang, and
  Tang]{shi2020graphaf}
Chence Shi*, Minkai Xu*, Zhaocheng Zhu, Weinan Zhang, Ming Zhang, and Jian
  Tang.
\newblock Graphaf: a flow-based autoregressive model for molecular graph
  generation.
\newblock In \emph{International Conference on Learning Representations}, 2020.
\newblock URL \url{https://openreview.net/forum?id=S1esMkHYPr}.

\bibitem[Simonovsky \& Komodakis(2018)Simonovsky and
  Komodakis]{simonovsky2018graphvae}
Martin Simonovsky and Nikos Komodakis.
\newblock Graphvae: Towards generation of small graphs using variational
  autoencoders.
\newblock In \emph{Artificial Neural Networks and Machine Learning--ICANN 2018:
  27th International Conference on Artificial Neural Networks, Rhodes, Greece,
  October 4-7, 2018, Proceedings, Part I 27}, pp.\  412--422. Springer, 2018.

\bibitem[Sohl-Dickstein et~al.(2015)Sohl-Dickstein, Weiss, Maheswaranathan, and
  Ganguli]{sohl-dickstein2015deep}
Jascha Sohl-Dickstein, Eric Weiss, Niru Maheswaranathan, and Surya Ganguli.
\newblock Deep unsupervised learning using nonequilibrium thermodynamics.
\newblock In \emph{Proceedings of the 32nd International Conference on Machine
  Learning}, volume~37 of \emph{Proceedings of Machine Learning Research}, pp.\
   2256--2265, Lille, France, 07--09 Jul 2015. PMLR.
\newblock URL \url{https://proceedings.mlr.press/v37/sohl-dickstein15.html}.

\bibitem[Song et~al.(2021{\natexlab{a}})Song, Meng, and
  Ermon]{song2021denoising}
Jiaming Song, Chenlin Meng, and Stefano Ermon.
\newblock Denoising diffusion implicit models.
\newblock In \emph{International Conference on Learning Representations},
  2021{\natexlab{a}}.
\newblock URL \url{https://openreview.net/forum?id=St1giarCHLP}.

\bibitem[Song \& Ermon(2020)Song and Ermon]{song2020improved}
Yang Song and Stefano Ermon.
\newblock Improved techniques for training score-based generative models.
\newblock In \emph{Advances in Neural Information Processing Systems},
  volume~33, pp.\  12438--12448. Curran Associates, Inc., 2020.
\newblock URL
  \url{https://proceedings.neurips.cc/paper_files/paper/2020/file/92c3b916311a5517d9290576e3ea37ad-Paper.pdf}.

\bibitem[Song et~al.(2021{\natexlab{b}})Song, Sohl-Dickstein, Kingma, Kumar,
  Ermon, and Poole]{song2021scorebased}
Yang Song, Jascha Sohl-Dickstein, Diederik~P Kingma, Abhishek Kumar, Stefano
  Ermon, and Ben Poole.
\newblock Score-based generative modeling through stochastic differential
  equations.
\newblock In \emph{International Conference on Learning Representations},
  2021{\natexlab{b}}.
\newblock URL \url{https://openreview.net/forum?id=PxTIG12RRHS}.

\bibitem[Suhail et~al.(2021)Suhail, Mittal, Siddiquie, Broaddus, Eledath,
  Medioni, and Sigal]{suhail2021energy}
Mohammed Suhail, Abhay Mittal, Behjat Siddiquie, Chris Broaddus, Jayan Eledath,
  Gerard Medioni, and Leonid Sigal.
\newblock Energy-based learning for scene graph generation.
\newblock In \emph{Proceedings of the IEEE/CVF Conference on Computer Vision
  and Pattern Recognition (CVPR)}, pp.\  13936--13945, June 2021.

\bibitem[Suresh et~al.(2014)Suresh, Orlitsky, Acharya, and
  Jafarpour]{acharya2014nearoptimalsample}
Ananda~Theertha Suresh, Alon Orlitsky, Jayadev Acharya, and Ashkan Jafarpour.
\newblock Near-optimal-sample estimators for spherical gaussian mixtures.
\newblock In \emph{Advances in Neural Information Processing Systems},
  volume~27. Curran Associates, Inc., 2014.
\newblock URL
  \url{https://proceedings.neurips.cc/paper_files/paper/2014/file/c0f168ce8900fa56e57789e2a2f2c9d0-Paper.pdf}.

\bibitem[Süli \& Mayers(2003)Süli and Mayers]{suli2003introduction}
Endre Süli and David~F. Mayers.
\newblock \emph{An Introduction to Numerical Analysis}.
\newblock Cambridge University Press, 2003.
\newblock \doi{10.1017/CBO9780511801181}.

\bibitem[Vaswani et~al.(2017)Vaswani, Shazeer, Parmar, Uszkoreit, Jones, Gomez,
  Kaiser, and Polosukhin]{vaswani2017attention}
Ashish Vaswani, Noam Shazeer, Niki Parmar, Jakob Uszkoreit, Llion Jones,
  Aidan~N Gomez, {\L}ukasz Kaiser, and Illia Polosukhin.
\newblock Attention is all you need.
\newblock \emph{Advances in neural information processing systems}, 30, 2017.

\bibitem[Vignac \& Frossard(2022)Vignac and Frossard]{vignac2021top}
Clement Vignac and Pascal Frossard.
\newblock Top-n: Equivariant set and graph generation without exchangeability.
\newblock In \emph{International Conference on Learning Representations}, 2022.
\newblock URL \url{https://openreview.net/forum?id=-Gk_IPJWvk}.

\bibitem[Vignac et~al.(2023)Vignac, Krawczuk, Siraudin, Wang, Cevher, and
  Frossard]{vignac2022digress}
Clement Vignac, Igor Krawczuk, Antoine Siraudin, Bohan Wang, Volkan Cevher, and
  Pascal Frossard.
\newblock Digress: Discrete denoising diffusion for graph generation.
\newblock In \emph{The Eleventh International Conference on Learning
  Representations}, 2023.
\newblock URL \url{https://openreview.net/forum?id=UaAD-Nu86WX}.

\bibitem[Vincent(2011)]{vincent2011connection}
Pascal Vincent.
\newblock {A Connection Between Score Matching and Denoising Autoencoders}.
\newblock \emph{Neural Computation}, 23\penalty0 (7):\penalty0 1661--1674, 07
  2011.
\newblock ISSN 0899-7667.
\newblock \doi{10.1162/NECO_a_00142}.
\newblock URL \url{https://doi.org/10.1162/NECO\_a\_00142}.

\bibitem[Watts \& Strogatz(1998)Watts and Strogatz]{watts1998collective}
Duncan~J. Watts and Steven~H. Strogatz.
\newblock Collective dynamics of `small-world' networks.
\newblock \emph{Nature}, 393\penalty0 (6684):\penalty0 440--442, Jun 1998.
\newblock ISSN 1476-4687.
\newblock \doi{10.1038/30918}.
\newblock URL \url{https://doi.org/10.1038/30918}.

\bibitem[Xiao et~al.(2018)Xiao, Liu, Zhou, Jiang, and Sun]{xiao2018unified}
Tete Xiao, Yingcheng Liu, Bolei Zhou, Yuning Jiang, and Jian Sun.
\newblock Unified perceptual parsing for scene understanding.
\newblock In \emph{Proceedings of the European conference on computer vision
  (ECCV)}, pp.\  418--434, 2018.

\bibitem[Xu et~al.(2019)Xu, Hu, Leskovec, and Jegelka]{xu2018how}
Keyulu Xu, Weihua Hu, Jure Leskovec, and Stefanie Jegelka.
\newblock How powerful are graph neural networks?
\newblock In \emph{International Conference on Learning Representations}, 2019.
\newblock URL \url{https://openreview.net/forum?id=ryGs6iA5Km}.

\bibitem[Xu et~al.(2022)Xu, Yu, Song, Shi, Ermon, and Tang]{xu2022geodiff}
Minkai Xu, Lantao Yu, Yang Song, Chence Shi, Stefano Ermon, and Jian Tang.
\newblock Geodiff: A geometric diffusion model for molecular conformation
  generation.
\newblock In \emph{International Conference on Learning Representations}, 2022.
\newblock URL \url{https://openreview.net/forum?id=PzcvxEMzvQC}.

\bibitem[Yang et~al.(2022)Yang, Zhang, Song, Hong, Xu, Zhao, Shao, Zhang, Cui,
  and Yang]{yang2022diffusion}
Ling Yang, Zhilong Zhang, Yang Song, Shenda Hong, Runsheng Xu, Yue Zhao,
  Yingxia Shao, Wentao Zhang, Bin Cui, and Ming-Hsuan Yang.
\newblock Diffusion models: A comprehensive survey of methods and applications.
\newblock \emph{arXiv preprint arXiv:2209.00796}, 2022.

\bibitem[You et~al.(2018)You, Ying, Ren, Hamilton, and
  Leskovec]{you2018graphrnn}
Jiaxuan You, Rex Ying, Xiang Ren, William Hamilton, and Jure Leskovec.
\newblock {G}raph{RNN}: Generating realistic graphs with deep auto-regressive
  models.
\newblock In \emph{Proceedings of the 35th International Conference on Machine
  Learning}, volume~80 of \emph{Proceedings of Machine Learning Research}, pp.\
   5708--5717. PMLR, 10--15 Jul 2018.
\newblock URL \url{https://proceedings.mlr.press/v80/you18a.html}.

\bibitem[Zahirnia et~al.(2022)Zahirnia, Schulte, Naddaf, and
  Li]{zahirnia2022micro}
Kiarash Zahirnia, Oliver Schulte, Parmis Naddaf, and Ke~Li.
\newblock Micro and macro level graph modeling for graph variational
  auto-encoders.
\newblock In \emph{Advances in Neural Information Processing Systems}, 2022.
\newblock URL \url{https://openreview.net/forum?id=zUbMHIxszNp}.

\end{thebibliography}
\bibliographystyle{tmlr}

\clearpage
\appendix
\section*{Appendix}
\startcontents
\subsection*{TABLE OF CONTENTS} 
\printcontents{l}{1}{\setcounter{tocdepth}{2}}
\newpage
\section{\MakeUppercase{Proofs and Additional Theoretical Analysis}}
\label{app:theoretics}
\subsection{Proof of\texorpdfstring{~\cref{lem:effective_target_dist}}.}
\effectiveTargetDist*
\begin{figure}
	\centering
	\begin{minipage}{0.28\textwidth}
		\centering
			\begin{tikzpicture}
				\draw[fill=gray, fill opacity=0.5] (1,0) circle (1.cm);
				\node at (0.5, 0) {$\cA^*$};
				\draw[fill=blue, fill opacity=0.5] (1.7,0) circle (0.3cm);
				\node at (1.7, 0) {$\cA$};
				\draw[fill=red, fill opacity=0.5] (3.5,0) circle (1.cm); 
				\node at (3.5, 0) {$\mathcal{Q}$};
			\end{tikzpicture}
	\end{minipage}
	\hspace{0.10\textwidth}
	\begin{minipage}{0.28\textwidth}
		\centering
		\begin{tikzpicture}
			\draw[fill=gray, fill opacity=0.5] (1,0) circle (1.cm);
			\node at (0.5, 0) {$\cA^*$};
			\draw[fill=blue, fill opacity=0.5] (1.7,0) circle (0.3cm);
			\node at (1.7, 0) {$\cA$};
			\draw[fill=red, fill opacity=0.5] (2.85,0) circle (1.cm);
			\node at (2.85, 0) {$\mathcal{Q}$};
		\end{tikzpicture}
	\end{minipage}
	\hspace{0.01\textwidth}
	\begin{minipage}{0.28\textwidth}
		\centering
		\begin{tikzpicture}
			\draw[fill=gray, fill opacity=0.5] (1,0) circle (1.cm);
			\draw[fill=orange, fill opacity=0.5] (1,0) circle (1.cm); 
			\node at (0.5, 0) {$\cA^* / \mathcal{Q}$};
			\draw[fill=blue, fill opacity=0.5] (1.7,0) circle (0.3cm);
			\node at (1.7, 0) {$\cA$};
		\end{tikzpicture}
	\end{minipage}
	\caption{
		Different configurations of $TV(q, \pdata)$, given fixed $\cA$ (the support of $\pdata$) and its induced $\cA^*$. Here, we modify $\mathcal{Q}$ (the support of $q$).
		\textbf{Left}: maximal TV, $\mathcal{Q}$ is disjoint from $\cA/\cA^*$.
		\textbf{Middle}: intermediate TV, intersecting $\mathcal{Q}$ and $\cA/\cA^*$.
		\textbf{Right}: minimal TV, $\mathcal{Q} = \cA^*$.
	}
	\vspace{-0.5cm}
	\label{fig:tv_tikz}
\end{figure}
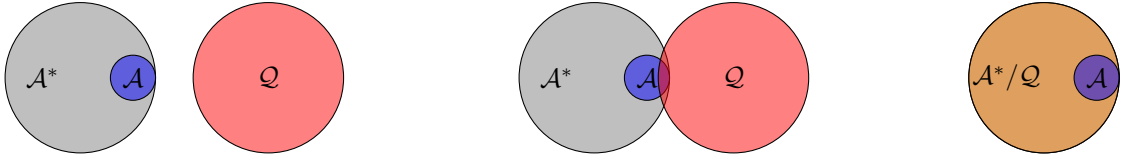

\begin{proof}
	Recall the definitions on $\pdata$ and $\pdata^*$ in our context:
        \vspace{-1.0em}
	\begin{align*}
	\pdata(\bA) = \frac{1}{m} \sum_{i=1}^{m} \delta(\bA - \bA_i), \pdata^*(\bA) = \frac{1}{Z} \sum_{\bA^*_i \in \cA^*} \delta(\bA - \bA^*_i)
	\end{align*}

        \vspace{-1.5em}
	In other words, $\pdata$ and $\pdata^*$ are both discrete uniform distributions.
	We use $\cA$ and $\cA^* = \cI_{\bA_1} \cup \cI_{\bA_2} \cdots \cup \cI_{\bA_m}$ to denote the support of $\pdata$ and $\pdata^*$ respectively.
	Let $TV^* \coloneqq TV(\pdata^*, \pdata)$ be the ``golden standard" total variation distance that we aim to outperform.

	Let $q \in \cP$ without loss of generality.
	As $q$ is permutation invariant, it must assign equal probability to graphs in the same isomorphism class.
	We denote the support of $q$ by $\mathcal{Q} = \{\cI_{q_i}\}_{i=1}^\Phi$.
	$q(\bA) = \sum_{i=1}^{\Phi} \rho_i \sum_{\bA_i \in \cI_{q_i}} \delta(\bA - \bA_i)$, where $\sum_{i=1}^{\Phi} \rho_i |\cI_{q_i}| = 1, \rho_i > 0$, and $\Phi >0$ stand for the number of isomorphism classes contained in $q$.
	\cref{fig:tv_tikz} summarizes the possibilities of TV used in our proof.
	
	\textbf{Proof of $\Omega(n!)$ Modes.}
	This first part of the lemma imposes no constraints on $\cP$: we do not require distributions in $\cP$ to be uniform on $\cA$.
	We first prove two helpful claims and then use proof by contradiction to prove our result on $\Omega(n!)$ modes.
	
	Claim 1: The maximal $TV(q, \pdata)$ is achieved when $\mathcal{Q}$ and $\cA^*$ are disjoint (so are $\mathcal{Q}$ and $\cA$).
	
	Proof of Claim 1: Without loss of generality, the TV distance is:
        \vspace{-1.0em}
	\begin{align*}
		TV(q, \pdata) &= \sum_{i=1}^{\Phi} \left(\sum_{\bA \in \cI_{q_i} \cap \cA} \left|\rho_i - \frac{1}{|\cA|}\right| 
		+ \sum_{\bA \in \cI_{q_i} , \bA \notin \cA} \rho_i\right)
		+ \sum_{\bA \notin \mathcal{Q} , \bA \in \cA} \frac{1}{|\cA|}  \\
		& \leq \sum_{i=1}^{\Phi} \left(\sum_{\bA \in \cI_{q_i} \cap \cA} (\rho_i + \frac{1}{|\cA|})
		+ \sum_{\bA \in \cI_{q_i} , \bA \notin \cA} \rho_i \right)
		+ \sum_{\bA \notin \mathcal{Q} , \bA \in \cA} \frac{1}{|\cA|}
		\tag{\small triangle inequality} \\
		&= \sum_{i=1}^{\Phi} \left(\sum_{\bA \in \cI_{q_i} \cap \cA} \frac{1}{|\cA|} + \sum_{\bA \in \cI_{q_i}} \rho_i \right)
		+ \sum_{\bA \notin \mathcal{Q} , \bA \in \cA} \frac{1}{|\cA|}  \\
		&= \sum_{i=1}^{\Phi} \sum_{\bA \in \cI_{q_i}} \rho_i +  \sum_{\bA \in \cA} \frac{1}{|\cA|} = 2
	\end{align*}
 
        \vspace{-1.0em}
	The triangle inequality is strict when $\sum_{i=1}^{\Phi} |\cI_{q_i} \cap \cA| > 0 $, as $\rho_i$ and $\frac{1}{|\cA|}$ are always positive, and $|\rho_i - \frac{1}{|\cA|}| < \rho_i + \frac{1}{|\cA|}$ holds.
	If $\mathcal{Q}$ and $\cA^*$ are disjoint, $TV(q, \pdata) = \sum_{i=1}^{\Phi}|\cI_{q_i}| \times \rho_i + \frac{1}{|\cA|} \times |\cA| = 2,$ meaning that the maximum TV distance is achieved when there is no intersection between $\mathcal{Q}$ and $\cA^*$.
	
	Claim 2: It is always possible to have a $q \in \cP$ with $TV(q, \pdata) < 2$ by allowing $\mathcal{Q}$ and $\cA^*$ to intersect on some isomorphism classes $\{\cI_{q_i}\}_{i\in \nu}$.
	Notice $\cI_{q_i} \cap \cA \neq \emptyset, \forall i \in \nu$ as per isomorphism.

	Proof of Claim 2:
        \vspace{-1.5em}
	\begin{align*}
		TV(q, \pdata) &= \sum_{i=1}^{\Phi} \left(\sum_{\bA \in \cI_{q_i} \cap \cA} \left|\rho_i - \frac{1}{|\cA|}\right| 
		+ \sum_{\bA \in \cI_{q_i} , \bA \notin \cA} \rho_i\right)
		+ \sum_{\bA \notin \mathcal{Q} , \bA \in \cA} \frac{1}{|\cA|}  \\
		&= \sum_{i\in \nu} \left(\sum_{\bA \in \cI_{q_i} \cap \cA} \left|\rho_i - \frac{1}{|\cA|}\right| 
		+ \sum_{\bA \in \cI_{q_i} , \bA \notin \cA} \rho_i\right)
		+ \sum_{i \notin \nu} \sum_{\bA \in \cI_{q_i}} \rho_i
		+ \sum_{\bA \notin \mathcal{Q} , \bA \in \cA} \frac{1}{|\cA|} \\
		&= \sum_{i\in \nu} \left( |\cI_{q_i} \cap \cA| \left|\rho_i - \frac{1}{|\cA|}\right| 
		+ (|\cI_{q_i}| - |\cI_{q_i} \cap \cA|) \rho_i
		\right)
		+ \sum_{i \notin \nu} \rho_i |\cI_{q_i}|
		+ \sum_{\bA \notin \mathcal{Q} , \bA \in \cA} \frac{1}{|\cA|} \\
		&= \sum_{i\in \nu} |\cI_{q_i} \cap \cA|  \left(\left|\rho_i - \frac{1}{|\cA|}\right| 
		- \rho_i
		\right)
		+ \overbrace{\sum_{i =1}^{\Phi} \rho_i |\cI_{q_i}|}^{1}
		+ \overbrace{\sum_{\bA \in \cA} \frac{1}{|\cA|}}^{1} 
		- \sum_{\bA \in \mathcal{Q} \cap \cA} \frac{1}{|\cA|} \\
		&= \sum_{i\in \nu} |\cI_{q_i} \cap \cA|  \left(\left|\rho_i - \frac{1}{|\cA|}\right| 
		- \rho_i
		\right)
		- \sum_{i =1}^{\Phi} |\cI_{q_i} \cap \cA| \frac{1}{|\cA|} 
		+ 2 \\
		&= \sum_{i\in \nu} |\cI_{q_i} \cap \cA|  \overbrace{\left(\left|\rho_i - \frac{1}{|\cA|}\right| 
			- \rho_i - \frac{1}{|\cA|}
			\right)}^{<0}
		- \sum_{i \notin \nu} 
		\overbrace{|\cI_{q_i} \cap \cA|}^{=0 \text{ if } i \notin \nu} \frac{1}{|\cA|} 
		+ 2 \\
      &= \sum_{i\in \nu} |\cI_{q_i} \cap \cA|  \overbrace{\left(\left|\rho_i - \frac{1}{|\cA|}\right| 
    			- \rho_i - \frac{1}{|\cA|}
    			\right)}^{<0}
    		+ 2 < 2
	\end{align*}
 
        \vspace{-1.5em}
	The last inequality is strict as $\rho_i$, $\frac{1}{|\cA|}$ and $|\cI_{q_i} \cap \cA|$ are all strictly positive for $i \in \nu$.
	
	Claim 3: Let $q^* \in \argmin_{q \in \cP}TV(q, \pdata)$ be a minizizer whose support is $\mathcal{Q}^*$.
	$\mathcal{Q}^*$ and $\cA^*$ must not be disjoint, \ie,	$\mathcal{Q}^*$ intersects $\cA^*$ for at least one isomorphism class in $\cA^*$.
	
	Proof of Claim 3:
	Assume $\mathcal{Q}^*$ and $\cA^*$ have no intersection, then $\min_{q \in \cP}TV(q, \pdata) = TV(q^*, \pdata) = 2$, which is validated in Claim 1. 
	From Claim 2, we know there must exist another $q^\dagger \in \cP$ with $TV(q^\dagger, \pdata) < 2.$
	Therefore, $\min_{q \in \cP}TV(q, \pdata) \leq TV(q^\dagger, \pdata) < 2$, which contradicts $\min_{q \in \cP}TV(q, \pdata) = 2$.	
    To minimize $TV(q, \pdata)$, one can enlarge the set $\nu$ so that the intersection covers $\cA$.
	Consequently, the optimal $|\mathcal{Q}^*|$ has a lower bound $\Omega(n!)$ that does not depend on the size of empirical data distribution $|\cA|$.

	\textbf{Proof of Optimality of $\pdata^*$ when $\cP$ is Discrete Uniform on a Superset of $\cA$.}
	
	Now we proceed to prove the second part of our lemma, which is a special case when $\cP$ has some constraints w.r.t. $\cA$.
	Assume $\cA = \{\bA_{i}\}_{i=1}^m$ consists of $m$ graphs (adjacency matrices) belonging to $k$ isomorphic equivalence classes, where $m \geq k$ due to some potential isomorphic graphs.
	Subsequently, let $\cA^*$ have $l$ adjacency matrices for all $k$ equivalence classes ($l \geq m$), \ie, $\cA^* = \cup_{i=1}^m\cI_{\bA_i} = \cup_{i=1}^k \cI_{c_i}$, where
	$\{\cI_{c_i}\}_{i=1}^k$ denote $k$ equivalence classes.
	
	Further, let $q_\gamma \in \cP$ and let $\mathcal{Q}_\gamma \supseteq \cA$ be its support.
	That is,
	$q_\gamma =  \frac{1}{|\mathcal{Q}_\gamma|} \sum_{\bA_i \in \mathcal{Q}_\gamma} \delta(\bA - \bA_i),$
	where $|\mathcal{Q}_\gamma| \geq m$.
	The total variation distance is:
	\begin{align*}
		TV^* &= |\frac{1}{m} - \frac{1}{l}| \times m + \frac{1}{l} (l-m) = 2(1 - \frac{m}{l}), \\
		TV(q_\gamma, \pdata) &= \left|\frac{1}{|\mathcal{Q}_\gamma|} - \frac{1}{m} \right| \times m + \left|\frac{|\mathcal{Q}_\gamma| - m}{|\mathcal{Q}_\gamma|} \right| = 2\left(1 - \frac{m}{|\mathcal{Q}_\gamma|}\right).
	\end{align*}
	To minimize $TV(q_\gamma, \pdata)$ over $q_\gamma$, we need to minimize $|\mathcal{Q}_\gamma|$.
	Since $\cA \subseteq \mathcal{Q}_\gamma$ and $q_\gamma$ is permutation invariant, the smallest $|\mathcal{Q}_\gamma|$ would be $|\cup_{i=1}^m\cI_{\bA_i}| = |\cup_{i=1}^k \cI_{c_i}| = |\cA^*| = l$ .
	Therefore, we conclude that 
	$\min\limits_{q_\gamma \in \cP} TV(q_\gamma, \pdata) = TV^* = 2(1 - \frac{m}{l}),$ and 
	$\argmin\limits_{q_\gamma \in \cP} TV(q_\gamma, \pdata) = \pdata^*.$
	
	\clearpage
	\textbf{Justification on the Constraints of $\cP$ to Guarantee the Optimality of $\pdata^*$.}
	
	In the end, we justify the reason why $\cP$ has to be discrete uniform on a superset of $\cA$ (\ie, assign equal probability to each element in $\cA$) for the second part of the lemma to hold.
	We list all possible conditions in the table below and give concrete counterexamples for the cases where the optimality of $\pdata^*$ is no longer true, \ie, $\pdata^* \notin \argmin\limits_{q \in \cP} TV(q, \pdata)$ or equivalently $\min\limits_{q \in \cP} TV(q, \pdata) < TV^* =  TV(\pdata^*, \pdata).$
	For ease of proof, we further divide $\pdata$ into two categories based on the existence of isomorphic graphs.
	\begin{table}[ht]
		\centering
			\resizebox{\textwidth}{!}{
				\begin{tabular}{cl|ll}
					\toprule
					& & \multicolumn{2}{c}{$\pdata$ conditions} \\
					& & $\cA$ has isomorphic graphs & $\cA$ does not have isomorphic graphs \\
					\midrule
					\multirow{4}{*}{$\cP$ conditions} & support contains $\cA$ and uniform & \multicolumn{2}{c}{ Our proof in the above: True, $\min\limits_{q \in \cP} TV(q, \pdata) = TV^*$}\\
					& support contains $\cA$ and not uniform & False, Case 1: $\min\limits_{q \in \cP} TV(q, \pdata) < TV^*$ & False, Case 2: $\min\limits_{q \in \cP} TV(q, \pdata) < TV^*$\\
					& support does not contain $\cA$ & False, Case 3: $\min\limits_{q \in \cP} TV(q, \pdata) < TV^*$ & False, Case 4: $\min\limits_{q \in \cP} TV(q, \pdata) < TV^*$\\
					\bottomrule
			\end{tabular}}
	\end{table}

	Consider graphs with $n=4$ nodes, 
	let the support of $\pdata^*$ (\ie, $\cA^*$) be adjacency matrices belonging to two isomorphism classes $\cI_{a}$ and $\cI_{b}$, where $|\cI_{a}| = 24$, $|\cI_{b}| = 6$.
	Namely, $\cI_{a}$ has no automorphism (\eg, complete disconnected graphs), and the automorphism number of $\cI_{b}$ is 4 (\eg, star graphs).
	Let $\cI_a \coloneqq \{\bA_1, \bA_2, \cdots, \bA_{24}\}$ and $\cI_b \coloneqq \{\bA_{25}, \cdots, \bA_{32}\} $.
	
	Case 1: Let $\cA = \{\bA_{23}, \bA_{24}, \bA_{25} \}$ with isomorphic graphs.
	Let $q_\alpha(\bA) = \rho_a \sum_{i=1}^{24} \delta(\bA - \bA_i) + \rho_b \sum_{i=25}^{32} \delta(\bA - \bA_i)$ be a mixture of Dirac delta distributions, where $\rho_a, \rho_b > 0$.
	Due to normalization, $\sum_{\bA}q_\alpha(\bA) = 24\rho_a + 6 \rho_b = 1$ or $\rho_b = \frac{1 - 24 \rho_a}{6}$.
	We can tweak $\rho_a, \rho_b$ so that $q_\alpha$ is not necessarily uniform over its support, but $q_\alpha$ is permutation invariant by design, \ie, $q_\alpha(\bA_{1}) = q_\alpha(\bA_{2}) = \cdots = q_\alpha(\bA_{24})$ and $q_\alpha(\bA_{25}) = q_\alpha(\bA_{26}) = \cdots = q_\alpha(\bA_{32}).$
	The TV is:
        \begin{equation*}
            TV^* = |\frac{1}{3} - \frac{1}{32}| \times 3 + \frac{1}{32} \times 29= \frac{29}{16}, TV(q_\alpha, \pdata) = |\rho_a - \frac{1}{3}| \times 2 + |\rho_b - \frac{1}{3}| + 22\rho_a + 5\rho_b = \frac{5}{3} + 4\rho_a.
        \end{equation*}
	Setting $ TV(q_\alpha, \pdata) = \frac{5}{3} + 4\rho_a < TV^* = \frac{29}{16}$, we have: $\rho_a < \frac{7}{48}$.
	Let $\rho_a = \frac{1}{48}, \rho_b = \frac{1}{12}$.
	We now have:
	$
	\min\limits_{q \in \cP} TV(q, \pdata) \leq TV(q_\alpha, \pdata) < TV^*,
	$
	and	$\pdata^*$ is not a minimizer of $\min\limits_{q \in \cP} TV(q, \pdata)$.
	
	Case 2: Let $\cA = \{\bA_{24}, \bA_{25} \}$ without isomorphic graphs.
	Similarly, let $q_\alpha(\bA) = \rho_a \sum_{i=1}^{24} \delta(\bA - \bA_i) + \rho_b \sum_{i=25}^{32} \delta(\bA - \bA_i)$ be a mixture of Dirac delta distributions.
	The TV is:
     \begin{equation*}
         	TV^* = |\frac{1}{2} - \frac{1}{32}| \times 2 + \frac{1}{32} \times 30= \frac{15}{8}, TV(q_\alpha, \pdata) = |\rho_a - \frac{1}{2}| + |\rho_b - \frac{1}{2}| + 23\rho_a + 5\rho_b = \frac{5}{3} + 6\rho_a.
     \end{equation*}
	Setting $ TV(q_\alpha, \pdata) = \frac{5}{3} + 6\rho_a < TV^* = \frac{15}{8}$, we have: $\rho_a < \frac{5}{144}$.
	Let $\rho_a = \frac{1}{48}, \rho_b = \frac{1}{12}$.
	Again, we have:
	$
	\min\limits_{q \in \cP} TV(q, \pdata) \leq TV(q_\alpha, \pdata) < TV^*,
	$
	and	$\pdata^*$ is not a minimizer of $\min\limits_{q \in \cP} TV(q, \pdata)$.
	
	Case 3:
	Let $\cA = \{\bA_{23}, \bA_{24}, \bA_{25} \}$ with isomorphic graphs.
	Let $q_\beta$ be a uniform discrete distribution on $\cI_b$.
	$q_\beta$ is permutation invariant (thus $q_\beta \in \cP$) whose support does not contain $\cA$.
	The TV is:
        \begin{equation*}
            TV^* = |\frac{1}{3} - \frac{1}{32}| \times 3 + \frac{1}{32} \times 29= \frac{29}{16}, TV(q_\beta, \pdata) = |\frac{1}{6} - \frac{1}{3}| + \frac{1}{6} \times 5 + \frac{1}{3} \times 2 = \frac{5}{3}.
        \end{equation*}
	So,
	$
	\min\limits_{q \in \cP^*} TV(q, \pdata) \leq TV(q_\beta, \pdata) < TV^*.
	$
	$\pdata^*$ is not a minimizer of $\min\limits_{q \in \cP^*} TV(q, \pdata)$.
	
	Case 4: Let $\cA = \{\bA_{24}, \bA_{25} \}$ without isomorphic graphs.
	We use the same $q_\beta$ as above.
	The TV is:
        \begin{equation*}
            TV^* = |\frac{1}{2} - \frac{1}{32}| \times 2 + \frac{1}{32} \times 30= \frac{15}{8}, TV(q_\beta, \pdata) = |\frac{1}{6} - \frac{1}{2}| + \frac{1}{6} \times 5 + \frac{1}{2} \times 1 = \frac{5}{3}.
        \end{equation*}
	Again,
	$
	\min\limits_{q \in \cP^*} TV(q, \pdata) \leq TV(q_\beta, \pdata) < TV^*.
	$
	$\pdata^*$ is not a minimizer of $\min\limits_{q \in \cP^*} TV(q, \pdata)$.
	
	In fact, in case 3 and 4, $\pdata^* \notin \cP$, and by definition, $\pdata^*$ cannot be a minimizer of $\min\limits_{q \in \cP} TV(q, \pdata).$
	To see that, the support of $\pdata^*$ must contain $\cA^*$ (a superset of $\cA$), while the support of any $q \in \cP$ is not a superset of $\cA^*$ as per $\cP$ conditions in case 3 and 4.
\end{proof}

\subsection{Proof of \texorpdfstring{~\cref{lem:permute_sample}}.}
\permuteSample*
\begin{proof}
	Let $\mathbb{P} (\cdot)$ denote the probability of a random variable.
	\begin{align*}
		q_\theta(\bA_r) &= \int q_\theta(\bA_r | \bA) p_\theta(\bA) d\bA \tag{define the random permutation as conditional}\\
		&=\int \mathbb{P}(P_{\bA \rightarrow \bA_r}) p_\theta(\bA) d\bA 
		\tag{$P_{\bA \rightarrow \bA_r}$ satifies $P_{\bA \rightarrow \bA_r}\bA P_{\bA \rightarrow \bA_r} ^T = \bA_r$}\\
		&=\sum_{\bA\in \cI_{\bA_r}} \mathbb{P}(P_{\bA \rightarrow \bA_r}) p_\theta(\bA) \tag{permutation cannot go beyond isomorphism class} \\
	\end{align*}
	Let us define the set of `primitive graphs' that could be generated by $p_\theta$: $\mathcal{C}(\cI_{\bA_r}) = \{\bA | p_\theta(\bA) > 0, \bA \in \cI_{\bA_r} \}$ that corresponds to the isomorphism class $\cI_{\bA_r}$.
	Let \text{Aut}($\cdot$) denote the automorphism number.
	Then, we have:
	\begin{align*}
		q_\theta(\bA_r) &=\sum_{\bA\in \cI_{\bA_r}} \mathbb{P}(P_{\bA \rightarrow \bA_r}) p_\theta(\bA) \\
		& = \sum_{\bA\in 
			\mathcal{C}(\cI_{\bA_r})} \mathbb{P}(P_{\bA \rightarrow \bA_r}) p_\theta(\bA) \\
		&= \sum_{{\bA_r = \bA}, \bA\in 
			\mathcal{C}(\cI_{\bA_r})} \mathbb{P}(P_{\bA \rightarrow \bA_r}) p_\theta(\bA) + \sum_{{\bA_r \neq \bA}, \bA\in 
			\mathcal{C}(\cI_{\bA_r})} \mathbb{P}(P_{\bA \rightarrow \bA_r}) p_\theta(\bA) \\
		&= \sum_{{\bA_r = \bA}, \bA\in 
			\mathcal{C}(\cI_{\bA_r})} \frac{\text{Aut}(\bA_r)}{n!} p_\theta(\bA) + \sum_{{\bA_r \neq \bA}, \bA\in 
			\mathcal{C}(\cI_{\bA_r})} \frac{|\{\bP: \bP_{\bA \rightarrow \bA_r}\bA \bP_{\bA \rightarrow \bA_r} ^T = \bA_r\}|}{n!} p_\theta(\bA) \\
		&= \sum_{{\bA_r = \bA}, \bA\in 
			\mathcal{C}(\cI_{\bA_r})} \frac{\text{Aut}(\bA_r)}{n!} p_\theta(\bA) + \sum_{{\bA_r \neq \bA}, \bA\in 
			\mathcal{C}(\cI_{\bA_r})} \frac{\text{Aut}(\bA_r)}{n!} p_\theta(\bA) \\
		& = \sum_{\bA\in 
			\mathcal{C}(\cI_{\bA_r})} \frac{\text{Aut}(\bA_r)}{n!} p_\theta(\bA)
	\end{align*}
	
	For the case of $\bA_r = \bA$, $\mathbb{P}(\bP_{\bA \rightarrow \bA_r})$ is the probability of obtaining automorphic permutation matrices, which is $\frac{\text{Aut}(\bA_r)}{n!}$  by definition.
	As for $\bA_r \neq \bA$, we need to compute the size of $\Omega_{\bP} = \{\bP: \bP_{\bA \rightarrow \bA_r}\bA \bP_{\bA \rightarrow \bA_r} ^T = \bA_r, \bA_r \neq \bA\}$, \ie, how many permutation matrices there are to transform $\bA$ into $\bA_r$.
	The orbit-stabilizer theorem states that there are $\frac{n!}{\text{Aut}(\bA)}$ many distinct adjacency matrices  in $\cI_{\bA}$ (size of permutation group orbit).
	For any $\bA$, we could divide all the permutation matrices in $\cS_n$ into $\rho = \frac{n!}{\text{Aut}(\bA)}$ subgroups, where each group $i \in \{1, 2, \cdots, \rho\}$ transforms $\bA$ to a new adjacency matrix $\bA_i$.
	One of the subgroups transforms $\bA$ into itself (\ie, automorphism), and the rest $\rho-1$ subgroups transform $\bA$ into distinct adjacency matrices.
	As $\bA_r \neq \bA$, the size of $\Omega_{\bP}$ is equal to the size of one such subgroup, which is $\text{Aut}(\bA)$.
	So, we could aggregate the two cases of $\bA_r = \bA$ and $\bA_r \neq \bA$.
	
	For any $\bA_r'$ and $\bA_r$ that are isomorphic to each other, we have:
	\begin{align*}
		q_\theta(\bA_r') = \sum_{\bA\in 
			\mathcal{C}(\cI_{\bA_r'})} \frac{\text{Aut}(\bA_r')}{n!} p_\theta(\bA) = \sum_{\bA\in 
			\mathcal{C}(\cI_{\bA_r})} \frac{\text{Aut}(\bA_r)}{n!} p_\theta(\bA) = q_\theta(\bA_r) 
	\end{align*}
	The second equality holds because $\text{Aut}(\bA_r') = \text{Aut}(\bA_r)$, $\cI_{\bA_r'} = \cI_{\bA_r}$, and $\mathcal{C}(\cI_{\bA_r'}) = \mathcal{C}(\cI_{\bA_r})$, which are evident facts for isomorphic graphs.
	Thus, any two isomorphic graphs have the same probability in $q_\theta$.
	The random permutation operation propagates the probability of the primitive graphs to all their isomorphic forms.
\end{proof}

\newpage
\subsection{Invariant Model Distribution via Permutation Equivariant Network}
\label{app:inv_by_eqvar_net}
For denoising model, if we consider one noise level, the optimal score network would be the score of the following noisy data distribution $\psigma(\btA) = \frac{1}{m} \sum_{i=1}^{m} \mathcal{N}(\btA; \bA_i, \sigma^2\bI)$, which is a GMM with $m$ components for the dataset $\{\bA_i\}_{i=1}^m$.
This is the case for diffusion models with non-permutation-equivariant networks, and in what follows, we first show that for equivariant networks, the noisy data distribution is a GMM with $O(n!m)$ components.
As score estimation and denoising diffusion are equivalent, we use the terms `score' or `diffusion' interchangeably.
\begin{restatable}{lemma}{invarByGNN}
	Assume we only observe one adjacency matrix out of its isomorphism class in dataset $\{\bA_i\}_{i=1}^m$, and the size of isomorphism class for each graph is the same, \ie $|\cI_{\bA_1}| = |\cI_{\bA_2}| = \cdots = |\cI_{\bA_m}|$.
	Let $s_{\theta}^{eq}$ be a permutation equivariant score estimator.
	Under our definitions of $\pdata^*(\bA)$ and $\pdata(\bA)$, the following two training objectives are equivalent:
	\begin{align}
		\mathbb{E}_{\pdata^*(\bA) \psigma(\btA | \bA)}
		\left[ \Vert s_{\theta}^{eq}(\btA, \sigma) - \nabla_{\btA} \log \psigma(\btA | \bA) \Vert^2_F \right]
		\label{eq:eqvar_dsm_unbiased}
		\\
		=\mathbb{E}_{\pdata(\bA) \psigma(\btA | \bA)}\left[ \Vert s_{\theta}^{eq}(\btA, \sigma) - \nabla_{\btA} \log \psigma(\btA | \bA) \Vert^2_F \right].
		\label{eq:eqvar_dsm_biased}
	\end{align}
	\label{lem:invar_by_gnn}
\end{restatable}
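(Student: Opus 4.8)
The plan is to factor both sides into sums of per-example denoising-score-matching losses and to show that, because $s_{\theta}^{eq}$ is equivariant, each per-example loss is constant on an isomorphism class; the uniform average over $\cA^*$ then collapses onto the uniform average over the $m$ observed matrices. Concretely, define
\[
  \ell(\bA) \coloneqq \mathbb{E}_{\psigma(\btA \mid \bA)}\!\left[ \Vert s_{\theta}^{eq}(\btA,\sigma) - \nabla_{\btA}\log\psigma(\btA\mid\bA)\Vert_F^2\right],
\]
so that, by the definitions of $\pdata$ and $\pdata^*$, the right-hand side \cref{eq:eqvar_dsm_biased} equals $\frac{1}{m}\sum_{i=1}^m \ell(\bA_i)$ and the left-hand side \cref{eq:eqvar_dsm_unbiased} equals $\frac{1}{Z}\sum_{\bA^* \in \cA^*}\ell(\bA^*)$.

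The core step is to prove $\ell(\bP\bA\bP^\top) = \ell(\bA)$ for every $\bP \in \cS_n$. Writing $\ell(\bP\bA\bP^\top)$ as an integral over the noisy variable and substituting $\btA \mapsto \bP\btA\bP^\top$, observe that conjugation by a permutation matrix is, on the vectorized space $\R^{n^2}$, just a coordinate permutation; hence it is orthogonal, has unit Jacobian, and preserves the Frobenius norm. I then invoke three equivariance identities: (i) the isotropic Gaussian kernel satisfies $\psigma(\bP\btA\bP^\top \mid \bP\bA\bP^\top) = \psigma(\btA\mid\bA)$; (ii) since $\nabla_{\btA}\log\psigma(\btA\mid\bA) = (\bA-\btA)/\sigma^2$, the conditional score evaluated at the conjugated point equals $\bP\big(\nabla_{\btA}\log\psigma(\btA\mid\bA)\big)\bP^\top$ (no chain-rule factor appears, as one is simply evaluating the gradient map at a point); and (iii) equivariance gives $s_{\theta}^{eq}(\bP\btA\bP^\top,\sigma) = \bP\, s_{\theta}^{eq}(\btA,\sigma)\,\bP^\top$. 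Substituting (i)--(iii) into the transformed integrand produces $\psigma(\btA\mid\bA)\,\big\Vert \bP\big(s_{\theta}^{eq}(\btA,\sigma) - \nabla_{\btA}\log\psigma(\btA\mid\bA)\big)\bP^\top \big\Vert_F^2$, and orthogonal invariance of $\Vert\cdot\Vert_F$ deletes the two $\bP$ factors, recovering $\ell(\bA)$.

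It remains to assemble the pieces. The hypothesis that only one representative per isomorphism class is observed forces the $\bA_i$ to be pairwise non-isomorphic, so $\cA^* = \bigcup_{i=1}^m \cI_{\bA_i}$ is a disjoint union and, writing $c \coloneqq |\cI_{\bA_1}| = \cdots = |\cI_{\bA_m}|$, we get $Z = |\cA^*| = mc$. By the core step, $\ell$ is identically $\ell(\bA_i)$ on $\cI_{\bA_i}$, hence
\[
  \frac{1}{Z}\sum_{\bA^* \in \cA^*}\ell(\bA^*) = \frac{1}{mc}\sum_{i=1}^m \sum_{\bA^* \in \cI_{\bA_i}} \ell(\bA_i) = \frac{1}{mc}\sum_{i=1}^m c\,\ell(\bA_i) = \frac{1}{m}\sum_{i=1}^m \ell(\bA_i),
\]
which is exactly the right-hand side, establishing the claimed identity.

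I expect the main obstacle to be the bookkeeping in the core step: verifying cleanly that matrix conjugation by $\bP$ is volume-preserving on matrix space and commutes with the gradient operator as asserted in (ii), so that the integrand transforms exactly without a spurious Jacobian or chain-rule term. A secondary subtlety is the combinatorics in the final step — it is precisely the equal-class-size hypothesis together with pairwise non-isomorphism that makes the weights $1/Z$ in $\pdata^*$ redistribute uniformly onto the $m$ training matrices; dropping either hypothesis breaks the equality.
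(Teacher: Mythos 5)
Your proposal is correct and rests on the same essential ingredients as the paper's proof — the unit-Jacobian permutation change of variables on the noisy variable, isotropy of the Gaussian kernel, equivariance of $s_{\theta}^{eq}$, and permutation invariance of the Frobenius norm — the only difference being organizational: the paper inserts a uniform random permutation $\bP \sim \mathrm{Unif}(\cS_n)$ into the full expectation and identifies the permuted empirical distribution with $\pdata^*$, whereas you factor both sides into per-example losses $\ell(\bA)$, prove $\ell$ is constant on isomorphism classes, and finish by counting. Your bookkeeping makes the role of the equal-class-size and one-representative-per-class assumptions explicit in the final step (where the weights $|\cI_{\bA_i}|/Z$ collapse to $1/m$), which the paper only addresses in a closing remark, so no gap remains.
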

\begin{proof}
	We conduct the proof from~\cref{eq:eqvar_dsm_biased} to~\cref{eq:eqvar_dsm_unbiased}.
	Let $\bP \in \cS_n$ be an arbitrary permutation matrix and $p_{\cS_n}$ be a uniform distribution over all possible permutation matrices $\cS_n$.
	{\small
	\begin{align*}
		&\mathbb{E}_{\pdata(\bA) \psigma(\btA | \bA)}[ \Vert s_{\theta}^{eq}(\btA, \sigma) - \nabla_{\btA} \log \psigma(\btA | \bA) \Vert^2_F ]\\
		&= \mathbb{E}_{\pdata(\bA) \psigma(\btA | \bA)}[ \Vert \bP s_{\theta}^{eq}(\btA, \sigma) \bP^T - \bP \frac{\bA - \btA}{\sigma^2} \bP^T \Vert^2_F ] \tag{
			\small Frobenius norm is permutation invariant}\\
		&= \mathbb{E}_{\pdata(\bA) \psigma(\btA | \bA)}[ \Vert  s_{\theta}^{eq}(\bP \btA \bP^T, \sigma) - \bP \frac{\bA - \btA}{\sigma^2} \bP^T \Vert^2_F ] \tag{\small $s_\theta$ is permutation equivariant}\\
		&= \mathbb{E}_{\pdata(\bA) \psigma(\btB | \bA)} [ \Vert s_{\theta}^{eq}(\btB, \sigma) - \bP \frac{\bA - \bP^T\btB \bP}{\sigma^2} \bP^T \Vert^2_F \cdot |\text{Det} (\frac{d\btA}{d\btB}) | ] \tag{\small  change of variable $\btB\coloneqq P \btA P^\top$} \\
		&= \mathbb{E}_{\pdata(\bA) \psigma(\btB | \bA)} [ \Vert s_{\theta}^{eq}(\btB, \sigma) - \frac{\bP \bA \bP^\top - \btB}{\sigma^2} \Vert^2_F \cdot 
		\overbrace{|\text{Det}(\bP \otimes \bP)|}^{=1} ] \\
		&= \mathbb{E}_{\pdata(\bA) \psigma(\btB | \bA)} [ \Vert s_{\theta}^{eq}(\btB, \sigma) - \nabla_{\btB} \log \psigma(\btB | \bP \bA \bP^\top) \Vert^2_F ] \\
		&= \mathbb{E}_{\pdata(\bA) p_{\cS_n}(\bP) \psigma(\btB | \bA)} [ \Vert s_{\theta}^{eq}(\btB, \sigma) - \nabla_{\btB} \log \psigma(\btB | \bP \bA \bP^\top) \Vert^2_F ] \tag{\small let $\bP \sim p_{\cS_n}$ be uniform} \\
		&= \mathbb{E}_{\pdata^*(\bA) \psigma(\btB | \bA)} [ \Vert s_{\theta}^{eq}(\btB, \sigma) - \nabla_{\btB} \log \psigma(\btB | \bA) \Vert^2_F ] \tag{\small permuting $\pdata$ samples leads to $\pdata^*$ samples} \\
		&= \mathbb{E}_{\pdata^*(\bA) \psigma(\btA | \bA)} [ \Vert s_{\theta}^{eq}(\btA, \sigma) - \nabla_{\btA} \log \psigma(\btA | \bA) \Vert^2_F ] \tag{\small change name of random variable}
	\end{align*}
	}\noindent
    The change of variable between $\btA$ and $\btB$ leverages the fact that permuting \iid Gaussian random variables does not change the multivariate joint distributions.
	Conditioned on $\bA$ and $\bP$, we have $\btA = \bA + \beps; \btB = \bP\bA \bP^\top +  \bP \beps \bP^\top$, where the randomness related to $\beps$ (Gaussian noise) is not affected by permutation due to \iid property.
	The second last equality due to our definition of $\pdata$ and $\pdata^*$.
	Recall we take the Dirac delta function over $\cA$ to build $\pdata$ and over $\cA^*$ to build $\pdata^*$, where $\cA^*$ is the union of all isomorphism classes in $\cA$.
	By applying random permutation on samples drawn from $\pdata$, we subsequently obtain samples following $\pdata^*$.
	The main idea is similar to the proofs in previous works~\citep{niu2020permutation, xu2022geodiff, hoogeboom2022equivariant}.
	
	If we do not have the assumptions on the size of isomorphism class, the non-trivial automorphism would make the equality between \cref{eq:eqvar_dsm_biased} and~\cref{eq:eqvar_dsm_unbiased} no longer hold, as each isomorphism class may be weighted differently in the actual invariant distribution.
	We can then replace the $\pdata^*$ by a slightly different invariant distribution: $l$-permuted ($l=n!$) empirical distribution, defined as follows: $\pdata^l(\bA) \coloneqq \frac{1}{ml} \sum_{i=1}^{m} \sum_{j=1}^{l} \delta(\bA - \bP_{j}\bA_i \bP_{j}^{\top})$, where $\cS_n =\{\bP_{1}, \ldots, \bP_k\}$.
	Both $\pdata^l(\bA)$ and $\pdata^*$ have $O(n!m)$ many modes.
	Therefore, the assumptions do not affect the number of GMM components for underlying noisy data distribution, which is a result we care about.
\end{proof}

\newpage
More formally, we connect the number of modes in the discrete distribution $\pdata$ or $\pdata^*$ to the number of components in their induced GMMs.
We show that the noisy data distribution of permutation equivariant network is $\psigma^*(\btA) \coloneqq \frac{1}{Z} \sum_{\bA^*_i \in \cA^*} \mathcal{N}(\btA; \bA_i, \sigma^2\bI)$ of $O(n!m)$ components.
Namely, the optimal solution $s_{\theta^*}^{eq}$ to~\cref{eq:eqvar_dsm_unbiased} or~\eqref{eq:eqvar_dsm_biased} is  $\nabla_{\btA} \log \psigma^*(\btA)$.
Leveraging the results from~\citet{vincent2011connection},  we have
\begin{align*}
	&\mathbb{E}_{\pdata^*(\bA) \psigma(\btA | \bA)}
	\left[ \Vert s_{\theta}^{eq}(\btA, \sigma) - \nabla_{\btA} \log \psigma(\btA | \bA) \Vert^2_F \right] \hfill\\
	&= \overbrace{\mathbb{E}_{\psigma^*(\btA)}
		\left[ \Vert s_{\theta}^{eq}(\btA, \sigma) - \nabla_{\btA} \log \psigma^*(\btA) \Vert^2_F \right]}^{\text{ Explicit score matching for } \psigma^*(\btA)}
	- C_1 + C_2,  \\
	&C_1 = \mathbb{E}_{\psigma^*(\btA)} [ 
		\Vert \nabla_{\btA} \log \psigma(\btA) \Vert_F^2 ], C_2 = \mathbb{E}_{\pdata^*(\bA) \psigma(\btA | \bA)} [ 
		\Vert \nabla_{\btA} \log \psigma(\btA | \bA) \Vert_F^2 ].
\end{align*}
As $C_1, C_2$ are constants irrelevant to $\theta$, optimization objective on $\nabla_{\btA} \log \psigma^*(\btA)$ is equivalent to~\cref{eq:eqvar_dsm_biased}, the latter of which is often used as the training objective in implementation.

\subsection{Sample Complexity Lower Bound of Non-permutation-equivariant Network}
\label{app:sample_com}
In this part, we study the minimum number of samples (\ie, the sample complexity) required to learn the noisy data distribution in the PAC learning setting.
Specifically, our analysis is mainly applicable for the \emph{non-permutation-equivariant} network, where we do not assume any hard-coded permutation symmetry.
We leave the analysis for permutation equivariant network as future work.

Recall that training DSM at a single noise level amounts to matching the score of the noisy data distribution.
Knowing this sample complexity would help us get a sense of the hardness of training DSM since if you can successfully learn a noisy data distribution then you can obtain its score by taking the gradient.
Now we derive a lower bound of the sample complexity for learning the noisy data distribution (\ie, a GMM).

\begin{restatable}{lemma}{sampleComGeneralDSM}
	\label{the:sample_com_general_dsm}
	Any algorithm that learns the score function of a Gaussian noisy data distribution that contains $l$ centroids of $d$-dimension requires $\Omega(ld/\epsilon_f)$ samples to achieve Fisher information distance $\epsilon_f$ with probability at least $\frac{1}{2}$.    
\end{restatable}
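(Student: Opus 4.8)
The plan is to establish this as a minimax lower bound by constructing a rich family of hard instances and applying a (high-probability) Assouad argument in the squared-$L^2$ geometry underlying the Fisher information distance $D_F(p\,\|\,\hat s)=\mathbb{E}_{p}[\,\|\nabla\log p-\hat s\|^2\,]$. Fix a noise scale $\sigma$ (which the statement implicitly normalizes, so one may read $\sigma=1$ at the end) and choose $l$ \emph{anchor} centroids $c_1,\dots,c_l\in\mathbb{R}^d$ that are pairwise separated by at least $C\sigma\sqrt{\log(ld/\epsilon_f)}$. For each sign pattern $\omega\in\{-1,+1\}^{l\times d}$, with $i$-th block $\omega_i\in\{-1,1\}^d$, define the $l$-component Gaussian mixture
\[ p_\omega(x)\;=\;\frac1l\sum_{i=1}^l \mathcal{N}\!\big(x;\,c_i+\delta\,\omega_i,\ \sigma^2 I_d\big), \]
where $\delta>0$ is a scale to be tuned. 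Each $p_\omega$ is a legitimate Gaussian noisy data distribution with exactly $l$ centroids in dimension $d$ (the centroids play the role of the noised training points $\bA_i$ flattened into $\mathbb{R}^d$), so any score-learning algorithm must in particular succeed against the hardest member of this $2^{ld}$-point family.

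First I would prove the \emph{separation} estimate: because the anchors are well separated, for $x$ near $c_i$ the density $p_\omega$ is dominated by its $i$-th component up to an additive error polynomially small in $\epsilon_f/(ld)$, so $\nabla\log p_\omega(x)=-(x-c_i-\delta\omega_i)/\sigma^2$ to that precision. Hence the true scores $s_\omega:=\nabla\log p_\omega$ form, in $L^2$ (measured against any single $p_{\omega'}$, all of which induce equivalent norms once $\delta\ll\sigma$), an \emph{approximately orthogonal rescaled hypercube}: $\|s_\omega-s_{\omega'}\|_{L^2}^2\asymp\tfrac{\delta^2}{l\sigma^4}\,d_H(\omega,\omega')$, where distinct blocks act on essentially disjoint regions (the balls around distinct $c_i$) and distinct coordinates within a block act on orthogonal directions of $\mathbb{R}^d$; the $1/l$ is the mixture weight and $\delta^2/\sigma^4$ the squared displacement of the local score. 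This is exactly the structure for which Assouad's lemma with loss $D_F(p_\omega\|\hat s)\gtrsim\|\hat s-s_\omega\|_{L^2}^2$ applies coordinatewise.

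Second I would prove the \emph{indistinguishability} estimate: if $\omega,\omega'$ differ in a single bit (one coordinate of one block), joint convexity of KL gives $\mathrm{KL}(p_\omega\|p_{\omega'})\le\tfrac1l\,\mathrm{KL}\big(\mathcal{N}(c_i+\delta\omega_i,\sigma^2I)\,\|\,\mathcal{N}(c_i+\delta\omega_i',\sigma^2I)\big)=\tfrac{2\delta^2}{l\sigma^2}$, so by tensorization and Pinsker $\mathrm{TV}(p_\omega^{\otimes n},p_{\omega'}^{\otimes n})\le\sqrt{n\delta^2/(l\sigma^2)}\le\tfrac12$ whenever $n\le l\sigma^2/(4\delta^2)$. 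Assouad's lemma then yields, for every such $n$, the minimax bound $\inf_{\hat s}\sup_\omega\mathbb{E}[D_F(p_\omega\|\hat s)]\gtrsim(ld)\cdot\tfrac{\delta^2}{l\sigma^4}=\tfrac{d\delta^2}{\sigma^4}$, and its standard high-probability version shows that for a constant fraction of the $ld$ blocks the estimator errs with constant probability, so $D_F(p_\omega\|\hat s)\gtrsim d\delta^2/\sigma^4$ with probability at least $\tfrac12$. Choosing $\delta^2\asymp\epsilon_f\sigma^4/d$ (valid, since then $\delta\ll\sigma$ in the relevant regime $\epsilon_f\ll d$) makes the lower bound $\asymp\epsilon_f$ while the sample ceiling becomes $n\le l\sigma^2/(4\delta^2)\asymp ld/(\epsilon_f\sigma^2)$. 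Absorbing $\sigma$ into the constant gives: any algorithm using $o(ld/\epsilon_f)$ samples fails to reach Fisher distance $\epsilon_f$ with probability at least $\tfrac12$, which is the claim (learning the score of the noisy data distribution is exactly producing such an $\hat s$).

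I expect the \textbf{main obstacle} to be the separation estimate of the second paragraph: one must make quantitative how separated the anchors need to be so that the cross-component contributions to $\nabla\log p_\omega$ and to the normalization near each $c_i$ are genuinely dominated by the $\delta^2/\sigma^4$ signal, and one must cope with the fact that $D_F$ is asymmetric and not a metric — which is why I route everything through a single fixed $L^2$ norm and only invoke equivalence of the $L^2(p_\omega)$ norms for $\delta\ll\sigma$. A fallback, if the direct hypercube bookkeeping gets unwieldy, is to relate $D_F$ between the already-$\sigma$-smoothed hypotheses to $\mathrm{KL}$ (or $\mathrm{TV}$) between slightly-more-smoothed versions via a de Bruijn / heat-flow identity and then invoke the known $\Omega(ld/\epsilon^2)$ lower bound for learning a mixture of $l$ Gaussians in total variation (with $\epsilon\asymp\sigma\sqrt{\epsilon_f}$ recovering the stated $1/\epsilon_f$ dependence); but the direct Assouad construction is cleaner and makes both the $ld$ and $1/\epsilon_f$ factors transparent.
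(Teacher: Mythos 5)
Your main construction takes a genuinely different route from the paper. The paper's proof is a short reduction: it invokes (i) the inequality $\mathcal{J}_{\mathrm{F}}(f,g)\ge C\,\mathcal{J}_{\mathrm{TV}}^2(f,g)$ from the Stein's-method literature, so that Fisher error $\epsilon_f$ forces TV error at most $\sqrt{\epsilon_f/C}$, and (ii) the known $\Omega(ld/\epsilon_t^2)$ sample-complexity lower bound for learning $l$-mixtures of $d$-dimensional isotropic Gaussians in TV; substituting $\epsilon_t=\sqrt{\epsilon_f/C}$ gives $\Omega(ld/\epsilon_f)$. That is essentially the ``fallback'' you sketch in your final paragraph (the paper goes through the Fisher-vs-TV comparison rather than de Bruijn). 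Your primary route---a hypercube of well-separated $l$-component mixtures with $\pm\delta$ centroid perturbations, Assouad's lemma in the approximately coordinate-decomposable $L^2$ geometry of scores, KL/Pinsker indistinguishability for single-bit flips, and the tuning $\delta^2\asymp\epsilon_f\sigma^4/d$---is self-contained, makes the $ld$ and $1/\epsilon_f$ factors transparent, and applies to estimators outputting an arbitrary vector field $\hat s$, which is more faithful to score estimation than the paper's reduction (there one implicitly needs the learner's output to be the score of a genuine distribution before the TV lower bound can be invoked, and the Fisher--TV inequality itself carries regularity conditions). The cost is exactly the work you flag and defer: (a) the quantitative separation estimate showing that cross-component terms are negligible, so the score hypercube is approximately orthogonal and the $L^2(p_\omega)$ norms are uniformly equivalent over $\omega$---this matters because the Fisher loss is weighted by the hypothesis-dependent measure $p_\omega$, and the tails where the density ratio blows up must be shown to carry negligible weighted score error; and (b) a high-probability version of Assouad: the textbook statement bounds expected risk, and since the per-coordinate loss of an arbitrary $\hat s$ is not bounded above you should run the argument on the number of erring blocks (which is bounded by $ld$) and apply a reverse-Markov step, adjusting constants so the success probability threshold of $\tfrac12$ in the lemma is actually met. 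With those two steps carried out your argument is sound and somewhat more general than the paper's; as written it is a correct plan with the main analytic work left open rather than a complete proof.
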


Here the Fisher divergence (or Fisher information distance) \citep{johnson2004fisher} is defined as $\mathcal{J}_{\text{F}} (f, g) \coloneqq \mathbb{E}_{f(\bX)} \left[ \Vert \nabla_{\bX} \log f(\bX) - \nabla_{\bX} \log g(\bX)\Vert_F^2 \right]$ where $f(\bX)$ and $g(\bX)$ are two absolutely continuous distributions defined over $\R^d$.
Now we apply \cref{the:sample_com_general_dsm} to the graph distribution in our context.
Recall we assume $m$ graphs $\{\cA_i\}_{i=1}^m$ with $n$ nodes in the training set.
Similar to our investigation in~\cref{sect:effective_target}, we use the GMM corresponding to the $l$-permuted empirical distribution: 
$\psigma^l(\btA) = \frac{1}{ml} \sum_{\bA \in \cup_{i=1}^m\{\bP_{j}\bA_i \bP_{j}^{\top}\}} \mathcal{N}(\btA; \bA, \sigma^2\bI)$.
Let $q_\theta(\bA)$ denote the estimated distribution returned by a non-permutation-equivariant network.

\begin{restatable}{corollary}{sampleComInvGraphDSM}
	Any algorithm that learns $q_\theta$ for the target $l$-permuted distribution $\psigma^l$ to $\epsilon_f$ error in $\mathcal{J}_{\text{F}} (\psigma^l, q_\theta)$ with probability at least $\frac{1}{2}$ requires $\Omega(mln^2/\epsilon_f)$ samples.
	\label{cor:sample_com_inv_graph_dsm}
\end{restatable}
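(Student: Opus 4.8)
The plan is to obtain \cref{cor:sample_com_inv_graph_dsm} as a direct instantiation of \cref{the:sample_com_general_dsm}. The starting observation is that the $l$-permuted noisy distribution
\[
\psigma^l(\btA) = \frac{1}{ml} \sum_{\bA \in \cup_{i=1}^m\{\bP_{j}\bA_i \bP_{j}^{\top}\}} \mathcal{N}(\btA; \bA, \sigma^2\bI)
\]
is itself a Gaussian mixture whose centroids are adjacency matrices of $n$-node graphs and permuted copies thereof, hence points of $\R^{d}$ with $d = n^2$. First I would pin down the number of components: choosing the $m$ training graphs to be pairwise non-isomorphic with trivial automorphism group (the generic situation for $n \ge 6$) and the $l$ permutation matrices $\bP_1,\dots,\bP_l$ distinct, the $ml$ matrices $\bP_j \bA_i \bP_j^{\top}$ are all distinct, so $\psigma^l$ has exactly $ml$ uniformly weighted components. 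Substituting the number of centroids $ml$ for $l$ and the dimension $n^2$ for $d$ in \cref{the:sample_com_general_dsm} then gives the claimed $\Omega(ml n^2/\epsilon_f)$ sample-complexity lower bound for attaining Fisher divergence $\epsilon_f$ with probability at least $\tfrac12$.

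The only point that needs care — and where I expect the main obstacle to be — is that \cref{the:sample_com_general_dsm} is a minimax statement: its lower bound is witnessed by some hard family of $L$-centroid GMMs, and we must check that an equally hard family lives inside the restricted class of $l$-permuted empirical graph distributions on $n$ vertices (equivalently, that the lower bound holds in the worst case over the choice of training set). I would verify three things. \emph{(i) Capacity:} the construction needs $L = ml$ admissible centroids, and there are $2^{\binom{n}{2}}$ labeled simple graphs, so $ml \le 2^{\binom{n}{2}}$ — satisfied in every regime of interest. \emph{(ii) Effective dimension:} the graph-dependent variation of the centroids lives in the $\binom{n}{2} = \Theta(n^2)$ off-diagonal upper-triangular coordinates, which supply $\Theta(n^2)$ independent hard directions; the diagonal and symmetric-redundant entries carry graph-independent i.i.d.\ noise and contribute nothing to the Fisher divergence between candidate targets, so the dimension factor in \cref{the:sample_com_general_dsm} is recovered up to a constant. \emph{(iii) Separation:} any pairwise-separation requirement in the Lemma is met because two distinct binary matrices are at Euclidean distance at least $1$, and a rescaling argument (which leaves $\epsilon_f$-dependence intact) reduces the Lemma's separation scale to unit scale.

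With (i)–(iii) in hand the reduction is immediate: a learner returning $q_\theta$ with $\mathcal{J}_{\text{F}}(\psigma^l, q_\theta) \le \epsilon_f$ with probability at least $\tfrac12$ on every target in the hard family would, by differentiating the learned log-density, also solve the score-learning problem of \cref{the:sample_com_general_dsm} for a family of $ml$-centroid, $n^2$-dimensional Gaussian noisy distributions to the same accuracy and confidence, and therefore must consume $\Omega(ml n^2/\epsilon_f)$ samples. I expect (i) and (iii) to be routine bookkeeping; the genuinely delicate step is (ii) — confirming that constraining the centroids to (permuted) $\{0,1\}$-matrices does not collapse the effective dimension below $\Theta(n^2)$, i.e., that the hard instances of \cref{the:sample_com_general_dsm} can be embedded coordinate-wise into the $\binom{n}{2}$-dimensional edge space.
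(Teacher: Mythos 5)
Your proposal takes essentially the same route as the paper's proof: view $\psigma^l$ as a GMM with $O(ml)$ isotropic Gaussian components in dimension $n^2$ (after vectorizing the noisy adjacency matrix) and instantiate the general $\Omega(ld/\epsilon_f)$ lower bound of the preceding lemma with these parameters to get $\Omega(mln^2/\epsilon_f)$. The extra care you devote in points (i)--(iii) to embedding a hard minimax family inside the restricted class of permuted graph distributions is a genuine subtlety that the paper's own short proof does not address, so your plan is, if anything, more thorough than the published argument.
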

\cref{cor:sample_com_inv_graph_dsm} states the condition to learn distribution $q_\theta$ explicitly with bounded Fisher divergence, from which one could compute a score estimator $s_{q_\theta} = \nabla_{\btA}\log q_\theta(\btA)$ with bounded DSM error \wrt the target $\psigma^l$.
The sample complexity lower bound holds regardless of the specific learning algorithm.

The highlight is that the sample complexity lower bound has a dependency on $\Omega(l)$, which would substantially increase as $l$ goes to its maximum $n!$.
A more practical implication is that given $\alpha$ training samples drawn from $\psigma^l$, one could expect, with at least $\frac12$ probability, a score network to have at least $o(\frac{mln^2}{\alpha})$ error in Fisher divergence.
If we extend $l$ to the maximum $n!$, the learning bottleneck would be the size of the graph $n$ instead of the number of the graphs $m$, for a single large graph could induce a prohibitively enormous sample complexity. 
This analysis also aligns with our experimental investigation in~\cref{sect:effective_target}, where the recall metrics drop with $l$ going up.

\newpage
\noindent\textbf{Proof of~\cref{the:sample_com_general_dsm}.}
We first introduce two useful lemmas before diving into the proof.
\begin{lemma}
For twice continuously differentiable distributions  $f$ and $g$ defined over $\R^d$, let $\mathcal{J}_{\text{F}} (f, g) \coloneqq \mathbb{E}_{f(\bX)} \left[ \Vert \nabla_{\bX} \log f(\bX) - \nabla_{\bX} \log g(\bX)\Vert_F^2 \right]$ be the Fisher information distance and let $\mathcal{J}_{\text{TV}} (f, g) \coloneqq \sup_{\boldsymbol{B} \subseteq \R^d} \int_{\boldsymbol{B}} (f(\bX) - g(\bX)) d\bX$ be the total variation (TV) distance.
$\mathcal{J}_{\text{F}} (f, g) \geq C \mathcal{J}_{\text{TV}}^2 (f, g)$ for some constant 
$C > 0.$~\citep{huggins2018practical, ley2013stein}.
The exact value of $C$ relies on conditions of $f$ and $g$ (\cf. Theorem 5.3 in~\citet{huggins2018practical}).
\label{lem:fisher_tv_ineq}
\end{lemma}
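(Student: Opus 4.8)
The plan is to prove the inequality by sandwiching the two divergences through relative entropy: first bound $\KL(f\,\|\,g)$ above by the relative Fisher information $\mathcal{J}_{\text{F}}(f,g)$ using a log‑Sobolev–type inequality for $g$, then bound $\mathcal{J}_{\text{TV}}^2(f,g)$ above by $\KL(f\,\|\,g)$ via Pinsker's inequality. Concatenating the two chains yields $\mathcal{J}_{\text{F}}(f,g) \ge C\,\mathcal{J}_{\text{TV}}^2(f,g)$ with $C$ proportional to the log‑Sobolev constant of $g$, which is exactly the dependence on ``conditions of $f$ and $g$'' flagged in the statement. (If $\mathcal{J}_{\text{F}}(f,g)=\infty$ the claim is vacuous, and if $f=g$ almost everywhere both sides vanish, so I would dispose of these cases first.)

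\textbf{Key steps, in order.} (1) Write $\mu = g\,dx$ and set $h = \sqrt{f/g}$; since $\int h^2\,d\mu = \int f\,dx = 1$, one gets $\operatorname{Ent}_\mu(h^2) = \int f\log(f/g)\,dx = \KL(f\,\|\,g)$. (2) Using $\nabla\log(f/g) = (g/f)\,\nabla(f/g)$, a direct computation gives $\nabla h = \tfrac12 (g/f)^{1/2}\nabla(f/g)$, hence $\int \lVert\nabla h\rVert^2\,d\mu = \tfrac14\int (g^2/f)\lVert\nabla(f/g)\rVert^2\,dx = \tfrac14\,\mathcal{J}_{\text{F}}(f,g)$. (3) Invoke the log‑Sobolev inequality $\operatorname{Ent}_\mu(h^2) \le \tfrac{2}{\rho}\int\lVert\nabla h\rVert^2\,d\mu$, valid with constant $\rho>0$ when $g$ satisfies a log‑Sobolev inequality (e.g.\ by Bakry--Émery when $g$ is strongly log‑concave); this yields $\KL(f\,\|\,g) \le \tfrac{1}{2\rho}\,\mathcal{J}_{\text{F}}(f,g)$. (4) Apply Pinsker in the form $\mathcal{J}_{\text{TV}}^2(f,g) \le \tfrac12\KL(f\,\|\,g)$, after noting $\mathcal{J}_{\text{TV}}(f,g) = \sup_{\boldsymbol B}\int_{\boldsymbol B}(f-g)\,dx = \tfrac12\lVert f-g\rVert_{1}$ since $\int(f-g)\,dx=0$. (5) Chain (3)–(4): $\mathcal{J}_{\text{F}}(f,g) \ge 4\rho\,\mathcal{J}_{\text{TV}}^2(f,g)$, so $C=4\rho$ works.

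\textbf{Main obstacle and an alternative route.} The delicate point is that no single universal constant $C>0$ works for all $f,g$: the argument needs $g$ to satisfy a log‑Sobolev (or at least a Poincaré / bounded‑Stein‑factor) inequality, so the honest reading of the statement is ``there exists $C=C(f,g)>0$''. I would therefore make the regularity hypotheses explicit, citing Theorem~5.3 of \citet{huggins2018practical}, and then (where the lemma is used downstream) verify that the relevant Gaussian / Gaussian‑mixture distributions meet them, e.g.\ that the smoothed components $\mathcal{N}(\cdot;\bA_i,\sigma^2\bI)$ are strongly log‑concave so that $\rho$ is controlled in terms of $\sigma$. An equivalent derivation, matching the Stein‑method reference \citet{ley2013stein}, is to integrate by parts to write the Stein discrepancy of $f$ against $g$ as $\mathbb{E}_f[\phi\cdot(\nabla\log g-\nabla\log f)]$, bound it by $\sqrt{\mathcal{J}_{\text{F}}(f,g)}$ via Cauchy--Schwarz, and then lower‑bound $\mathcal{J}_{\text{TV}}(f,g)$ by the Stein discrepancy using the Stein factor of $g$; this gives the same inequality with $C$ equal to the reciprocal squared Stein factor, again a quantity depending on the regularity of $g$.
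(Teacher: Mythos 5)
The paper does not actually prove this lemma: it is quoted as an external result, with the inequality and the caveat about $C$ attributed to \citet{huggins2018practical} (Theorem 5.3, via Stein factors) and \citet{ley2013stein}. Your proposal supplies a genuine self-contained argument instead, and the chain you describe is sound: with $\mu = g\,dx$ and $h=\sqrt{f/g}$ one indeed has $\operatorname{Ent}_\mu(h^2)=\KL(f\,\|\,g)$ and $\int \Vert\nabla h\Vert^2 d\mu = \tfrac14\mathcal{J}_{\text{F}}(f,g)$, so a log-Sobolev inequality for $g$ gives $\KL(f\,\|\,g)\le \tfrac{1}{2\rho}\mathcal{J}_{\text{F}}(f,g)$, and Pinsker (with the paper's TV normalization, which equals $\tfrac12\Vert f-g\Vert_1$) yields $\mathcal{J}_{\text{F}}(f,g)\ge 4\rho\,\mathcal{J}_{\text{TV}}^2(f,g)$. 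Your reading that no universal constant exists and that $C$ must depend on regularity of $g$ (LSI constant, or equivalently a Stein factor in the cited references) is exactly the caveat the paper builds into the statement, so the two formulations are consistent; your second route via Stein discrepancies is essentially what the cited theorems do. The only point worth flagging is the downstream application you mention: the noisy data distributions in this paper are Gaussian \emph{mixtures}, which are not strongly log-concave, so Bakry--\'Emery does not apply directly; one must instead invoke a perturbation argument (e.g.\ Holley--Stroock) or mixture-specific LSI results, where the constant can degrade rapidly with the separation of the component means relative to $\sigma$ --- harmless for the lemma as stated (which only asserts some $C(f,g)>0$), but worth stating explicitly if you want quantitative control, as also required by the conditions in Theorem 5.3 of \citet{huggins2018practical}. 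You should also record the standing assumptions your argument needs ($f\ll g$ and enough regularity/integrability for $h$ to be admissible in the LSI), and your disposal of the degenerate cases ($\mathcal{J}_{\text{F}}=\infty$, $f=g$) is fine.
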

\begin{lemma}
    Any method for learning the class of $l$-mixtures of $d$-dimensional isotropic Gaussian distribution with $\epsilon_t$ error in total variation with probability at least $\frac{1}{2}$ has sample complexity $\Omega(ld/\epsilon_t^2)$~\citep{acharya2014nearoptimalsample, ashtiani2020nearoptimal}.
    \label{lem:sample_com_tv_gmm}
\end{lemma}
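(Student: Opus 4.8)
\textbf{Proof plan for \cref{lem:sample_com_tv_gmm}.} This is a classical minimax lower bound, and the plan is a Fano / Gilbert--Varshamov packing argument; since location mixtures of unit-covariance Gaussians form a sub-family of $l$-mixtures of $d$-dimensional isotropic Gaussians, it suffices to build the hard instances there. First I would fix $l$ base centers $c_1,\dots,c_l\in\mathbb{R}^d$ that are pairwise $R$-separated for a mildly growing $R=\Theta(\sqrt{\log(ld/\epsilon_t)})$, and consider the family $P_\delta=\frac1l\sum_{j=1}^l\mathcal N(c_j+\delta_j,\mathbf I)$ indexed by perturbation profiles $\delta=(\delta_1,\dots,\delta_l)$ with $\|\delta_j\|\le 1$. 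Because the $l$ lumps are $R$-separated, all cross-cluster overlaps are $e^{-\Omega(R^2)}$ and hence negligible at the parameter scales that follow; using joint convexity of $\mathrm{KL}$ (for an upper bound) and a union of per-cluster test sets (for a lower bound on $\mathrm{TV}$), one gets $\mathrm{TV}(P_\delta,P_{\delta'})\asymp\frac1l\sum_j\mathrm{TV}(\mathcal N(c_j+\delta_j,\mathbf I),\mathcal N(c_j+\delta_j',\mathbf I))$ and $\mathrm{KL}(P_\delta\|P_{\delta'})\le\frac1{2l}\sum_j\|\delta_j-\delta_j'\|^2$, up to $e^{-\Omega(R^2)}$ error.

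\textbf{Packing and Fano.} Next I would embed a binary cube: for $\theta\in\{\pm1\}^{l\times d}$ set $\delta_j(\theta)=\tau\theta_j$ with $\tau=\Theta(\epsilon_t/\sqrt d)$ calibrated so that $2\tau\sqrt d\le 1$ (valid for $\epsilon_t$ below a constant), keeping each per-cluster $\mathrm{TV}$ in its linear regime $\asymp 2\tau\sqrt{H_j}$, where $H_j$ is the Hamming distance of $\theta_j,\theta_j'$. Then $\mathrm{TV}(P_\theta,P_{\theta'})\gtrsim\frac{\tau}{l\sqrt d}H(\theta,\theta')$ (using $\sqrt{H_j}\ge H_j/\sqrt d$) and $\mathrm{KL}(P_\theta\|P_{\theta'})\lesssim\tau^2 d$ in the worst case. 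By Gilbert--Varshamov pick a sub-cube $\mathcal P$ of size $2^{\Omega(ld)}$ with pairwise Hamming distance $\ge ld/4$, so that all pairs in $\mathcal P$ satisfy $\mathrm{TV}\gtrsim\tau\sqrt d$, which the choice of $\tau$ makes equal to $2\epsilon_t$. For $n$ i.i.d.\ samples, $I(\Theta;\text{data})\le n\max_{\theta,\theta'\in\mathcal P}\mathrm{KL}(P_\theta\|P_{\theta'})\lesssim n\tau^2 d$, so Fano's inequality forces the probability of recovering the hidden $\theta$ — hence of producing a density within $\epsilon_t$ in $\mathrm{TV}$ of $P_\theta$, since $\mathcal P$ is $2\epsilon_t$-separated — below $\tfrac12$ whenever $n\tau^2 d\lesssim\log|\mathcal P|=\Omega(ld)$, i.e.\ whenever $n\lesssim l/\tau^2=\Omega(ld/\epsilon_t^2)$; contrapositively, succeeding with probability at least $\tfrac12$ needs $n=\Omega(ld/\epsilon_t^2)$.

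\textbf{Main obstacle.} The only real difficulty is getting the $l$ and $d$ factors simultaneously with the correct $\epsilon_t^{-2}$: naively multiplying the ``weights'' packing (which alone gives $\Omega(l/\epsilon_t^2)$, as fixed disjoint components reduce to learning a distribution on $l$ symbols) with the single-Gaussian location packing (which alone gives $\Omega(d/\epsilon_t^2)$) does not combine cleanly, because $\mathrm{TV}$ between two such mixtures is concave in the per-cluster Hamming distances ($\sqrt{H_j}$ rather than $H_j$). The hypercube embedding above with $\tau\asymp\epsilon_t/\sqrt d$ is precisely what absorbs the $\sqrt d$ loss from $\sqrt{H_j}\ge H_j/\sqrt d$ while keeping each per-cluster $\mathrm{TV}$ away from saturation; the remaining bookkeeping is choosing $R$ slightly growing so the $e^{-\Omega(R^2)}$ cross-cluster terms are dominated by the $\Theta(\tau)$ and $\Theta(\tau^2)$ scales. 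This is exactly the construction carried out in \citet{acharya2014nearoptimalsample,ashtiani2020nearoptimal}, which is what we cite.
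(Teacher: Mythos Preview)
Your proof plan is sound and follows the standard Fano/packing route that the cited references \citep{acharya2014nearoptimalsample,ashtiani2020nearoptimal} use. However, note that the paper itself does \emph{not} prove this lemma: it is stated purely as a quoted result with citations, and is then used as a black box inside the proof of \cref{the:sample_com_general_dsm}. So there is nothing in the paper to compare your argument against; you have simply gone further than the authors by sketching the underlying construction. Your calibration $\tau\asymp\epsilon_t/\sqrt d$, the Gilbert--Varshamov packing of size $2^{\Omega(ld)}$, and the Fano step combining $\mathrm{KL}\lesssim\tau^2 d$ with $\log|\mathcal P|=\Omega(ld)$ to yield $n=\Omega(ld/\epsilon_t^2)$ are all correct and match the cited works.
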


Now we restate~\cref{the:sample_com_general_dsm} and show the proof formally.
\sampleComGeneralDSM*
\begin{proof}
We derive our results through the distribution learning (\aka density estimation) approach on top of existing analysis.
Without loss of generality, let $f(\bX)$ and $g(\bX)$ be two 
twice continuously differentiable distributions defined over $\R^d$.
Let us assume $f(\bX)$ to be a GMM whose Gaussian components have isotropic variance, similar to the noisy data distribution $\psigma^k$ in the diffusion model.
We consider the general distribution learning problem where a learning algorithm takes a sequence of i.i.d. samples drawn from the target distribution $f$ and outputs a distribution $g$ as an estimate for $f$.

According to~\cref{lem:fisher_tv_ineq}, the Fisher information distance $\mathcal{J}_{\text{F}} (f, g)$ is lower bounded by the square of TV distance $\mathcal{J}_{\text{TV}}^2 (f, g)$ with a positive multiplicative factor $C$, under some mild conditions.
In order to bound the $\mathcal{J}_{\text{F}} (f, g)$ by $\epsilon_f$, $\mathcal{J}_{\text{TV}} (f, g)$ must be smaller than $\sqrt{\frac{\epsilon_f}{C}}$.
Importantly, the target $f$ is a GMM, whose density estimation problem with TV distance have been well-studied and it admits a sample complexity lower bound illustrated in~\cref{lem:sample_com_tv_gmm}.
Plugging in the desired error bound for $\mathcal{J}_{\text{TV}} (f, g) \leq \epsilon_t = \sqrt{\frac{\epsilon_f}{C}}$, we obtain a sample complexity lower bound $\Omega(ld/\epsilon_f)$, where the constant $C$ is absorbed.
It has the same PAC-learning meaning for $\mathcal{J}_{\text{TV}} (f, g)$ with error bound $\epsilon_t =\sqrt{\frac{\epsilon_f}{C}}$, and for $\mathcal{J}_{\text{F}} (f, g)$ with error bound $\epsilon_f$.

We first identify that TV distance is a weaker metric than the Fisher information distance, the latter of which corresponds to the original score estimation objective.
Then, we utilize the recent advances in sample complexity analysis for learning GMMs with bouned TV error, and thus obtain a sample complexity lower bound for score estimation.
In summary, we use the result of a `weaker' distribution learning task to show how hard the score estimation objective at least is.
\end{proof}

\noindent\textbf{Proof of~\cref{cor:sample_com_inv_graph_dsm}.}
\sampleComInvGraphDSM*
\begin{proof}
    We conduct the proof by applying the results from~\cref{the:sample_com_general_dsm}.
	We know $\psigma^l$ is an GMM with $O(ml)$ components.
    Since our samples drawn from the noisy distribution $\psigma^l$ are noisy adjacency matrix $\btA \in \R^{\nbyn}$, we first vectorize them to be $\R^{n^2}$.
    In this way, we can view $\psigma^l$ as a GMM with $O(ml)$ components of $n^2$-dimensions.
    Recall we inject \iid noise to each entries, so each Gaussian component has isotropic covariance.
    The conditions of~\cref{the:sample_com_general_dsm} (specifically, ~\cref{lem:sample_com_tv_gmm} ) are all satisfied.
    Plugging in the above parameters, we obtain the sample complexity lower bound $\Omega(mln^2/\epsilon_f)$ for score estimation \wrt $\psigma^l$ through distribution learning perspective.
\end{proof}

\clearpage
\section{\MakeUppercase{Additional Experiment Details}}
\label{app:exp_details}
\subsection{Detailed Experiment Setup}
\label{app:exp_setup}
\begin{figure*}
	\centering
	\includegraphics[width=0.9\linewidth]{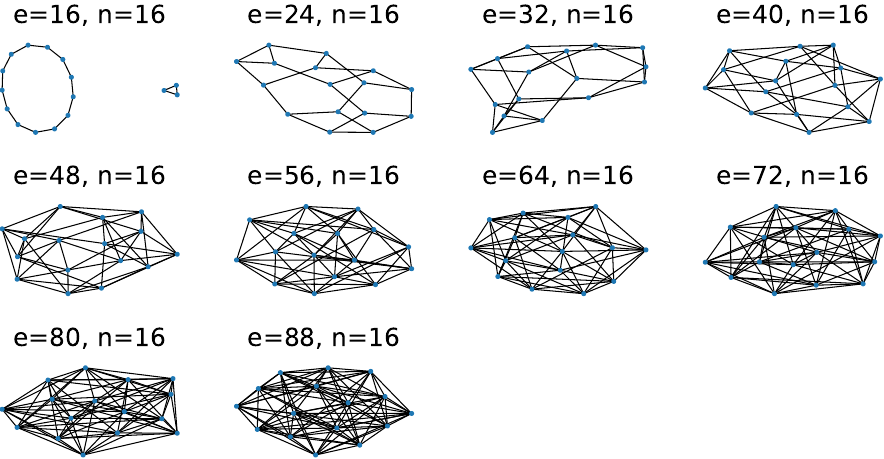}
	\caption{
		Toy dataset for the investigation of effective target distributions.
		The dataset comprises 10 randomly generated regular graphs, each with 16 nodes and degrees ranging from 2 to 11.
	}
	\label{fig:toy_dataset}
\end{figure*}
\noindent\textbf{Toy Dataset.}
For the toy dataset experiment, we conduct a sampling process where each model is allowed to generate 100 graphs. 
To determine the graph recall rate, we perform isomorphism testing utilizing the \texttt{networkx}\citep{hagberg2008exploring} package. The training set's visualization is provided in~\cref{fig:toy_dataset}.

\noindent\textbf{Synthetic and Real-world Datasets.}
We consider the following synthetic and real-world graph datasets:
(1) Ego-small: 200 small ego graphs from Citeseer dataset~\citep{sen2008collective} with $\vert \cV \vert \in [4, 18]$,
(2) Community-small: 100 random graphs generated by Erdős–Rényi model~\citep{erdos59a} consisting of two equal-sized communities whose $\vert \cV \vert \in [12, 20]$,
(3) Grid: 100 random 2D grid graphs with $\vert \cV \vert \in [100, 400]$.
(4) Protein: real-world DD protein dataset~\citep{dobson2003distinguishing} that has 918 graphs with $\vert \cV \vert \in [100, 500]$.
We follow the same setup in~\citet{liao2020efficient, you2018graphrnn} and apply random split to use 80\% of the graphs for training and the rest 20\% for testing.
In evaluation, we generate the same number of graphs as the test set to compute the maximum mean discrepancy (MMD) of statistics like node degrees, clustering coefficients, and orbit counts.
To compute MMD efficiently, we follow~\citep{liao2020efficient} and use the total variation distance kernel.

\noindent\textbf{Molecule Datasets.}
We utilize the QM9~\citep{ramakrishnan2014quantum} and ZINC250k~\citep{irwin2012zinc} as molecule datasets.
To ensure a fair comparison, we use the same pre-processing and training/testing set splitting as in~\citet{jo2022scorebased, shi2020graphaf, luo2021graphdf}.
We generate 10,000 molecule graphs and compare the following key metrics:
(1) validity w/o correction: the proportion of valid molecules without valency correction or edge resampling;
(2) uniqueness: the proportion of unique and valid molecules;
(3) Fréchet ChemNet Distance (FCD)~\citep{preuer2018frechet}:  activation difference using pretrained ChemNet;
(4) neighborhood subgraph pairwise distance kernel (NSPDK) MMD~\citep{costa2010fast}: graph kernel distance considering subgraph structures and node features.
We defer the discussion on novelty score in~\cref{app:novelty_results}.

\noindent\textbf{Data Quantization.} In this paper, we learn a continuous diffusion model for graph data. 
Following DDPM \citep{ho2020denoising}, we map the binary data into the range of [-1, 1] and add noise to the processed data during training. 
During sampling, we start with Gaussian noise. 
After the refinement, we map the results from [-1, 1] to [0, 1]. Since graphs are discrete data, we choose 0.5 as a threshold to quantize the continuous results.
Similar approaches have been adopted in previous works~\citep{niu2020permutation, jo2022scorebased}.

\subsection{Network Architecture Details}
\label{app:network_details}
\begin{table*}[ht]
\centering
\caption{
Architecture details of the proposed SwinGNN and major baselines.
$^*$The same hyper-parameters are employed for both SwinGNN and SwinGNN-L, barring specific exceptions outlined in the table.
The UNet is adopted from~\citet{dhariwal2021diffusion} with their hyperparameters for the ImageNet-64 dataset.
}
\resizebox{\linewidth}{!}{
\begin{tabular}{c|l|cccccc}
\toprule
& Hyperparameter & Ego-small & Community-small & Grid & DD Protein & QM9 & ZINC250k \\ 
\midrule
\multirow{9}{*}{SwinGNN} & Downsampling block layers & [4, 4, 6] & [4, 4, 6] & [1, 1, 3, 1] & [1, 1, 3, 1] & [1, 1, 3, 1] & [1, 1, 3, 1] \\
& Upsampling block layers & [4, 4, 6] & [4, 4, 6] & [1, 1, 3, 1] & [1, 1, 3, 1] & [1, 1, 3, 1] & [1, 1, 3, 1] \\
& Patch size & 3 & 3 & 4 & 4 & 1 & 4 \\
& Window size & 24 & 24 & 6 & 8 & 4 & 5 \\
& Token dimension & 60 & 60 & 60 & 60 & 60 & 60 \\
& Feedforward layer dimension & 240 & 240 & 240 & 240 & 240 & 240 \\
& Number of attention heads & [3, 6, 12, 24] & [3, 6, 12, 24] & [3, 6, 12, 24] & [3, 6, 12, 24] & [3, 6, 12, 24] & [3, 6, 12, 24] \\
& Number of trainable parameters & 15.37M & 15.37M & 15.31M & 15.31M & 15.25M & 15.25M \\
& Number of epochs & 10000 & 10000 & 15000 & 50000 & 5000 & 5000 \\
& EMA & 0.9 & 0.99 & 0.99 & 0.9999 & 0.9999 & 0.9999 \\
\midrule
\multirow{5}{*}{SwinGNN-L$^*$} & Token dimension & 96 & 96 & 96 & 96 & 96 & 96 \\
& Feedforward layer dimension & 384 & 384 & 384 & 384 & 384 & 384 \\
& Number of trainable parameters & 34.51M & 35.91M & 35.91M & 35.91M & 35.78M & 35.78M \\
& Number of epochs & 10000 & 10000 & 15000 & 50000 & 5000 & 5000 \\
& EMA & 0.99 & 0.95 & 0.95 & 0.9999 & 0.9999 & 0.9999 \\
\midrule
\multirow{6}{*}{UNet-ADM$^*$} & Channel multiplier & 64 & 64 & 64 & 64 & - & - \\
& Channels per resolution & [1, 2, 3, 4] & [1, 2, 3, 4] & [1, 2, 3, 4] & [1, 1, 1, 1, 2, 4, 6] & - & - \\
& Residule blocks per resolution & 3 & 3 & 3 & 1 & - & - \\
& Number of trainable parameters & 30.86M & 31.12M  & 29.50M & 32.58M & - & - \\
& Number of epochs & 5000 & 5000 & 15000 & 10000 & - & - \\
& EMA & 0.9999 & 0.99 & 0.999 & 0.9 & - & - \\
\midrule
\multirow{2}{*}{Optimization} & Optimizer & Adam & Adam & Adam & Adam & Adam & Adam \\
& Learning rate & $1.0 \times 10^{-4}$ & $1.0 \times 10^{-4}$ & $1.0 \times 10^{-4}$ & $1.0 \times 10^{-4}$ & $1.0 \times 10^{-4}$ & $1.0 \times 10^{-4}$\\
\bottomrule
\end{tabular}
}
\label{tab:swingnn_hp}
\end{table*}

\noindent\textbf{Our Models and Baselines.}
\cref{tab:swingnn_hp} shows the network architecture details of our models and the UNet baselines on various datasets.
Regarding the PPGN~\citep{maron2020provably}, we utilize the implementation in~\citet{martinkus2022spectre}. 
It takes an $\nbyn$ noisy matrix as input and produces the denoised signal as output. 
To build non-permutation-equivariant version of PPGN, sinusoidal positional encoding~\citep{vaswani2017attention} is applied at each layer.
We use the same diffusion setup for PPGN-based networks and our SwinGNN, as specified in~\cref{tab:edm_hp}.
For a fair comparison, we utilize the publicly available code from the other baselines and run experiments using our dataset splits.

\noindent\textbf{Network Expressivity.}
Both theoretical and empirical evidence have underscored the intrinsic connection between the WL test and function approximation capability for GNNs~\citep{mahdavi2023towards, hamilton2020graph, chen2019equivalence, morris2021weisfeilera, maron2019invariant, xu2018how}.
The permutation equivariant PPGN layer, notable for its certified 3-WL test capacity, is deemed sufficiently expressive for experimental investigation.
Further, it is crucial to note the considerable theoretical expressivity displayed by non-permutation-equivariant GNNs, particularly those with positional encoding~\citep{keriven2023functions, fereydounian2022what}.
We argue that the GNNs employed in our studies theoretically have sufficient function approximation capacities, and therefore, the results of our research are not limited by the expressiveness of the network.

\subsection{Node and Edge Attribute Encoding}
\label{app:node_edge_attr}
Molecules possess various edge types, ranging from no bond to single, double, and triple bonds. 
Also, they encompass diverse node types like C, N, O, F, and others. 
We employ three methods to encode the diverse node and edge attributes: 1) scalar representation, 2) binary bits, and 3) one-hot encoding.

\noindent\textbf{Scalar Encoding.}
We divide the interval [-1, 1] into several equal-sized sub-intervals (except for the intervals near the boundaries), with each sub-interval representing a specific type.
We quantize the node or edge attributes in the samples based on the sub-interval to which it belongs as in~\citet{jo2022scorebased}.

\noindent\textbf{Binary-bit Encoding.}
Following~\citet{chen2022analog}, we encode attribute integers using multi-channel binary bits.
For better training dynamics, we remap the bits from 0/1 to -1/1 representation. 
During sampling, we perform quantization for the continuous channel-wise bit samples and convert them back to integers.

\noindent\textbf{One-hot Encoding.}
We adopt a similar process as the binary-bit encoding, up until the integer-vector conversion.
We use \texttt{argmax} to quantize the samples and convert them to integers.

\noindent\textbf{Network Modifications.}
We concatenate the features of the source and target node of an edge to the original edge feature, creating multi-channel edge features as the augmented input. 
At the final readout layer, we use two MLPs to convert the shared edge features for edge and node denoising.

\subsection{Diffusion Process Hyperparameters}
\label{app:diff_hp}
\begin{table*}[htb]
    \centering
    \caption{Training and sampling hyperparameters in the diffusion process.}
    \begin{tabular}{c c}
    \toprule
    $c_s(\sigma) = \frac{\sigma_d^2}{\sigma_d^2 + \sigma^2}$ & $c_o(\sigma) = \frac{\sigma \sigma_d}{\sqrt{\sigma_d^2 + \sigma^2}}$ \\
    $c_i(\sigma) = \frac{1}{\sqrt{\sigma_d^2 + \sigma^2}}$ & $c_n(\sigma) = \frac{1}{4}\ln(\sigma)$ \\
    \multicolumn{2}{c}{$\sigma_d=0.5, \ln(\sigma) \sim \mathcal{N}(P_{\text{mean}}, P_{\text{std}}^2), P_{\text{mean}}=-P_{\text{std}}=-1.2 $} \\
    \multicolumn{2}{c}{$\sigma_{\text{min}}=0.002, \sigma_{\text{max}}=80, \rho=7$} \\
    \multicolumn{2}{c}{$S_\text{tmin}=0.05, S_{\text{tmax}}=50, S_{\text{noise}}=1.003, S_{\text{churn}}=40, N=256$} \\
    \multicolumn{2}{c}{$t_i = ({\sigma_{\text{max}}}^\frac{1}{\rho} + \frac{i}{N-1}({\sigma_{\text{min}}}^\frac{1}{\rho} - {\sigma_{\text{max}}}^\frac{1}{\rho}))^\rho$} \\
    \multicolumn{2}{c}{$\gamma_i = \boldsymbol{1}_{S_\text{tmin} \leq t_i \leq S_\text{tmax}} \cdot \min(\frac{S_{\text{churn}}}{N}, \sqrt{2}-1)$} \\
    \bottomrule
    \end{tabular}
    \label{tab:edm_hp}
\end{table*}
The hyperparameters of the diffusion model training and sampling steps are summarized in~\cref{tab:edm_hp}. 
For our SwinGNN model, we maintain a consistent setup throughout the paper, unless stated otherwise. 
This setup is used for various experiments, including the ablation studies where we compare against the vanilla DDPM~\citep{ho2020denoising} and the toy dataset experiments.

The pivotal role of refining both the training and sampling phases in diffusion models to bolster performance has been emphasized in prior literature~\citep{song2021denoising, nichol2021improved, karras2022elucidating}. 
Such findings, validated across a broad spectrum of fields beyond image generation~\citep{shan2023diffusion, yang2022diffusion}, inspired our adoption of the most recent diffusion model framework. 
For a detailed discussion on the principles of hyperparameter fine-tuning, readers are encouraged to refer to the previously mentioned studies.

\subsection{Model Memory Efficiency}
\label{app:mem_efficiency}

\begin{table}[ht]
	\centering
 	\caption{
		Analysis of the GPU memory consumption during the training phase on a single NVIDIA RTX 3090 (24 GB) graphics card, where OOM stands for out-of-memory.
		Experiments are performed on the protein dataset that contains 918 graphs, each having a node count ranging from 100 to 500.
	}
        \resizebox{0.75\linewidth}{!}{
	\begin{tabular}{cc|cccccc}
		\toprule
		Method & \#params & BS=1 & BS=2 & BS=4 & BS=8 & BS=16 & BS=32 \\
		\midrule
		GDSS & 0.37M & 3008M & 4790M & 8644M & 15504M & OOM & OOM \\
		\midrule
		DiGress & 18.43M & 16344M & 19422M & 22110M & OOM & OOM & OOM \\
		\midrule
		EDP-GNN & 0.09M & 7624M & 13050M & 23848M & OOM & OOM & OOM \\
		\midrule
		Unet & 32.58M & 6523M & 10557M & 18247M & OOM & OOM & OOM \\
            \midrule
            PPGN & 2.96M & OOM & OOM & OOM & OOM & OOM & OOM \\
            \midrule
            PPGN-PE & 3.26M & OOM & OOM & OOM & OOM & OOM & OOM \\
		\midrule
		SwinGNN & 15.31M & \textbf{2905M} & \textbf{3563M} & \textbf{5127M} & \textbf{8175M} & \textbf{14325M} & OOM \\
		\midrule
		SwinGNN-L & 35.91M & 4057M & 5203M & 7471M & 12113M & 21451M & OOM \\
		\bottomrule
	\end{tabular}
        }
	\label{tab:swingnn_mem}
\end{table}
\noindent\textbf{GPU Memory Usage.}
Our model's efficiency in GPU memory usage during training, thanks to window self-attention and hierarchical graph representations learning, allows for faster training compared to models with similar parameter counts. 
In~\cref{tab:swingnn_mem}, we compare the training memory costs for various models with different batch sizes using the real-world protein dataset.

\subsection{Comparing against the SwinTransformer Baseline}
\label{app:vision_swintf}
\begin{table*}[h]
	\centering
 	\caption{Comparing our SwinGNN against the vanilla visual SwinTransformer~\citep{liu2021Swin}.}
	\resizebox{\linewidth}{!}{
	\begin{tabular}{c|ccc|ccc|ccc|ccc}
		\toprule
		& \multicolumn{3}{c|}{Ego-Small} & \multicolumn{3}{c|}{Community-Small} & \multicolumn{3}{c|}{Grid} & \multicolumn{3}{c}{Protein}\\  
		Methods & Deg. $\downarrow$ & Clus. $\downarrow$ & Orbit. $\downarrow$  & Deg. $\downarrow$ & Clus. $\downarrow$ & Orbit. $\downarrow$   & Deg. $\downarrow$ & Clus. $\downarrow$ & Orbit. $\downarrow$ & Deg. $\downarrow$ & Clus. $\downarrow$ & Orbit. $\downarrow$  \\
		\midrule
		SwinGNN & \textbf{3.61e-4} & \textbf{2.12e-2} & \textbf{3.58e-3} &  2.98e-3 & 5.11e-2 & \textbf{4.33e-3}  &  \textbf{1.91e-7} & \textbf{0.00} & 6.88e-6 & 1.88e-3 & \textbf{1.55e-2} & 2.54e-3 \\
		SwinGNN-L & 5.72e-3 & 3.20e-2 & 5.35e-3
		&  \textbf{1.42e-3} & \textbf{4.52e-2} & {6.30e-3}  & 2.09e-6 & \textbf{0.00} & \textbf{9.70e-7} & \textbf{1.19e-3} & 1.57e-2 & \textbf{8.60e-4}\\
		SwinTF & 8.50e-3 & 4.42e-2 & 8.00e-3 & 2.70e-3 & 7.11e-2 & 1.30e-3 & {2.50e-3} & 8.78e-5 & 1.25e-2 & 4.99e-2 & 1.32e-1 & 1.56e-1 \\
		\bottomrule
	\end{tabular}
	}
	\label{tab:swin_tf_grid_ablation}
\end{table*} 
To further demonstrate the effectiveness of our proposed network in handling adjacency matrices for denoising purposes, we include an additional comparison with SwinTransformer~\citep{liu2021Swin}. SwinTransformer is a general-purpose backbone network commonly used in visual tasks such as semantic segmentation, which also involves dense predictions similar to our denoising task.

In our experiments, we modify the SwinTF + UperNet~\citep{xiao2018unified} method and adapt it to output denoising signals. 
Specifically, we conduct experiments on the various graphs datasets, and the results are presented in \cref{tab:swin_tf_grid_ablation}.
The results clearly demonstrate the superior performance of our proposed SwinGNN model compared to simply adapting the visual SwinTransformer for graph generation. 

\subsection{Effects of Window Size}
\label{app:effects_of_window_size}
\begin{table}[htb]
	 \centering
        \caption{Effects of {window size} on SwinGNN models (grid data). 
	 }
		 \begin{tabular}{cc|ccc}
			 \toprule
			 & & \multicolumn{3}{c}{Grid ($\vert \cV \vert \in [100, 400]$, 100 graphs)} \\
			 Model & Window Size & Deg. $\downarrow$ & Clus. $\downarrow$ & Orbit. $\downarrow$ \\
                \midrule
                \multirow{5}{*}{SwinGNN (ours)}  & 1 & 1.46e-1 & 7.69e-3 & 2.73e-2 \\
                & 2 & 3.06e-4 & 1.75e-4 & 1.77e-4 \\
                & 4 & 6.60e-5 & 2.15e-5 & 3.11e-4 \\
                & 6 (default) & \textbf{1.91e-7} & \textbf{0.00} & \textbf{6.88e-6} \\
                & 12 & 1.92e-5 & \textbf{0.00} & 1.50e-5 \\
			 \bottomrule
			 \end{tabular}
	 \label{tab:grid_window_size}
\end{table} 
\noindent\textbf{Receptive Field Analysis.}
The window size must be large enough regarding the number of downsampling layers and graph sizes to create a sufficient receptive field. Otherwise, the model would perform poorly (e.g., window size of 1 or 2 in the grid dataset). 
Let $n$ be the size of the graph and $M$ be the window size. After $k$ iterations of shift-window, the receptive field of each edge token is enlarged $k$ times. The effective receptive field of window attention grows from $M \times M$ into $kM \times kM$. With each half-sizing downsampling operator, the self-attention receptive field grows 2 times larger in the subsequent window attention layer. 
Putting these together, assume each attention layer has $k$ iterations of window shifting, there are $t$ such layers, each followed by a down-sampling operator, the receptive field is $(2kM)^t$. When $k$, $M$ and $t$ are suitably chosen, it is feasible to ensure the receptive field is larger than the graph size $n$, meaning that the message passing between any two edge tokens can be approximated by our architecture.

\noindent\textbf{Experimental Results.}
\cref{tab:grid_window_size} summarizes the impact of window size \(M\) on empirical performance in the grid benchmark dataset. When \(M\) is considerably small relative to the graph size, the performance tends to be subpar. Conversely, when \(M\) is sufficiently large—covering the entire graph within the receptive field with an appropriate number of layers—it acts more like a hyper-parameter, requiring tuning to optimize performance.

\subsection{Additional Results on Molecule Datasets}
\label{app:novelty_results}
\begin{table}[htb]
\centering
\caption{{QM9} results with {novelty metrics}.}
\label{tab:qm9_novelty}
\begin{tabular}{l|ccccc}
\toprule
& \multicolumn{5}{c}{QM9}\\ 
Methods & Valid w/o cor.$\uparrow$ & {Novelty} & Unique$\uparrow$ & FCD$\downarrow$ & NSPDK$\downarrow$ \\
\midrule
GraphAF & 57.16 & {81.27} & 83.78  & 5.384 & 2.10e-2 \\
GraphDF & 79.33 & \textbf{{86.47}} & 95.73 & 11.283 & 7.50e-2 \\
GDSS & 90.36 & {65.29} & 94.70 & 2.923 & 4.40e-3 \\
DiGress & 95.43 & {27.69} & 93.78 & 0.643 & 7.28e-4 \\
\midrule
SwinGNN (scalar) &  99.68 & {15.14} & 95.92 & 0.169 & 4.02e-4 \\
SwinGNN (bits) &  99.91 & {13.60} & 96.29 & 0.142 & 3.44e-4 \\
SwinGNN (one-hot) &  99.71 & {17.34} & 96.25 & 0.125 & 3.21e-4 \\
\midrule
SwinGNN-L (scalar) &  99.88 & {13.62} & \textbf{96.46} & 0.123 & 2.70e-4 \\
SwinGNN-L (bits) &  \textbf{99.97} & {10.72} & 95.88 & \textbf{0.096} & \textbf{2.01e-4} \\
SwinGNN-L (one-hot) &  99.92 & {11.36} & 96.02 & 0.100 & {2.04e-4} \\
\midrule
SwinGNN (scalar, novelty-tuning) &  97.05 & 41.01 & 95.60 & 0.544 & 1.95e-3 \\
SwinGNN-L (scalar, novelty-tuning) &  97.24 & 42.31 & 96.39 & 0.380 & 1.19e-3 \\
\bottomrule
\end{tabular}
\end{table}

\begin{table}[htb]
\centering
\caption{{ZINC250k} results with {novelty metrics}.}
\label{tab:zinc250k_novelty}
\begin{tabular}{l|ccccc}
\toprule
& \multicolumn{5}{c}{ZINC250k}\\ 
Methods & Valid w/o cor.$\uparrow$ & {Novelty} & Unique$\uparrow$ & FCD$\downarrow$ & NSPDK$\downarrow$ \\
\midrule
GraphAF & 68.47 & \textbf{100.0} & 99.01 & 16.023 & 4.40e-2 \\
GraphDF & 41.84 & \textbf{100.0} & 93.75 &  40.51 & 3.54e-1\\
GDSS & \textbf{97.35} & \textbf{100.0} & 99.76 & 11.398 & {1.80e-2} \\
DiGress & 84.94 & \textbf{100.0} & 99.21 & 4.88 & 8.75e-3 \\
\midrule
SwinGNN (scalar) &  87.74 & {99.38} & \textbf{99.98} & 5.219 & 7.52e-3 \\
SwinGNN (bits) &  83.50 & {99.29} & 99.97  & 4.536  & 5.61e-3 \\
SwinGNN (one-hot) &  81.72 & {99.91}  & \textbf{99.98} & 5.920  & 6.98e-3 \\
\midrule
SwinGNN-L (scalar) &  93.34 & {95.43} & 99.80 & 2.492 & 3.60e-3 \\
SwinGNN-L (bits) &  90.46 & {95.73} & 99.79  & 2.314  & 2.36e-3 \\
SwinGNN-L (one-hot) & 90.68 & {96.39} & 99.73  & \textbf{1.991} & \textbf{1.64e-3} \\
\midrule
SwinGNN (scalar, novelty-tuning) &  88.13 & \textbf{100.0} & 99.94 & 5.43 & 7.75e-3 \\
SwinGNN-L (scalar, novelty-tuning) &  90.78 & 99.0 & 99.95 & 3.68 & 5.37e-3 \\
\bottomrule
\end{tabular}
\end{table}

\noindent\textbf{Discussion on Novelty Metric.}
In the main paper, we do not report novelty on the molecule datasets following~\citet{vignac2022digress, vignac2021top}. 
Novelty metric is measured by proportion of generated samples not seen in the training set.
The QM9 dataset provides a comprehensive collection of small molecules that meet specific predefined criteria. Generating molecules outside this set (\ie, novel graphs) does not necessarily indicate that the network accurately capture the underlying data distribution. Further, in ZINC250k dataset, all models show very high novelty, making novelty not distinguishable for model performance.

Moreover, we argue that training the diffusion models is essentially maximizing the lower bound of likelihood. Diffusion (\ie, score-based) models are essentially maximum likelihood estimators (MLE)~\citep{hyvarinen2005estimation}. Generating samples resembling the training data is actually consistent with the MLE objective, as theoretically the closest model distribution to the training distribution would be the Dirac delta functions of training data. Novelty metric may not be a good indicator of model performance on its own. For example, a poorly trained generative model may have very high novelty and bad performance. 

\noindent\textbf{Results with Novelty Metric.}
Nevertheless, for the sake of completeness, we present additional experimental results including the novelty metrics in the QM9 and ZINC250k molecule datasets, as seen in and~\cref{tab:zinc250k_novelty}. In the main experimental section (\cref{tab:molecule}), the optimization of training hyperparameters is mainly directed towards improving FCD and NSPDK metrics, with the novelty scores presented in the initial rows.
When the novelty scores are considered in the hyperparameter tuning, the results for the scalar-encoding SwinGNN models are displayed in the concluding rows, demonstrating that our models still maintain remarkable performance.

On the QM9 dataset (\cref{tab:qm9_novelty}), our model outperforms the DiGress baseline by exhibiting higher novelty and excelling in FCD, no matter if it is fine-tuned for novelty or not. In comparison, GraphAF, GraphDF, and GDSS models, despite their competitive uniqueness and novelty, demonstrate deficiencies in validity, FCD, or NSPDK scores. Specifically, their high novelty suggests the generation of a substantial number of new, likely undesirable molecules that violate fundamental chemical principles. This is indicative of a significant deviation from the training distribution, as reflected in their low validity and elevated FCD and NSPDK scores.

In the ZINC250k dataset (\cref{tab:zinc250k_novelty}), all models exhibit a high novelty, scoring at least 95\%. However, the baseline models struggle in aspects of validity, FCD, and NSPDK scores, indicating difficulties in effectively capturing the data distribution. Contrarily, our model (SwinGNN-L) stands out by achieving the best FCD and NSPDK scores when not specifically optimized for novelty. Furthermore, with dedicated tuning, our model’s novelty score closely competes with others (\eg, 99.0 vs 100.0), demonstrating a harmonious balance between fostering novelty and maintaining crucial chemical attributes in the generated molecules.






\clearpage
\subsection{Additional Qualitative Results}
\begin{figure}[!h]
	\centering
        \vspace{-0.6cm}
	\includegraphics[width=0.95\linewidth]{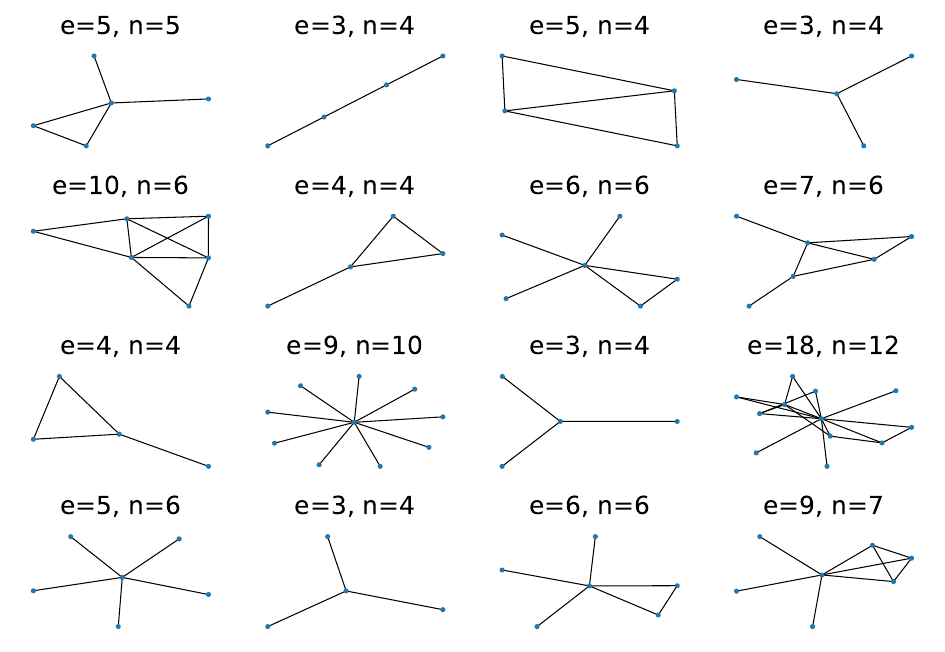}
	\vspace{-0.6cm}
	\caption{Visualization of sample graphs generated by our model on the ego-small dataset.}
	\label{fig:ego_small}
\end{figure}
\begin{figure}[!h]
	\centering
        \vspace{-0.6cm}
	\includegraphics[width=0.95\linewidth]{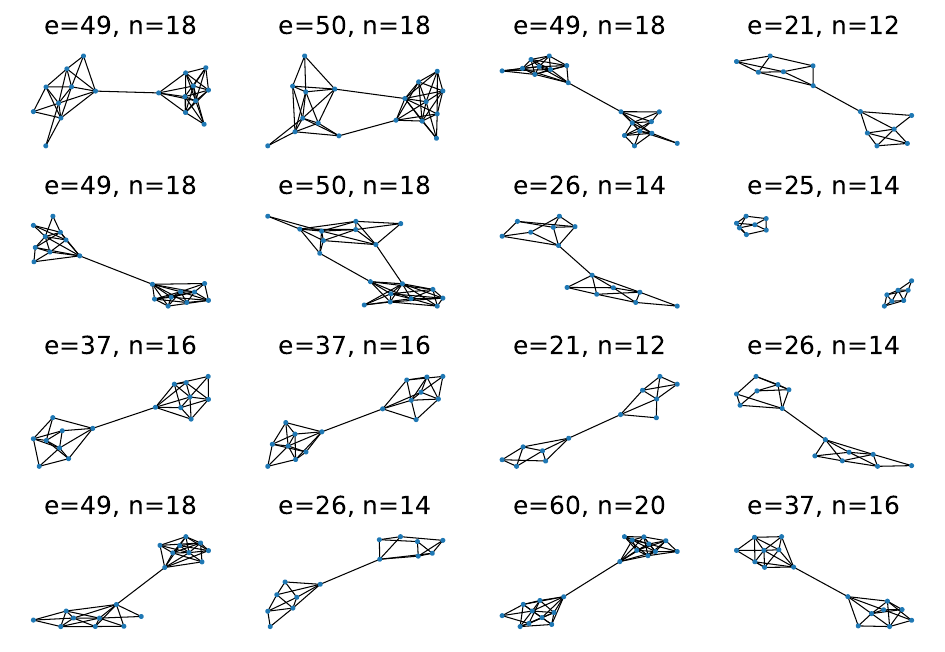}
	\vspace{-0.6cm}
	\caption{Visualization of sample graphs generated by our model on the community-small dataset.}
	\label{fig:com_small}
\end{figure}
\begin{figure*}
	\centering
	\includegraphics[width=0.95\linewidth]{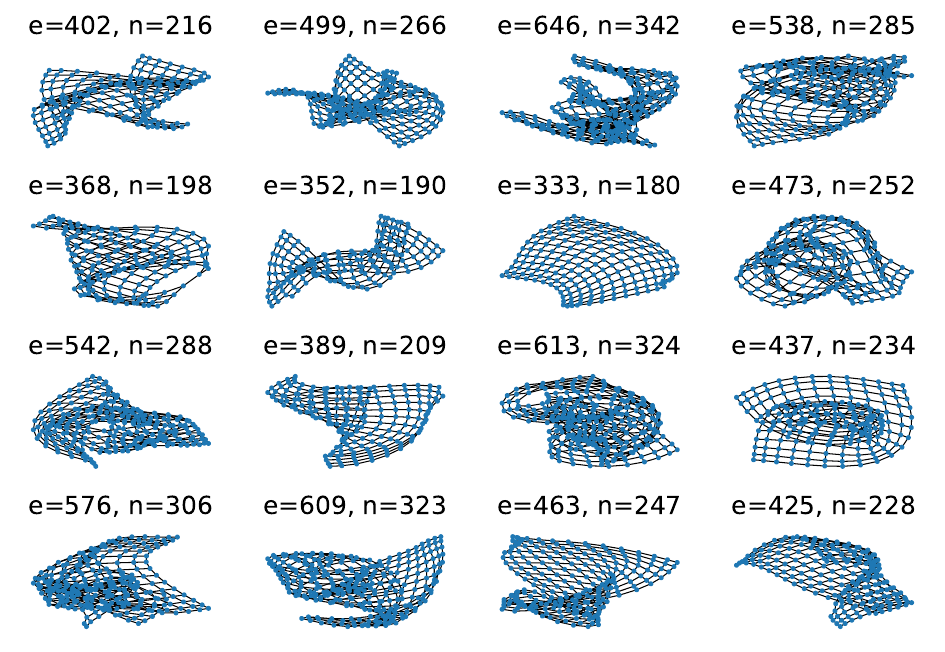}
	\vspace{-0.4cm}
	\caption{Visualization of sample graphs generated by our model on the grid dataset.}
	\label{fig:grid}
\end{figure*}
\begin{figure*}
	\centering
	\includegraphics[width=0.95\linewidth]{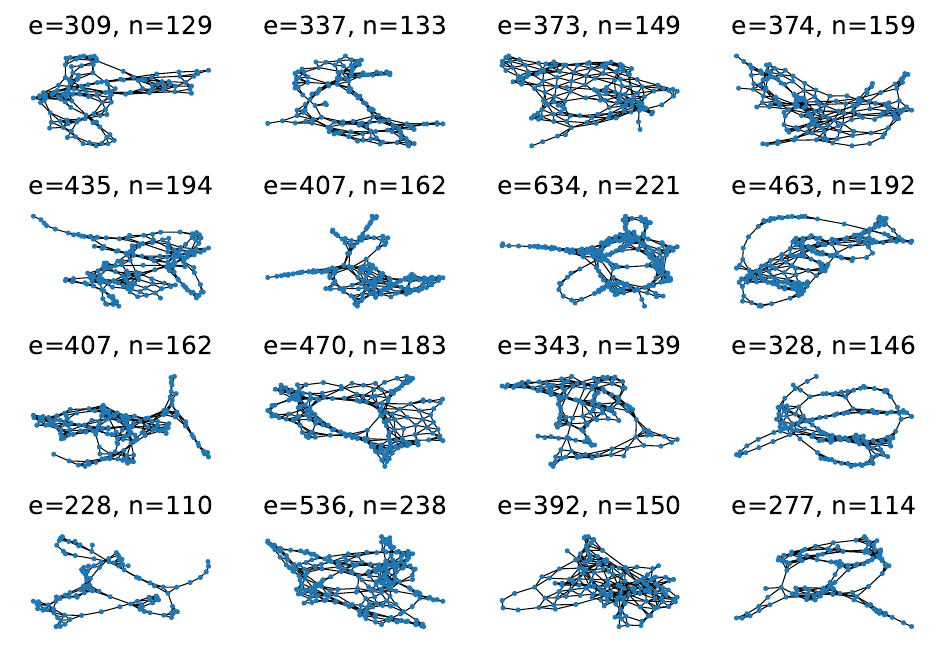}
	\vspace{-0.4cm}
	\caption{Visualization of sample graphs generated by our model on the DD protein dataset.}
	\label{fig:dd}
\end{figure*}
\begin{figure*}
	\centering
	\includegraphics[width=0.95\linewidth]{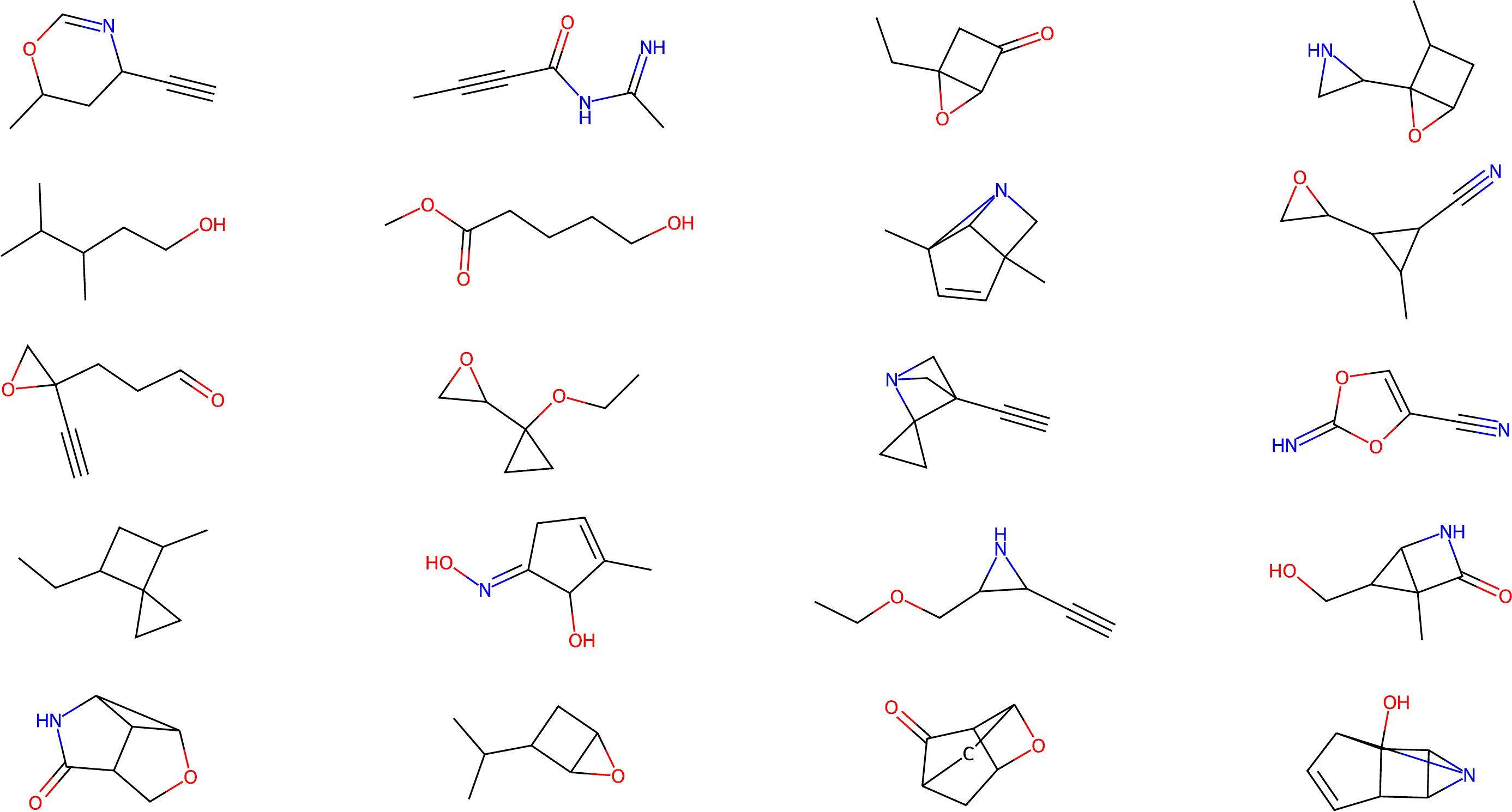}
	\vspace{-0.2cm}
	\caption{Visualization of sample graphs generated by our model on the QM9 molecule dataset.}
	\label{fig:qm9}
\end{figure*}
\begin{figure*}
	\centering
	\includegraphics[width=0.95\linewidth]{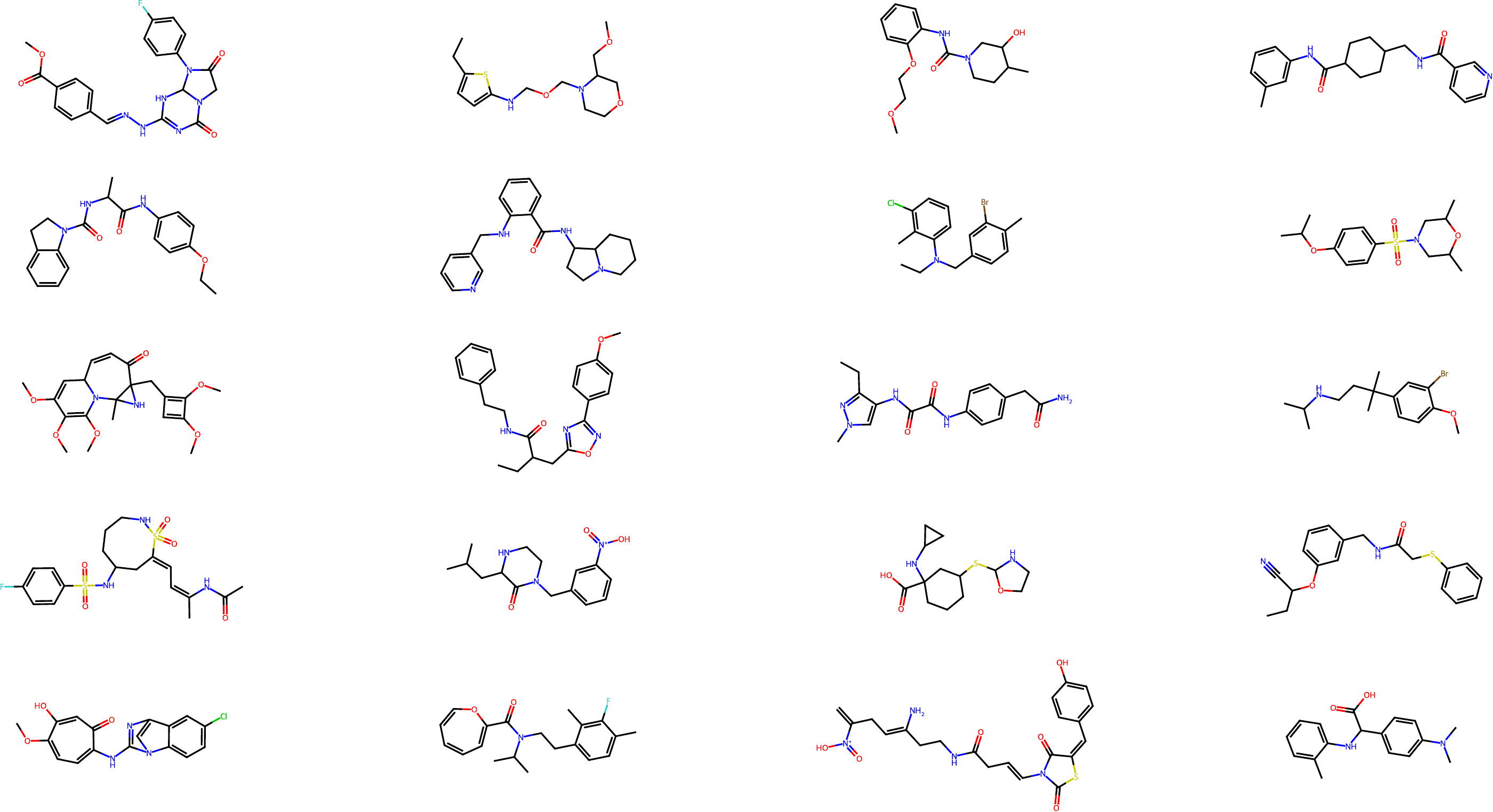}
	\vspace{-0.2cm}
	\caption{Visualization of sample graphs generated by our model on the ZINC250k molecule dataset.}
	\label{fig:zinc250k}
\end{figure*}

\end{document}